%% file: main.tex
\documentclass{article}

\usepackage[final]{2026_conference} 
\usepackage{microtype}
\usepackage{hyperref}
\usepackage{url}
\usepackage{booktabs}
\usepackage{graphicx}
\usepackage{float}
\usepackage{multirow}
\usepackage{makecell}
\usepackage{wrapfig}
\usepackage{enumitem}
\usepackage{lineno} 
\usepackage{subcaption}
\usepackage{pgffor} 

\usepackage{algorithm}
\usepackage{algpseudocode}

\usepackage{amsmath, amssymb, amsthm, bm, mathtools}
\usepackage{amsfonts}
\usepackage{nicefrac}
\input{math_commands.tex} 

\usepackage{xcolor}
\usepackage{pgfplots}
\pgfplotsset{compat=1.18}
\usepgfplotslibrary{groupplots}
\usetikzlibrary{calc} 
\usetikzlibrary{decorations.pathreplacing, patterns}

\definecolor{myblue}{RGB}{0, 128, 255}   
\definecolor{myred}{RGB}{230, 10, 10}    
\definecolor{myorange}{RGB}{255, 128, 0} 
\definecolor{mygreen}{RGB}{0, 204, 0}  
\definecolor{greenbar}{RGB}{30, 100, 60}  

\theoremstyle{plain}
\newtheorem{theorem}{Theorem}
\newtheorem{definition}{Definition}

\title{Multi-Bitwidth Quantization \\ for LLMs Using Additive Codebooks
}

\author{Liza Babaoglu, Shuangyi Chen, Ashish Khisti \\
University of Toronto \\
\texttt{\{liza.babaoglu, shuangyi.chen\}@mail.utoronto.ca, akhisti@ece.utoronto.ca}
}
\begin{document}

\maketitle
\fancyhead{}

\begin{abstract}
As large language models (LLMs) are increasingly deployed across heterogeneous hardware with varying resource constraints, the ability to adaptively manage the trade-off between performance and efficiency without retraining is critical. We propose \textbf{Drop-by-Drop}, a novel \textit{multi-bitwidth} post-training quantization framework that enables inference-time precision control over LLM weights from a single trained model. Our method is theoretically grounded in \textit{information theory} and \textit{successive refinement}. We establish that LLM weights, which commonly follow a Gaussian distribution, can be optimally reconstructed with increasing fidelity as additional bits are incorporated, under a weighted mean squared error distortion motivated by LLM loss functions. To realize this in practice, Drop-by-Drop incorporates Matryoshka-style supervision into the loss function, exploiting the structure of additive codebooks. Drop-by-Drop produces a \textit{single} model where ordered subsets of codebooks yield accurate partial reconstructions at each precision level. This approach significantly reduces storage and memory overhead by allowing a single checkpoint to serve multiple bitwidths, while maintaining competitive perplexity and accuracy across major architectures, such as Qwen, LLaMA, Gemma, and Mistral.
\end{abstract}

\section{Introduction}
The rapid development of large language models (LLMs) has advanced natural language processing. Advancements in both algorithmic design and large-scale computational infrastructure have enabled open-access models such as LLaMA~\citep{LLAMA}, Mistral~\citep{mistral}, Gemma~\citep{gemma2024}, and Qwen~\citep{QWEN} to achieve remarkable performance in language modeling, reasoning, summarization, information retrieval, and conversational interaction~\citep{informationretrieval, softhard}. Despite their impressive capabilities, LLMs must ultimately be deployed for inference in real-world systems. \begin{wrapfigure}[15]{r}{0.40\linewidth}
    \centering
    \vspace{-3mm}
    \includegraphics[width=\linewidth]{imgs/resourceaware.pdf}
    \caption{LLM deployments}
    \label{resource_adaptive}
\end{wrapfigure}
Modern LLMs often contain tens or hundreds of billions of parameters, imposing substantial memory and computational demands. Deployment is particularly challenging across heterogeneous platforms that not only include large cloud servers but also personal devices and edge hardware with strict memory, latency, or power constraints~\citep{MobileDevices, EdgeDevices}. Quantization addresses part of this challenge by compressing model parameters into finite-precision representations, replacing high-precision floating point values. However, most quantization methods produce models at a fixed compression level, which is rarely optimal for all deployment settings. Since different platforms impose different resource constraints, maintaining multiple separately quantized model variants is often impractical due to storage and maintenance overhead. Motivated by this, we explore \textit{multi-bitwidth quantization} for LLM inference, where a single model can operate at multiple precision levels without retraining. As shown in Figure~\ref{resource_adaptive}, the multi-bitwidth model can activate all components for maximum accuracy or only a subset (e.g., $B_1$, $B_2$) on resource constrained devices, enabling a smooth trade-off between accuracy and efficiency. Rather than committing to a fixed precision like the conventional method, the model can adjust its effective resource usage during inference to match the capabilities of the deployment device. Despite its practical significance, adaptive precision control during inference remains largely underexplored in LLMs.

From an information-theoretic perspective, multi-bitwidth quantization relates naturally to the concept of \textit{successive source refinement}. In this setting, a source is encoded such that progressively adding more bits yield progressively more accurate reconstructions. Classical results show that certain sources are successively refinable under appropriate distortion measures~\citep{successiveref}. However, to the best of our knowledge, the relationship between successive refinement theory and multi-bitwidth quantization methods has not been discussed in literature. In this work, we bridge this gap by connecting successive refinement theory with practical quantization techniques for LLMs.

To enable this precision control in LLMs, we propose \textbf{Drop-by-Drop}, a novel multi-bitwidth post-training quantization framework that allows inference-time control over the effective precision of model weights. Our method builds on AQLM~\citep{vahe}, which represents weights as a sum of codewords drawn from multiple learned codebooks using additive quantization. While AQLM achieves strong accuracy at low bitwidths, it relies on a fixed number of active codebooks and therefore produces a static precision profile. Drop-by-Drop addresses this limitation by introducing Matryoshka-style supervision~\citep{matryoshkaloss} during codebook training, explicitly encouraging intermediate subsets of codebooks to produce accurate \textit{partial} reconstructions. This design enables progressive compression by simply dropping codebooks during inference, allowing a single quantized model to operate across multiple resource budgets. Our main contributions are summarized as follows:
\begin{enumerate}[leftmargin=*]
\item In Section~\ref{section:theoretical_Contribution}, we establish a theorem proving that Gaussian sources are successively refinable under a weighted mean squared error distortion measure motivated by the loss functions used in LLMs. This extension of classical successive refinement theory motivates the design of our practical quantization framework introduced next.
\item In Section~\ref{section:application}, we propose \textbf{Drop-by-Drop}, a multi-bitwidth post-training quantization framework that enables inference-time precision control over LLM weights from a single trained model. By progressively dropping the additive codebooks, Drop-by-Drop adapts its precision, achieving a smooth tradeoff between model performance and bitwidth, as motivated by our information-theoretic foundations.
\item In Section~\ref{section:experiments}, we demonstrate that Drop-by-Drop maintains low perplexity and strong task accuracy across the full range of supported bitwidths, exhibiting graceful degradation in resource constrained settings without retraining or recalibration.

\end{enumerate}

\section{Problem Setting}

\subsection{Information-Theoretic Preliminaries}

\paragraph{Rate and Distortion.} Let $\mathsf{W}$ be a discrete random variable taking values in $\mathcal{W}$, with probability mass function  $p(w)$, $w \in \mathcal{W}$. Entropy of $\mathsf{W}$ quantifies the inherent uncertainty of the source and characterizes the minimum average number of bits required for lossless representation. It is denoted by $H(\mathsf{W}) = - \sum_{w} p(w)\log p(w).$ In the lossy setting, the source is encoded at rate $R$ (in bits per source symbol) and reconstructed as $\widehat{\mathsf{W}}$. 
The fidelity of this reconstruction is quantified by a distortion function
$d : \mathcal{W} \times \widehat{\mathcal{W}} \to \mathbb{R}$,
and performance is measured by the expected distortion
$\mathbb{E}[d(\mathsf{W}, \widehat{\mathsf{W}})]$.
Let $D \ge 0$ denote the maximum allowable expected distortion. For the distortion function $d(w,\widehat{w})$ defined on $\mathcal{W}\times\widehat{\mathcal{W}}$, the rate-distortion function $R(D)$ and its inverse $D(R)$ are given by

\vspace{-4mm}

\[
R(D) = \min_{p(\widehat{w}|w)} I(\mathsf{W};\widehat{\mathsf{W}}) 
\quad \text{and} \quad
D(R) = \min_{p(\widehat{w}|w):\, I(\mathsf{W};\widehat{\mathsf{W}})\le R} 
\mathbb{E}\!\left[d(\mathsf{W},\widehat{\mathsf{W}})\right].
\]

\vspace{-3mm}

The minimization in $R(D)$ is taken over all conditional probability distributions 
$p(\widehat{w}\mid w)$ that define a probabilistic mapping from the source symbol $w$ to its reconstruction $\widehat{w}$, subject to the distortion constraint 
$\sum_{w,\widehat{w}} p(w)\,p(\widehat{w}\mid w)\, d(w,\widehat{w}) \le D.$ The above expressions provide the single-letter characterization of the rate–distortion function. The associated operational definition, which considers block encoding and decoding of length-$n$ source sequences and characterizes the achievable rate–distortion pairs in the limit as $n \to \infty$, can be found in standard information theory textbooks such as  \citet[Ch.~10]{textbook}.

\paragraph{Successive Refinement.} A source $(\mathsf{W}, d)$ is successively refinable if, for any finite distortion chain $
D_1 > D_2 > \cdots > D_K > 0,$
there exists a single embedded code whose $k$-prefix achieves the optimal rate–distortion pair $(R(D_k), D_k)$ for all $k=1,\dots,K$, with no rate loss compared to encoding directly at $D_k$. That is, one code can operate optimally at multiple distortion levels. We define successive refinability for general sources in Definition~\ref{def:SR-general}, and for Gaussian sources under mean squared error (MSE) distortion in Theorem~\ref{thm:gaussian-sr}.

\subsection{Loss Formulation for LLM Weight Quantization}
LLMs are composed of stacked layers, each performing a linear transformation followed by a nonlinearity. In this work, we work on quantizing the LLM weights of a pre-trained model. For a given layer, let $\boldsymbol{\mathsf{W}} \in \mathbb{R}^{d_{\mathrm{out}} \times d_{\mathrm{in}}}$ denote its weight matrix, where $d_{\mathrm{in}}$ and $d_{\mathrm{out}}$ denote the input and output dimensions of the layer, respectively. We have a fixed data matrix of
$\boldsymbol{X} \in \mathbb{R}^{d_{\mathrm{in}} \times n},$
collected from representative inputs. For the first layer, $\boldsymbol{X}$ corresponds to token embeddings of the dataset. Whereas, for subsequent layers, $\boldsymbol{X}$ consists of intermediate activations produced by preceding layers. The corresponding layer outputs are
$\boldsymbol{\mathsf{Y}} = \boldsymbol{\mathsf{W}} \boldsymbol{X} \in \mathbb{R}^{d_{\mathrm{out}} \times n}.$ For convenience, the notation description is in Appendix~\ref{app:notation}.

Let $\widehat{\boldsymbol{\mathsf{W}}}$ denote a \emph{quantized reconstruction} of $\boldsymbol{\mathsf{W}}$. To evaluate the impact of quantization on the layer outputs, we define the \emph{weighted mean squared error (WMSE)}:

\vspace{-4mm}
\[
D_{\mathrm{WMSE}}(\boldsymbol{\mathsf{W}}, \widehat{\boldsymbol{\mathsf{W}}})
= \frac{1}{n} \lVert (\boldsymbol{\mathsf{W}} - \widehat{\boldsymbol{\mathsf{W}}}) \boldsymbol{X} \rVert_F^2,
\]

\vspace{-4mm}

When $\boldsymbol{X}=\boldsymbol{I}$, the identity matrix, this reduces to the standard MSE between $\boldsymbol{W}$ and $\widehat{\boldsymbol{W}}$. Equivalently, normalizing by the output dimension yields the following:

\vspace{-4mm}

\begin{equation}
\label{eq:wmse-task}
D_{\mathrm{WMSE}}(\boldsymbol{\mathsf{W}}, \widehat{\boldsymbol{\mathsf{W}}})
=
\frac{1}{d_{\mathrm{out}} n} 
\bigl\| (\boldsymbol{\mathsf{W}} - \widehat{\boldsymbol{\mathsf{W}}}) \boldsymbol{X} \bigr\|_F^2
=
\frac{1}{d_{\mathrm{out}}} 
\operatorname{tr} \Big[
(\boldsymbol{\mathsf{W}} - \widehat{\boldsymbol{\mathsf{W}}}) 
\frac{1}{n} \boldsymbol{X} \boldsymbol{X}^\top 
(\boldsymbol{\mathsf{W}} - \widehat{\boldsymbol{\mathsf{W}}})^\top
\Big].
\end{equation}

\section{Background \& Related Work}

\subsection{Quantization Techniques}
Post-training quantization (PTQ) is widely used for compressing LLMs, as it avoids the cost of retraining large networks. PTQ methods construct a low-precision model from a pre-trained network using a smaller calibration dataset. Although quantization aware training (QAT) often achieves higher accuracy, it is costly for LLMs which have billions of parameters. Established PTQ methods, such as LLM.int8() \citep{llmint8}, GPTQ \citep{gptq}, and AWQ \citep{awq}, show that LLMs can achieve 4-bit or 8-bit weight precision with minimal accuracy loss. These techniques utilize per-channel scaling, second-order information, or activation-aware calibration to mitigate the impact of outliers.

Residual quantization extends PTQ by introducing a secondary codebook that refines the output of a base quantization stage. Recent methods such as QuIP\#~\citep{QUIPsharp} and VPTQ~\citep{VPTQ} adopt this scheme for low-bit LLM compression: QuIP\# employs lattice-structured codebooks with learned rotations and scaling, while VPTQ couples residual vector quantization with fine-tuning. However, despite their two-stage structure, these methods jointly optimize all components end-to-end and do not explicitly supervise the base stage in isolation. As a result, disabling the residual stage at inference time leads to substantial degradation, which we empirically observe for QuIP\# in Appendix~\ref{app:quip_drop}.

Adaptive methods such as early-exiting~\citep{fastbert}, slimmable networks~\citep{slim}, and dynamic token pruning~\citep{dynamicvit} adjust execution paths by selecting layers, channels, or tokens based on resource constraints, but do not adapt the numerical precision of model weights. Adaptive quantization methods aim to fill this gap: AdaBits~\citep{adabits} uses QAT to support multiple bitwidths via nested scalar quantization, while the PTQ methods ResQ~\citep{resq} and LSAQ~\citep{LSAQ} perform mixed-precision and layer-aware quantization, respectively, distributing precision across layers rather than enabling selective codebook dropping. Other adaptive approaches target different applications entirely: implicit neural codebooks for similarity search such as Qinco/Qinco2~\citep{qinco,qinco2} and adaptive-rate variational-autoencoders such as ARTOVeQ~\citep{multirate2023,artoveq} require compute-intensive decoding that makes them impractical for real-time inference. CacheGen~\citep{liu2024cachegen} addresses a different problem as well, focusing on streaming-aware compression of KV cache activations rather than weight quantization. Closest to our setting, recent multi-bitwidth methods obtain their precisions by upscaling the representation: Any-Precision LLM~\citep{anyprecisionllm} uses a non-uniform scalar quantizer whose lower-bit models are nested prefixes of the higher-bit one, while AnyBCQ~\citep{anybcq} shares frozen binary bit-planes fit by a data-unaware loss, applying the data-aware objective only to scales in a later stage. Drop-by-Drop avoids both compromises: it builds on additive vector codebooks and keeps a single data-aware WMSE objective over all parameters jointly (discrete codes, codebooks, and scales), with droppability arising from Matryoshka-style supervision.

\subsection{Additive Quantization and AQLM}
\label{sec:additive-aqlm}

\begin{wrapfigure}[9]{r}{0.4\textwidth} 
  \vspace{-1.3cm}
  \centering
  \includegraphics[width=\linewidth]{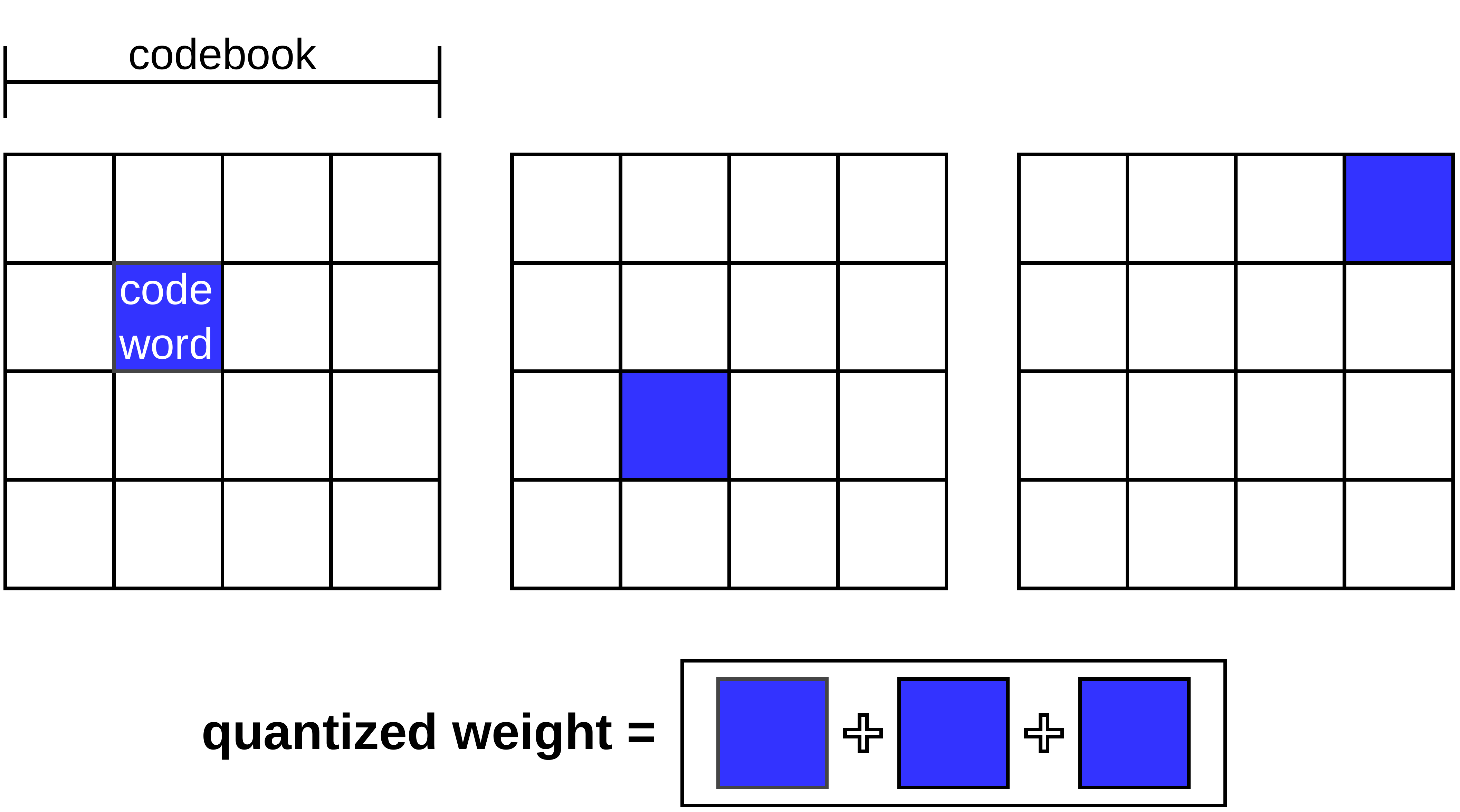}
  \caption{Illustration of additive codebook-based reconstruction}
  \label{fig:codebook_visual.png}
\end{wrapfigure}Additive quantization (AQ) removes the strict ordering imposed by residuals by representing weights as \textit{sums} of multiple independently selected vectors drawn from learned codebooks. 
These components are optimized jointly rather than sequentially, resulting in a more flexible representation with higher expressive capacity at extreme compression ratios.

Originally developed for approximate nearest neighbor search, AQ improves approximation quality over standard product and scalar quantization by reconstructing each weight group as a sum of vectors from multiple codebooks~\citep{babenko}. Building on this idea, AQLM applies multi-codebook additive PTQ to compress LLM weights, representing each weight tensor as a sum of quantized vectors selected from distinct codebooks.

Given a linear layer with weight matrix $\boldsymbol{\mathsf{W}}\in \mathbb{R}^{d_{\text{out}} \times d_{\text{in}}}$ and data inputs $\boldsymbol{X} \in \mathbb{R}^{d_{\text{in}} \times n}$, AQLM aims to find quantized weights $\widehat{\boldsymbol{\mathsf{W}}}$ that minimize the squared error between the original and quantized outputs:
$
\min_{\widehat{\boldsymbol{\mathsf{W}}}} \; \left\| \boldsymbol{\mathsf{W}} \boldsymbol{X} - \widehat{\boldsymbol{\mathsf{W}}} \boldsymbol{X} \right\|_2^2.$ In AQLM, the weight matrix \( \boldsymbol{\mathsf{W}} \) is split into groups of \( g \) consecutive elements. Each group is approximated by the sum of \( M \) vectors, each selected from a learned codebook \( C_1, C_2, \ldots, C_M \). Each codebook contains \( 2^B \) codewords, where \( B \) is the number of bits used to index the codewords. The selection for each group is encoded by \( M \) one-hot vectors \( b_{ijm} \), where \( b_{ijm} \) indicates the codeword chosen from the \( m \)-th codebook for the \( j \)-th group in the \( i \)-th output row. Formally, a group is reconstructed as $\sum_{m=1}^M C_m b_{ijm}$. Thus, the assignments \( b \) specify the combination of codewords whose sum reconstructs each weight group. Figure~\ref{fig:codebook_visual.png} provides a high-level visual illustration of this process, showing how each weight group is reconstructed by summing selected codewords from multiple codebooks. To form the full quantized weight matrix \( \widehat{\boldsymbol{\mathsf{W}}} \), AQLM concatenates the reconstructed groups along the input dimension, where \( \oplus \) denotes vector concatenation across the \( d_{\mathrm{in}}/g \) groups per output row:
$\widehat{\boldsymbol{\mathsf{W}}}_i = \sum_{m=1}^M C_m b_{i,1,m} \oplus \cdots \oplus \sum_{m=1}^M C_m b_{i,d_{\mathrm{in}}/g,m}.$ The optimization process of AQLM is described in Appendix~\ref{app:AQLM-opt-process}. As a result, AQLM uses a fixed number of additive codebooks during inference.

\subsection{Matryoshka Representation Learning}
Matryoshka Representation Learning~\citep{matryoshkaloss} enables a single model to adaptively use representations at varying fidelity by encouraging intermediate prefixes to approximate the final output, supporting coarse-to-fine inference. Related work includes Matryoshka Query Transformer for Large Vision-Language Models~\citep{LLMVisionmat}, which applies the same principle in multi-modal settings, and MatQuant~\citep{Matquant}, which extends Matryoshka supervision to quantized models via QAT across multiple bitwidths. We integrate Matryoshka supervision with AQLM’s additive codebooks via a loss function where partial sums of codebooks can yield meaningful approximations of the weights.
\section{Our Theoretical Contribution}
\label{section:theoretical_Contribution}\begin{wrapfigure}[10]{r}{0.45\textwidth} 
  \centering
  \vspace{-1.5cm}
  \includegraphics[width=\linewidth]{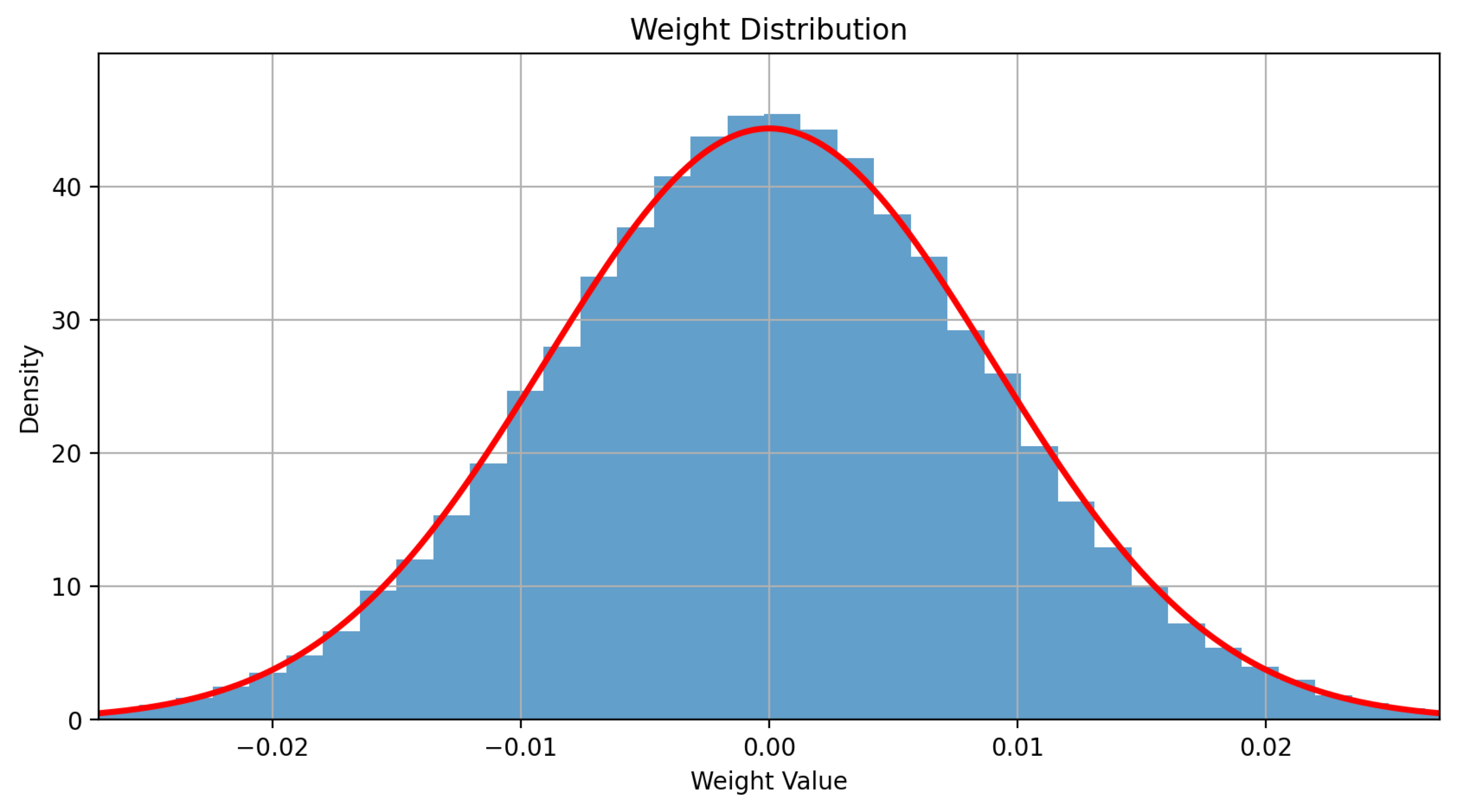}
  \caption{Qwen-7B \textit{layer 0 self\_attn.v\_proj} weights fitted to a Gaussian distribution}
  \label{fig:gaussian-weights}
\end{wrapfigure}We now characterize the \emph{successive refinability of Gaussian sources under weighted MSE}, culminating in Theorem~\ref{thm:WMSE-SR-main}. Empirically, layer weights are well-approximated by Gaussian distributions, as seen in Figure~\ref{fig:gaussian-weights} and Appendix~\ref{app:gaussian plots}, making the Gaussian source model analytically meaningful. At the same time, practical PTQ methods optimize data-aware weighted MSE objectives (WMSE), such as Equation~\ref{eq:wmse-task}, where the weighting arises from the data matrix $\boldsymbol{X}$. This provides the theoretical motivation for designing progressive and adaptive quantization schemes. 

\begin{wrapfigure}[16]{r}{0.55\textwidth}
\centering
\vspace{-0.9cm}
\tikzset{inactive/.style={pattern=north east lines, pattern color=purple!50!black}}

\begin{subfigure}[t]{0.24\textwidth}
\centering
\begin{tikzpicture}[scale=0.52]
\pgfmathsetmacro{\lami}{5.5}
\pgfmathsetmacro{\lamii}{5.0}
\pgfmathsetmacro{\lamiii}{7.2}
\pgfmathsetmacro{\muone}{4.2}
\pgfmathsetmacro{\mutwo}{2.8}
\pgfmathsetmacro{\muthree}{1.4}
\def\bw{1.2}

\fill[blue!10]      (0,0)         rectangle (\bw,\lami);
\fill[blue!30]      (0,\muthree)  rectangle (\bw,\lami);
\fill[blue!10!white](0,0)         rectangle (\bw,\muthree);
\draw[blue!60, thick] (0,0) rectangle (\bw,\lami);
\node[below, font=\scriptsize] at (\bw/2, 0) {$1$};
\node[above, font=\scriptsize, blue!80] at (\bw/2, \lami) {$\lambda_1$};

\draw[red!70!black, dashed, thin] (0,\muone) -- (\bw,\muone);
\draw[red!70!black, dashed, thin] (0,\mutwo) -- (\bw,\mutwo);

\draw[decorate, decoration={brace, amplitude=3pt, mirror}, purple!70!black]
    (\bw+0, \muone) -- (\bw+0, \mutwo)
    node[midway, xshift=-10pt, font=\scriptsize, purple!70!black] {$\Delta R_1^{(2)}$};

\draw[decorate, decoration={brace, amplitude=3pt, mirror}, purple!70!black]
    (\bw+0, \mutwo) -- (\bw+0, \muthree)
    node[midway, xshift=-10pt, font=\scriptsize, purple!70!black] {$\Delta R_1^{(3)}$};

\fill[blue!10]      (1.8,0)         rectangle (1.8+\bw,\lamii);
\fill[blue!30]      (1.8,\muthree)  rectangle (1.8+\bw,\lamii);
\fill[blue!10!white](1.8,0)         rectangle (1.8+\bw,\muthree);
\draw[blue!60, thick] (1.8,0) rectangle (1.8+\bw,\lamii);
\node[below, font=\scriptsize] at (1.8+\bw/2, 0) {$2$};
\node[above, font=\scriptsize, blue!80] at (1.8+\bw/2, \lamii) {$\lambda_2$};

\draw[red!70!black, dashed, thin] (1.8,\muone) -- (1.8+\bw,\muone);
\draw[red!70!black, dashed, thin] (1.8,\mutwo) -- (1.8+\bw,\mutwo);

\draw[decorate, decoration={brace, amplitude=3pt, mirror}, purple!70!black]
    (1.8+\bw+0, \muone) -- (1.8+\bw+0, \mutwo)
    node[midway, xshift=-10pt, font=\scriptsize, purple!70!black] {$\Delta R_2^{(2)}$};

\draw[decorate, decoration={brace, amplitude=3pt, mirror}, purple!70!black]
    (1.8+\bw+0, \mutwo) -- (1.8+\bw+0, \muthree)
    node[midway, xshift=-10pt, font=\scriptsize, purple!70!black] {$\Delta R_2^{(3)}$};

\fill[blue!10]      (3.6,0)         rectangle (3.6+\bw,\lamiii);
\fill[blue!30]      (3.6,\muthree)  rectangle (3.6+\bw,\lamiii);
\fill[blue!10!white](3.6,0)         rectangle (3.6+\bw,\muthree);
\draw[blue!60, thick] (3.6,0) rectangle (3.6+\bw,\lamiii);
\node[below, font=\scriptsize] at (3.6+\bw/2, 0) {$3$};
\node[above, font=\scriptsize, blue!80] at (3.6+\bw/2, \lamiii) {$\lambda_3$};

\draw[red!70!black, dashed, thin] (3.6,\muone) -- (3.6+\bw,\muone);
\draw[red!70!black, dashed, thin] (3.6,\mutwo) -- (3.6+\bw,\mutwo);

\draw[decorate, decoration={brace, amplitude=3pt, mirror}, purple!70!black]
    (3.6+\bw+0, \muone) -- (3.6+\bw+0, \mutwo)
    node[midway, xshift=-10pt, font=\scriptsize, purple!70!black] {$\Delta R_3^{(2)}$};

\draw[decorate, decoration={brace, amplitude=3pt, mirror}, purple!70!black]
    (3.6+\bw+0, \mutwo) -- (3.6+\bw+0, \muthree)
    node[midway, xshift=-10pt, font=\scriptsize, purple!70!black] {$\Delta R_3^{(3)}$};

\draw[red!70!black, thick, dashed] (-0.5,\muone)   -- (5.1,\muone);
\draw[red!70!black, thick, dashed] (-0.5,\mutwo)   -- (5.1,\mutwo);
\draw[red!70!black, thick, dashed] (-0.5,\muthree) -- (5.1,\muthree);

\node[left, font=\scriptsize, red!70!black] at (-0.5,\muone)   {$\mu_1$};
\node[left, font=\scriptsize, red!70!black] at (-0.5,\mutwo)   {$\mu_2$};
\node[left, font=\scriptsize, red!70!black] at (-0.5,\muthree) {$\mu_3$};

\draw[->, thick] (-0.5,0) -- (-0.5,8.2) node[above, font=\scriptsize] {$\lambda_i$};
\draw[->, thick] (-0.5,0) -- (5.3,0)    node[right, font=\scriptsize] {$i$};

\node[below, font=\scriptsize] at (2.7,-0.8) {Successive Refinement};

\end{tikzpicture}
\end{subfigure}%
\begin{subfigure}[t]{0.24\textwidth}
\centering
\begin{tikzpicture}[scale=0.52]
\pgfmathsetmacro{\lami}{5.5}
\pgfmathsetmacro{\lamii}{5.0}
\pgfmathsetmacro{\lamiii}{7.2}
\pgfmathsetmacro{\muthree}{1.4}
\def\bw{1.2}

\fill[teal!10]      (0,0)        rectangle (\bw,\lami);
\fill[teal!35]      (0,\muthree) rectangle (\bw,\lami);
\fill[teal!10!white](0,0)        rectangle (\bw,\muthree);
\draw[teal!60!black, thick] (0,0) rectangle (\bw,\lami);
\node[below, font=\scriptsize] at (\bw/2, 0) {$1$};
\node[above, font=\scriptsize, teal!60!black] at (\bw/2, \lami) {$\lambda_1$};

\fill[teal!10]      (1.8,0)        rectangle (1.8+\bw,\lamii);
\fill[teal!35]      (1.8,\muthree) rectangle (1.8+\bw,\lamii);
\fill[teal!10!white](1.8,0)        rectangle (1.8+\bw,\muthree);
\draw[teal!60!black, thick] (1.8,0) rectangle (1.8+\bw,\lamii);
\node[below, font=\scriptsize] at (1.8+\bw/2, 0) {$2$};
\node[above, font=\scriptsize, teal!60!black] at (1.8+\bw/2, \lamii) {$\lambda_2$};

\fill[teal!10]      (3.6,0)        rectangle (3.6+\bw,\lamiii);
\fill[teal!35]      (3.6,\muthree) rectangle (3.6+\bw,\lamiii);
\fill[teal!10!white](3.6,0)        rectangle (3.6+\bw,\muthree);
\draw[teal!60!black, thick] (3.6,0) rectangle (3.6+\bw,\lamiii);
\node[below, font=\scriptsize] at (3.6+\bw/2, 0) {$3$};
\node[above, font=\scriptsize, teal!60!black] at (3.6+\bw/2, \lamiii) {$\lambda_3$};

\draw[red!70!black, thick, dashed] (-0.5,\muthree) -- (5.1,\muthree);
\node[left, font=\scriptsize, red!70!black] at (-0.5,\muthree) {$\mu_3$};

\draw[->, thick] (-0.5,0) -- (-0.5,8.2) node[above, font=\scriptsize] {$\lambda_i$};
\draw[->, thick] (-0.5,0) -- (5.3,0)    node[right, font=\scriptsize] {$i$};

\node[below, font=\scriptsize] at (2.7,-0.8) {Single-shot};
\end{tikzpicture}
\end{subfigure}

\begin{tikzpicture}[scale=0.55]
    \fill[blue!30]  (5,0) rectangle (5.5,0.35);
    \draw[blue!60]  (5,0) rectangle (5.5,0.35);

    \fill[teal!35]  (5.6,0) rectangle (6.1,0.35);
    \draw[teal!60!black] (5.6,0) rectangle (6.1,0.35);

    \node[font=\scriptsize] at (6.75,0.175) {Rate};

    \fill[blue!10]  (8.0,0) rectangle (8.5,0.35);
    \draw[blue!60]  (8.0,0) rectangle (8.5,0.35);

    \fill[teal!10!white] (8.6,0) rectangle (9.1,0.35);
    \draw[teal!60!black]  (8.6,0) rectangle (9.1,0.35);

    \node[font=\scriptsize] at (10.35,0.175) {Distortion};
\end{tikzpicture}
\caption{Rate-distortion equivalence}
\label{fig:sr_equals_single_shot}
\end{wrapfigure}

\begin{theorem}
\label{thm:WMSE-SR-main}
Let $\boldsymbol{\mathsf{W}}$ be a zero-mean Gaussian random matrix with i.i.d. (independent and identically distributed) entries, and consider the weighted mean squared error distortion with a fixed matrix $\boldsymbol{X}$, $d_{\mathrm{WMSE}}(\boldsymbol{\mathsf{W}}, \widehat{\boldsymbol{\mathsf{W}}})
=
\mathbb{E}\!\left[
\|(\boldsymbol{\mathsf{W}} - \widehat{\boldsymbol{\mathsf{W}}})\boldsymbol{X}\|_F^2
\right]$. Then, the source $(\boldsymbol{\mathsf{W}}, d_{\mathrm{WMSE}})$ is successively refinable.
\end{theorem}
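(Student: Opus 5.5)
The plan is to diagonalize the weighting and reduce the problem to a collection of independent scalar Gaussian sources under (unweighted) MSE. Write the empirical second-moment matrix as $\boldsymbol{X}\boldsymbol{X}^\top = \boldsymbol{U}\boldsymbol{\Lambda}\boldsymbol{U}^\top$ with $\boldsymbol{U}$ orthogonal and $\boldsymbol{\Lambda}=\mathrm{diag}(\lambda_1,\dots,\lambda_{d_{\mathrm{in}}})$, $\lambda_i\ge 0$. Then
\[
\|(\boldsymbol{\mathsf{W}}-\widehat{\boldsymbol{\mathsf{W}}})\boldsymbol{X}\|_F^2=\operatorname{tr}\bigl[(\boldsymbol{\mathsf{V}}-\widehat{\boldsymbol{\mathsf{V}}})\boldsymbol{\Lambda}(\boldsymbol{\mathsf{V}}-\widehat{\boldsymbol{\mathsf{V}}})^\top\bigr]=\sum_{r,i}\lambda_i(\mathsf{V}_{ri}-\widehat{\mathsf{V}}_{ri})^2,
\]
where $\boldsymbol{\mathsf{V}}=\boldsymbol{\mathsf{W}}\boldsymbol{U}$ and $\widehat{\boldsymbol{\mathsf{V}}}=\widehat{\boldsymbol{\mathsf{W}}}\boldsymbol{U}$. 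The map $\boldsymbol{\mathsf{W}}\mapsto\boldsymbol{\mathsf{W}}\boldsymbol{U}$ is invertible and deterministic, so rate–distortion functions and successive refinability are preserved. Because the entries of $\boldsymbol{\mathsf{W}}$ are i.i.d.\ $\mathcal{N}(0,\sigma^2)$ and $\boldsymbol{U}$ is orthogonal, the entries of $\boldsymbol{\mathsf{V}}$ are again i.i.d.\ $\mathcal{N}(0,\sigma^2)$. Rescaling coordinate $i$ by $\sqrt{\lambda_i}$ therefore turns the problem into $d_{\mathrm{out}}d_{\mathrm{in}}$ independent Gaussian components with variances $\lambda_i\sigma^2$ under plain sum-MSE. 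Coordinates with $\lambda_i=0$ contribute no distortion and need zero rate, so they can be dropped.

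The second step is to compute $R(D)$ for this parallel Gaussian source by reverse water-filling. Let $s_i=\lambda_i\sigma^2$. Then $R(D)=\sum_{r,i}\tfrac12\log^+(s_i/\mu)$ with $\sum_{r,i}\min(\mu,s_i)=D$, and the optimal test channel gives component distortions $D_i=\min(\mu,s_i)$. For a chain $D_1>\dots>D_K$ the water levels satisfy $\mu_1>\mu_2>\dots>\mu_K$, so each component distortion $D_i^{(k)}=\min(\mu_k,s_i)$ is nonincreasing in $k$.

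The third step is to build the refinement. The natural tool is the Markov characterization of successive refinability (Definition~\ref{def:SR-general} together with the Equitz–Cover criterion): the source is refinable if there are optimal test channels with $\boldsymbol{\mathsf{V}}-\widehat{\boldsymbol{\mathsf{V}}}^{(K)}-\cdots-\widehat{\boldsymbol{\mathsf{V}}}^{(1)}$ forming a Markov chain. I would construct this one component at a time, using the scalar Gaussian–MSE result of Theorem~\ref{thm:gaussian-sr}. For each component, the backward channels $\mathsf{V}=\widehat{\mathsf{V}}^{(K)}+\mathsf{Z}_K$, $\widehat{\mathsf{V}}^{(k+1)}=\widehat{\mathsf{V}}^{(k)}+\mathsf{Z}_k$ use independent Gaussian increments with variances $D_i^{(k)}-D_i^{(k+1)}\ge 0$. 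If a component is inactive at level $k$ (that is, $s_i\le\mu_k$), set $\widehat{\mathsf{V}}^{(k)}=0$. Taking independent components, the joint chain is Markov, every marginal achieves the water-filling optimum at its level, and so each stage attains $R(D_k)$. This is the picture in Figure~\ref{fig:sr_equals_single_shot}: the incremental rate on component $i$ is $\tfrac12\log(\mu_k/\mu_{k+1})$ once the component is active.

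The main obstacle is this third step, specifically checking that the per-component chains remain valid when components switch on at different levels. A component can have zero allocated rate at level $k$ and positive rate at level $k+1$, and the degenerate increments must still be consistent with the Markov structure. The remedy is to note that $D_i^{(k)}=\min(\mu_k,s_i)$ is monotone, so all increment variances are nonnegative, and the degenerate case $\widehat{\mathsf{V}}^{(k)}=0$ is trivially Markov. I will also confirm that the sum-rate of the embedded code equals $R(D_k)$ at every prefix, rather than only bounding it. That follows from additivity of mutual information across independent components.
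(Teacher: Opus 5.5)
Your argument is correct. Its second half matches the paper's proof: reverse water-filling with strictly decreasing water levels $\mu_1>\cdots>\mu_K$, independent Gaussian increments of variance $D_i^{(k)}-D_i^{(k+1)}$ per component, a Markov chain by independence, and additivity of mutual information across components. Your reduction to parallel scalar Gaussians, however, takes a different route.

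The paper passes to output space $\boldsymbol{\mathsf{Y}}=\boldsymbol{\mathsf{W}}\boldsymbol{X}$ and diagonalizes $\boldsymbol{\Sigma}_{\mathsf{y}}=(\sigma_w^2\boldsymbol{X}^\top\boldsymbol{X})\otimes\boldsymbol{I}_{d_{\mathrm{out}}}$. It then has to pull each $\widehat{\boldsymbol{\mathsf{Y}}}^{(k)}$ back to a weight reconstruction by a case analysis on the shape of $\boldsymbol{X}$: inverse, left inverse, or the SVD surrogate $\widetilde{\boldsymbol{X}}=\boldsymbol{U}_d\boldsymbol{S}_d$.

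You stay in weight space, rotating by the eigenvectors of $\boldsymbol{X}\boldsymbol{X}^\top$ and rescaling. Since $\boldsymbol{X}\boldsymbol{X}^\top$ and $\boldsymbol{X}^\top\boldsymbol{X}$ have the same nonzero eigenvalues, both routes give the same component variances (each with multiplicity $d_{\mathrm{out}}$), the same water-filling solution, and the same $R(D)$.

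What your route buys:
\begin{itemize}
\item Your change of variables is a bijection on the retained coordinates, so carrying $R(D)$ and the Markov chain back to $\boldsymbol{\mathsf{W}}$ is immediate.
\item You never need the paper's assertion that $\boldsymbol{\mathsf{w}}\mapsto\boldsymbol{\mathsf{y}}$ is invertible. That assertion is false when $d_{\mathrm{in}}>n$, where the map has a nontrivial kernel.
\item Rank-deficient $\boldsymbol{X}$, which none of the paper's three cases covers, is handled simply by discarding the $\lambda_i=0$ directions.
\end{itemize}
In fact, the paper's wide-case surrogate $\boldsymbol{\mathsf{W}}\boldsymbol{U}_d\boldsymbol{S}_d$ is exactly your $\boldsymbol{\mathsf{V}}\boldsymbol{\Lambda}^{1/2}$ with $\boldsymbol{U}=\boldsymbol{U}_d$. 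Your proof amounts to using that device for every $\boldsymbol{X}$. What the paper's route buys in exchange is interpretability: its distortion is literally MSE on the layer outputs that PTQ tries to preserve.

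One small slip. At the stage where a component first switches on ($\mu_{k+1}<s_i\le\mu_k$), its incremental rate is $\tfrac12\log(s_i/\mu_{k+1})$, not $\tfrac12\log(\mu_k/\mu_{k+1})$. In general it is $\tfrac12\log\bigl(D_i^{(k)}/D_i^{(k+1)}\bigr)$. This does not affect your sum-rate conclusion, which rests on marginal optimality at each level plus additivity of mutual information rather than on that per-stage formula.
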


The central challenge is that the WMSE distortion couples all entries of
$\boldsymbol{\mathsf{W}}$ through the fixed data matrix $\boldsymbol{X}$,
making it non-obvious that successive refinement is achievable without a
rate penalty. The proof resolves this by reducing to a standard Gaussian MSE problem, then revealing the nested structure of reverse water-filling. The reverse water-filling method is an optimization technique used in information theory and communication systems which allocates more bits (rate) to \textit{worse} channels to minimize total distortion under a rate constraint. Full proofs and visuals are in Appendix~\ref{app:detailed_theorems}.
\begin{proof}[Proof Sketch.]
Introduce the transformed source $\boldsymbol{\mathsf{Y}} =
\boldsymbol{\mathsf{W}}\boldsymbol{X}$. The rate-distortion function for $\boldsymbol{\mathsf{W}}$ under
$d_{\mathrm{WMSE}}$ equals the standard Gaussian MSE rate-distortion
function for $\boldsymbol{\mathsf{Y}}$, with a one-to-one correspondence
between their optimal codebooks at every distortion level $D$. This follows
from the distortion equivalence: the optimal $\widehat{\boldsymbol{\mathsf{W}}}$
is recovered by inverting $\boldsymbol{\mathsf{Y}} =
\boldsymbol{\mathsf{W}}\boldsymbol{X}$, and satisfies
    $d_{\mathrm{WMSE}}(\boldsymbol{\mathsf{W}}, \widehat{\boldsymbol{\mathsf{W}}})
    \;=\;
    \mathbb{E}\!\left[
    \bigl\|(\boldsymbol{\mathsf{W}} - \widehat{\boldsymbol{\mathsf{W}}})
    \boldsymbol{X}\bigr\|_F^2
    \right]
    \;=\;
    \mathbb{E}\!\left[
    \bigl\|\boldsymbol{\mathsf{Y}} - \widehat{\boldsymbol{\mathsf{Y}}}
    \bigr\|_F^2
    \right],$ by Theorem~\ref{thm:Sakrison}.
    
It therefore suffices to solve the standard MSE rate-distortion problem
for $\boldsymbol{\mathsf{Y}}$. To do so, we vectorize
$\boldsymbol{\mathsf{y}} \coloneqq \operatorname{vec}(\boldsymbol{\mathsf{Y}})$
and compute that $\boldsymbol{\mathsf{y}}$ is a
zero-mean coloured Gaussian with covariance
    $\boldsymbol{\Sigma}_{\mathsf{y}}
    \;=\;
    \bigl(\sigma_w^2\,\boldsymbol{X}^\top\boldsymbol{X}\bigr)
    \otimes \boldsymbol{I}_{d_{\mathrm{out}}},$
where $\otimes$ denotes the Kronecker product. Since $\boldsymbol{\mathsf{y}}$
is correlated, reverse water-filling cannot be applied directly.
Diagonalizing via eigen-decomposition $\boldsymbol{\Sigma}_{\mathsf{y}} =
\boldsymbol{U}\,\mathrm{diag}(\lambda_1,\lambda_2,\ldots)\,\boldsymbol{U}^\top$
yields independent scalar Gaussian components $\mathsf{y}_i \sim
\mathcal{N}(0,\lambda_i)$, to which reverse water-filling applies. The
optimal rate-distortion tradeoff for $\boldsymbol{\mathsf{y}}$ under
MSE is therefore achieved by assigning per-component
distortion $D_i = \min(\mu, \lambda_i)$, where the "water level" $\mu$ is
chosen so that $\sum_i D_i = D$ (Theorem~\ref{thm:Sakrison}). The water level $\mu$ partitions each component into a distortion portion ($\lambda_i < \mu$) and a rate portion ($\lambda_i > \mu$).

To establish successive refinability, we use superscripts $k$ to index
refinement stages and subscripts $i$ to index scalar components. For nested
overall distortion levels $D^{(1)} > \cdots > D^{(K)}$, reverse water-filling
assigns per-component distortions $D_i^{(k)} = \min(\mu_k, \lambda_i)$, corresponding to a
strictly decreasing sequence of water levels $\mu_1 > \cdots > \mu_K$. For
each component $\mathsf{y}_i$, define mutually independent Gaussian increments
\[\mathsf{z}_i^{(k)} \;\sim\; \mathcal{N}\!\left(0,\,
    D_i^{(k-1)} - D_i^{(k)}\right), \qquad  D_i^{(0)} \coloneqq \lambda_i,\]  and construct successive reconstructions via
$\widehat{\mathsf{y}}_i^{(0)} = 0,$ $\widehat{\mathsf{y}}_i^{(k)} = \widehat{\mathsf{y}}_i^{(k-1)} +
    \mathsf{z}_i^{(k)}.$
Each increment $\mathsf{z}_i^{(k)}$ carries exactly the new information
needed to improve the reconstruction of $\mathsf{y}_i$ from distortion
$D_i^{(k-1)}$ to the finer distortion $D_i^{(k)}$; it is independent of
all previous increments and of the current reconstruction
$\widehat{\mathsf{y}}_i^{(k-1)}$. The incremental rate $\Delta R_i^{(k)}$
is therefore the number of bits required to describe $\mathsf{z}_i^{(k)}$,
i.e.\ the mutual information between $\mathsf{y}_i$ and the refined
reconstruction $\widehat{\mathsf{y}}_i^{(k)}$ given the previous one:
    $\Delta R_i^{(k)}
    \;=\; I\!\left(\mathsf{y}_i;\,\widehat{\mathsf{y}}_i^{(k)}
    \,\middle|\, \widehat{\mathsf{y}}_i^{(k-1)}\right)
    \;=\; \frac{1}{2}\log\frac{D_i^{(k-1)}}{D_i^{(k)}}.$
By construction, $\mathbb{E}[(\mathsf{y}_i -
\widehat{\mathsf{y}}_i^{(k)})^2] = D_i^{(k)}$ at every stage, and since
each increment is independent of all previous ones, the Markov chain
$\widehat{\mathsf{y}}_i^{(1)} \to \cdots \to \widehat{\mathsf{y}}_i^{(K)}
\to \mathsf{y}_i$ holds (Theorem~\ref{thm:WMSE-SR}).
The below sum then confirms there is no rate penalty:
\[
    \sum_{k=1}^{K} \Delta R_i^{(k)}
    \;=\; \frac{1}{2}\log\frac{\lambda_i}{D_i^{(1)}}
    + \frac{1}{2}\log\frac{D_i^{(1)}}{D_i^{(2)}}
    + \cdots
    + \frac{1}{2}\log\frac{D_i^{(K-1)}}{D_i^{(K)}}
    \;=\; \frac{1}{2}\log\frac{\lambda_i}{D_i^{(K)}},
\]
which equals exactly the single-shot rate at $D_i^{(K)}$. As illustrated
in Figure~\ref{fig:sr_equals_single_shot}, the $\Delta R^{(k)}$ brackets
partition the total rate between refinement stages identically
to single-shot coding at $D_K$, confirming that refinement incurs no rate
overhead. Finally, at each stage $k$, the reconstruction
$\widehat{\boldsymbol{\mathsf{W}}}^{(k)}$ is recovered from
$\widehat{\boldsymbol{\mathsf{Y}}}^{(k)}$
by inverting $\boldsymbol{\mathsf{Y}} = \boldsymbol{\mathsf{W}}\boldsymbol{X}$ (depending on the
shape of $\boldsymbol{X}$, see Theorem~\ref{thm:Sakrison}). Since this inversion map is identical at every
stage $k$, the Markov chain on $\boldsymbol{\mathsf{y}}$ carries over to $\boldsymbol{\mathsf{W}}$,
$\widehat{\boldsymbol{\mathsf{W}}}^{(1)} \;\to\; \cdots \;\to\;
    \widehat{\boldsymbol{\mathsf{W}}}^{(K)} \;\to\; \boldsymbol{\mathsf{W}},$ and the distortion equivalence gives
$d_{\mathrm{WMSE}}(\boldsymbol{\mathsf{W}},
\widehat{\boldsymbol{\mathsf{W}}}^{(k)}) = D_k$ at each stage,
establishing successive refinability of
$(\boldsymbol{\mathsf{W}}, d_{\mathrm{WMSE}})$.
\end{proof}

\vspace{-2mm}

\section{Our Application: Drop-by-Drop (DbyD)}
\label{section:application}
This theory of successive refinability suggests a practical objective: a single compressed model should support multiple rate–distortion operating points without retraining. Motivated by our information-theoretic procedure, we now focus on a practical quantization scheme that can achieve such multi-rate operation under the same distortion measure $(\boldsymbol{\mathsf{W}}, d_{\mathrm{WMSE}})$ used during quantization. AQLM naturally instantiates this structure: each additive codebook can correspond to a refinement layer. \begin{wrapfigure}[9]{r}{0.32\linewidth}
    \vspace{-0.33cm}
    \centering
    \includegraphics[width=\linewidth]{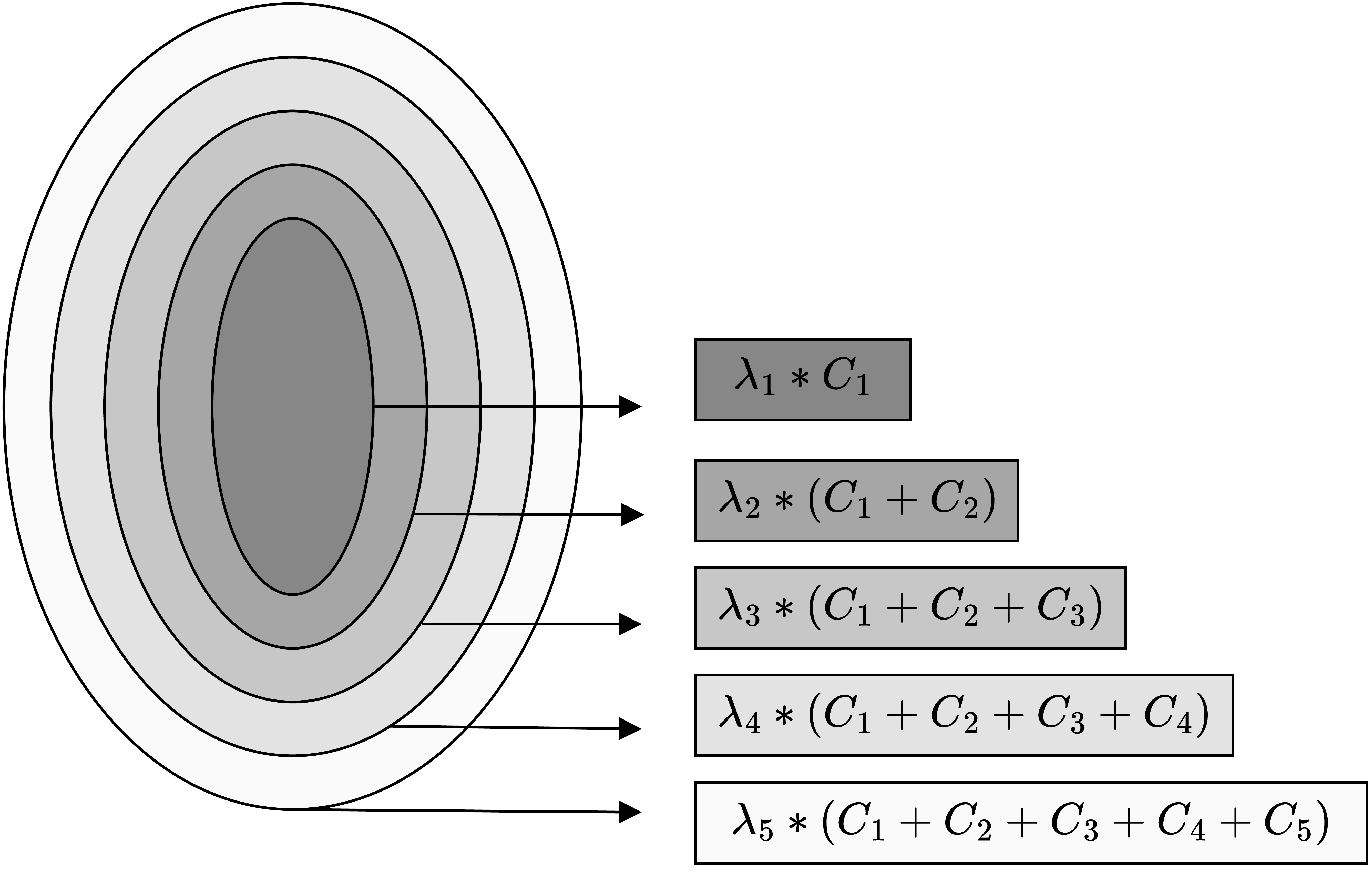} 
    \caption{Visual of $\mathcal{L}_{\mathrm{DbyD}}$}
    \label{fig:doll}
    \end{wrapfigure}However, it is our Drop-by-Drop loss that ensures this \emph{droppable} property holds in practice, where codebooks can be dropped at inference to provide a smooth tradeoff between compression rate and reconstruction quality.

We introduce a novel training objective, the \textbf{Drop-by-Drop loss}, that enables a single quantized model to be evaluated using different numbers of codebooks, without retraining or finetuning. This is inspired by Matryoshka Representation Learning. This modifies the codebook update phase of the AQLM pipeline in Appendix~\ref{app:AQLM-opt-process}. The objective is to enforce a hierarchical organization where early codebooks capture coarse but fundamental weight features and subsequent ones provide progressively finer refinements. As illustrated by the nested ellipsoids in Figure~\ref{fig:doll}, each stage $k$ corresponds to a cumulative sum of codebooks. The innermost contribution ($C_1$) is preserved across all levels, emphasizing the incremental nature of the representation. To achieve this hierarchical organization, given a fixed total of $M$  codebooks, we explicitly simulate different quantization granularities during training by forming \textit{partial reconstructions}. For each stage $k = 1, \dotsc, M$, we reconstruct the weights using only the first $k$ codebooks where $C_m \in \mathbb{R}^{g \times 2^B}$ is the $m$-th codebook and $b_{ijm} \in \{0,1\}^{2^B}$ 
is a one-hot vector indicating the selected codeword:
\vspace{-2mm}
\[\widehat{\boldsymbol{\mathsf{W}}}_i^{(k)} = \sum_{m=1}^k C_m b_{i,1,m} \oplus \cdots \oplus \sum_{m=1}^k C_m b_{i,d_{\mathrm{in}}/g,m}\] 

\vspace{-4mm}

This yields an intermediate approximation 
$\widehat{\boldsymbol{\mathsf{W}}}^{(k)}$, exposing the model during training to all 
deployment-time precision levels. For each partial reconstruction, we measure distortion using a task-weighted MSE:

\vspace{-5mm}

$$
D^{(k)} 
= 
\frac{1}{d_{\text{out}}\, n} 
\left\|
(\boldsymbol{\mathsf{W}} - \boldsymbol{\widehat{\mathsf{W}}}^{(k)})\,\boldsymbol{X}
\right\|_F^2
=
\frac{1}{d_{\text{out}}}
\operatorname{tr}\!\left[
(\boldsymbol{\mathsf{W}} - \boldsymbol{\widehat{\mathsf{W}}}^{(k)})
\frac{1}{n}\boldsymbol{X}\boldsymbol{X}^{\!\top}
(\boldsymbol{\mathsf{W}} - \boldsymbol{\widehat{\mathsf{W}}}^{(k)})^{\!\top}
\right].
$$

\vspace{-2mm}

The overall Drop-by-Drop objective aggregates these distortions across refinement stages: $\mathcal{L}_{\mathrm{DbyD}} = \sum_{k=1}^{K} \lambda_k\, D^{(k)},$ where $K \le M$ is the number of supervised refinement stages and $\lambda_k \ge 0$ are weighting coefficients that control the relative importance of coarse versus fine reconstructions. This training logic is summarized in Algorithm~\ref{alg:drop_by_drop1}, and the full algorithm can be found in Appendix~\ref{alg:full_alg1} where blue text shows the changes made to AQLM.

\begin{algorithm}[H] 
\caption{Drop-by-Drop Quantization for Resource-Constrained Inference}
\label{alg:drop_by_drop1}
\begin{algorithmic}[1]
\Require model, data
\State $X_{\text{block}} :=$ model.input\_embeddings(data)
    \For{layer $\in$ linear\_layers(block)}
        \State $W :=$ layer.weight
        \State $X :=$ layer\_inputs(layer, $X_{\text{block}}$)
        \State {$C, b, s :=$ k\_means\_initialize($W$)} 
        \While{loss improves by at least $\tau$}
        \For{\textcolor{blue}{$k = 1, \dotsc, M$}}
            \State \textcolor{blue}{$\widehat{W}^{(k)} \gets$ partial reconstruction using the first $k$ codebooks}
            \State \textcolor{blue}{$\mathcal{L}^{(k)} \gets 
            \lambda_k \cdot \frac{1}{d_{\text{out}}}
            \operatorname{tr}\!\left[
                (W - \widehat{W}^{(k)})
                \left(\frac{1}{n} X X^{\top}\right)
                (W - \widehat{W}^{(k)} )^{\top}
            \right]$}
        \EndFor
            \State \textcolor{blue}{Total loss: $\mathcal{L} = \sum_k \mathcal{L}^{(k)}$}
            \State {Update codebooks $C$ using gradient descent}
            \State {Update assignments $b$ using beam search}
        \EndWhile
        \State layer.weight := AQLMFormat($C, b, s$)
    \EndFor
\end{algorithmic}
\end{algorithm}

At inference time, we leverage the hierarchical structure of Drop-by-Drop to adapt quantization fidelity to specific resource constraints without retraining. A single model is deployed, and depending on the target device's budget, we reconstruct the weights using a prefix of the first $k$ codebooks ($k \leq M$). In practice, we maintain a minimum quality floor by setting $k \ge 3$. The number of active codebooks $k$ serves like a deployment parameter.

\section{Experiments}
\label{section:experiments}
\paragraph{Experimental Setup \& Hyperparameters.} Our experiments were conducted on multi-GPU systems equipped with NVIDIA A40, RTX 6000, or L40 GPUs. We mostly used 4 GPUs with CUDA support. We perform quantization using $M \in \{3, 4, 5\}$ codebooks, each with a bitwidth of 8 and a group size of 8. We used 1024 calibration samples with a validation split of 32 samples, optimizing a relative MSE objective with a tolerance of $0.01$. All quantized models are evaluated against the original FP16 base model using a batch size of 8.
\paragraph{Method Definitions.} We follow AQLM and group weights at 8-bit granularity, applying additive quantization over these groups. Selecting $K$ codebooks per group therefore yields an effective rate of $K$ bits per weight. Hence, the $3\times8$, $4\times8$, and $5\times8$ settings, where the first number denotes the number of codebooks and the second the bit-width of each, correspond to 3-, 4-, and 5-bit configurations, respectively. For clarity, we define the compared methods:

    1. \textit{AQLM (Individual)}: Separate AQLM models trained and tuned independently for each target codebook configuration (e.g., $5\times8$, $4\times8$). This represents an ideal performance but requires storing and training $N$ distinct models for $N$ quantization levels.

     2. \textit{AQLM (Drop)}: An AQLM model trained at maximum quantization (e.g., $5\times8$), with codebooks dropped at inference time to simulate lower quantization levels.

     3. \textit{Drop-by-Drop (Uniform)}: A single model trained once at maximum quantization using the proposed Drop-by-Drop method, designed to support codebook reduction at inference time. Uniform Matryoshka loss weights are used, \textbf{Uni}: $\lambda_1 = \lambda_2 = \lambda_3 = \lambda_4 = \lambda_5 = 0.2.$

    4. \textit{Drop-by-Drop (Weighted)}: Variants of DbyD using non-uniform Matryoshka loss weights to better approximate the performance of individually tuned AQLM models. Based on our ablation studies in Appendix~\ref{app:ablation-study}, we report \textbf{35W}: $\lambda_3 = 0.5$, $\lambda_5 = 0.5$, all other $\lambda$ are zero.

\begin{figure}[h!]
    \centering
    \vspace{-3mm}
    \hspace{0.85cm} 
    \begin{tikzpicture}
        \begin{groupplot}[
            group style={
                group size=4 by 2, 
                horizontal sep=0.8cm, 
                vertical sep=1cm,
                xlabels at=edge bottom,
                ylabels at=edge left
            },
            width=3.5cm, 
            height=3.5cm,
            xlabel={Codebooks},
            ylabel={Perplexity},
            symbolic x coords={3x8, 4x8, 5x8},
            xtick=data,
            ymajorgrids=true,
            grid style=dashed,
            tick label style={font=\tiny}, 
            label style={font=\footnotesize},
            title style={font=\bfseries\scriptsize, yshift=-0.5ex}, 
            every axis plot/.append style={
                mark=*,
                mark size=1.5pt, 
                thick
            },
            cycle list={
                {myblue, mark=square*},
                {myred, mark=square},
                {myorange, mark=o},
                {mygreen, mark=*} 
            },
            legend columns=-1,
            legend style={
                draw=none, 
                /tikz/every even column/.append style={column sep=0.3cm},
                font=\footnotesize
            },
            legend to name=CommonLegend
        ]

        
        \nextgroupplot[title=Gemma 2B, ymin=10, ymax=30]
        \addplot coordinates {(3x8, 12.10) (4x8, 11.11) (5x8, 10.76)};
        \addplot coordinates {(3x8, 27.20) (4x8, 13.06) (5x8, 10.76)};
        \addplot coordinates {(3x8, 16.30) (4x8, 12.58) (5x8, 10.99)};
        \addplot coordinates {(3x8, 14.24) (4x8, 12.01) (5x8, 10.80)};

        \nextgroupplot[title=Mistral 7B, ymin=4, ymax=15]
        \addplot coordinates {(3x8, 5.38) (4x8, 5.07) (5x8, 5.14)};
        \addplot coordinates {(3x8, 30.00) (4x8, 5.78) (5x8, 5.14)};
        \addplot coordinates {(3x8, 11.42) (4x8, 5.66) (5x8, 5.19)};
        \addplot coordinates {(3x8, 6.19) (4x8, 5.44) (5x8, 5.00)};

        \nextgroupplot[title=MetaLlama 8B, ymin=5, ymax=22]
        \addplot coordinates {(3x8, 6.77) (4x8, 6.14) (5x8, 5.89)};
        \addplot coordinates {(3x8, 50.00) (4x8, 20.10) (5x8, 5.89)};
        \addplot coordinates {(3x8, 12.57) (4x8, 7.09) (5x8, 6.02)};
        \addplot coordinates {(3x8, 8.77) (4x8, 7.04) (5x8, 5.93)};

        \nextgroupplot[title=Qwen 0.5B, ymin=12, ymax=55]
        \addplot coordinates {(3x8, 15.43) (4x8, 13.76) (5x8, 13.30)};
        \addplot coordinates {(3x8, 51.71) (4x8, 16.21) (5x8, 13.30)};
        \addplot coordinates {(3x8, 34.14) (4x8, 16.29) (5x8, 13.54)};
        \addplot coordinates {(3x8, 28.40) (4x8, 16.65) (5x8, 13.36)};


        \nextgroupplot[title=Qwen 3B, ymin=7, ymax=35]
        \addplot coordinates {(3x8, 8.25) (4x8, 7.71) (5x8, 7.54)};
        \addplot coordinates {(3x8, 50.00) (4x8, 30.70) (5x8, 7.54)};
        \addplot coordinates {(3x8, 10.25) (4x8, 8.32) (5x8, 7.63)};
        \addplot coordinates {(3x8, 9.91) (4x8, 8.22) (5x8, 7.57)};

        \nextgroupplot[title=Qwen 7B, ymin=6.5, ymax=11]
        \addplot coordinates {(3x8, 7.47) (4x8, 7.08) (5x8, 6.93)};
        \addplot coordinates {(3x8, 10.33) (4x8, 7.84) (5x8, 6.93)};
        \addplot coordinates {(3x8, 9.37) (4x8, 7.65) (5x8, 7.06)};
        \addplot coordinates {(3x8, 9.17) (4x8, 7.61) (5x8, 6.96)};

        \nextgroupplot[title=Qwen 14B, ymin=5, ymax=9]
        \addplot coordinates {(3x8, 6.13) (4x8, 5.60) (5x8, 5.38)};
        \addplot coordinates {(3x8, 8.32) (4x8, 6.06) (5x8, 5.38)};
        \addplot coordinates {(3x8, 7.65) (4x8, 6.08) (5x8, 5.51)};
        \addplot coordinates {(3x8, 7.06) (4x8, 6.01) (5x8, 5.53)};

        \nextgroupplot[title=Qwen 32B, ymin=4.5, ymax=7]
        \addplot+[color=myblue] coordinates {(3x8, 5.48) (4x8, 5.13) (5x8, 4.99)};
        \addlegendentry{AQLM (Ind.)}
        
        \addplot+[color=myred] coordinates {(3x8, 6.65) (4x8, 5.51) (5x8, 4.99)};
        \addlegendentry{AQLM (Drop)}
        
        \addplot+[color=myorange] coordinates {(3x8, 6.29) (4x8, 5.43) (5x8, 5.04)};
        \addlegendentry{DbyD (Uni.)}
        
        \addplot+[color=mygreen] coordinates {(3x8, 6.06) (4x8, 5.38) (5x8, 5.01)};
        \addlegendentry{DbyD (35W)}

        \end{groupplot}

        \node at ($(group c1r2.south west)!0.5!(group c4r2.south east)$) [yshift=-1.5cm] {\ref{CommonLegend}};
        
    \end{tikzpicture}
    \caption{Perplexity on WikiText2 ($\downarrow$). Extreme outliers for AQLM (Drop) are clipped.}
    \label{fig:perplexity_wiki}
\end{figure}
\paragraph{Evaluation Benchmarks.}
We evaluate Drop-by-Drop across a diverse set of LLMs spanning multiple architectures and scales, including \textsc{Gemma-2B}, \textsc{Meta-LLaMA-3-8B}, \textsc{Mistral-7B-v0.3}, and the \textsc{Qwen2.5} family~\citep{gemma2024, llama3, mistral, QWEN}, including \textsc{0.5B-Instruct}, \textsc{3B}, \textsc{7B-Instruct}, \textsc{14B-Instruct}, and \textsc{32B-Instruct}. Intrinsic language modeling performance is measured using perplexity on \textsc{WikiText-2} and \textsc{C4}~\citep{wikitext2, c4}. Extrinsic evaluation is conducted via zero-shot accuracy on commonsense and reasoning benchmarks, including \textsc{ARC} (Easy and Challenge), \textsc{HellaSwag}, \textsc{Winogrande}, and \textsc{PIQA}~\citep{arc, hellaswag, winogrande, piqa}. We report the mean accuracy across these five tasks as a single score. Detailed per-task accuracy results are provided in Appendix~\ref{app:detailed}. All perplexity and accuracy results are averaged over \textit{three} independent runs, with stability assessed using normalized standard deviations aggregated via root mean square. 

\paragraph{Plots.}
Figure~\ref{fig:perplexity_wiki} (WikiText2 Perplexity) and Figure~\ref{fig:accuracy_zeroshot} (Zero-Shot Accuracy) compare performance across quantization levels. \textbf{AQLM (Ind.)} serves as the specialized performance bound, while \textbf{AQLM (Drop)} provides a baseline illustrating the behavior of standard AQLM when codebooks are removed without hierarchical training. As codebooks are dropped, AQLM (Drop) diverges sharply from the specialized bound, particularly at the $3\times8$ level, confirming that standard AQLM distributes information across codebooks without an intrinsic hierarchy. In contrast, Drop-by-Drop variants consistently outperform this baseline. Notably, the \textbf{DbyD (35W)} configuration closely tracks the specialized AQLM (Ind.) bound, demonstrating that targeted Matryoshka loss weighting enables a single flexible model to match the performance of individually trained models. 

\begin{figure}[h!]
    \centering
    \vspace{-3mm}
    \hspace{0.2cm}
    \begin{tikzpicture}
        \begin{groupplot}[
            group style={
                group size=4 by 2, 
                horizontal sep=0.8cm, 
                vertical sep=1cm,
                xlabels at=edge bottom,
                ylabels at=edge left
            },
            width=3.5cm, 
            height=3.5cm,
            xlabel={Codebooks},
            ylabel={Accuracy (\%)},
            symbolic x coords={3x8, 4x8, 5x8},
            xtick=data,
            ymajorgrids=true,
            grid style=dashed,
            tick label style={font=\tiny},
            label style={font=\footnotesize},
            title style={font=\bfseries\scriptsize, yshift=-0.5ex},
            every axis plot/.append style={
                mark=*,
                mark size=1.5pt,
                thick
            },
            cycle list={
                {myblue, mark=square*},
                {myred, mark=square},
                {myorange, mark=o},
                {mygreen, mark=*} 
            },
            legend columns=-1,
            legend style={
                draw=none, 
                /tikz/every even column/.append style={column sep=0.3cm},
                font=\footnotesize
            },
            legend to name=CommonLegendAcc 
        ]


        \nextgroupplot[title=Gemma 2B, ymin=40, ymax=45]
        \addplot coordinates {(3x8, 41.56) (4x8, 42.23) (5x8, 42.15)};
        \addplot coordinates {(3x8, 40.79) (4x8, 42.11) (5x8, 42.15)};
        \addplot coordinates {(3x8, 41.69) (4x8, 41.51) (5x8, 42.61)};
        \addplot coordinates {(3x8, 41.80) (4x8, 42.73) (5x8, 42.57)};

        \nextgroupplot[title=Mistral 7B, ymin=40, ymax=75]
        \addplot coordinates {(3x8, 70.97) (4x8, 72.46) (5x8, 71.72)};
        \addplot coordinates {(3x8, 44.84) (4x8, 69.94) (5x8, 71.72)};
        \addplot coordinates {(3x8, 61.78) (4x8, 69.96) (5x8, 72.15)};
        \addplot coordinates {(3x8, 67.48) (4x8, 70.81) (5x8, 72.85)};

        \nextgroupplot[title=MetaLlama 8B, ymin=35, ymax=75]
        \addplot coordinates {(3x8, 70.00) (4x8, 71.00) (5x8, 72.42)};
        \addplot coordinates {(3x8, 38.08) (4x8, 65.54) (5x8, 72.42)};
        \addplot coordinates {(3x8, 55.80) (4x8, 68.52) (5x8, 71.62)};
        \addplot coordinates {(3x8, 64.81) (4x8, 69.60) (5x8, 72.09)};

        \nextgroupplot[title=Qwen 0.5B, ymin=40, ymax=55]
        \addplot coordinates {(3x8, 49.96) (4x8, 53.60) (5x8, 53.25)};
        \addplot coordinates {(3x8, 44.02) (4x8, 52.80) (5x8, 53.25)};
        \addplot coordinates {(3x8, 45.88) (4x8, 51.86) (5x8, 53.62)};
        \addplot coordinates {(3x8, 48.47) (4x8, 53.14) (5x8, 53.86)};


        \nextgroupplot[title=Qwen 3B, ymin=35, ymax=70]
        \addplot coordinates {(3x8, 65.48) (4x8, 66.68) (5x8, 67.83)};
        \addplot coordinates {(3x8, 39.64) (4x8, 62.11) (5x8, 67.83)};
        \addplot coordinates {(3x8, 62.53) (4x8, 65.85) (5x8, 68.18)};
        \addplot coordinates {(3x8, 63.06) (4x8, 67.28) (5x8, 68.17)};

        \nextgroupplot[title=Qwen 7B, ymin=65, ymax=75]
        \addplot coordinates {(3x8, 70.83) (4x8, 72.37) (5x8, 73.59)};
        \addplot coordinates {(3x8, 66.77) (4x8, 71.40) (5x8, 73.59)};
        \addplot coordinates {(3x8, 67.30) (4x8, 71.32) (5x8, 72.75)};
        \addplot coordinates {(3x8, 69.54) (4x8, 72.52) (5x8, 73.72)};

        \nextgroupplot[title=Qwen 14B, ymin=68, ymax=78]
        \addplot coordinates {(3x8, 74.89) (4x8, 75.94) (5x8, 76.85)};
        \addplot coordinates {(3x8, 70.00) (4x8, 74.73) (5x8, 76.85)};
        \addplot coordinates {(3x8, 73.92) (4x8, 75.80) (5x8, 76.03)};
        \addplot coordinates {(3x8, 73.93) (4x8, 75.99) (5x8, 76.76)};

        \nextgroupplot[title=Qwen 32B, ymin=70, ymax=76]
        \addplot+[color=myblue] coordinates {(3x8, 74.94) (4x8, 74.50) (5x8, 74.53)};
        \addlegendentry{AQLM (Ind.)}
        
        \addplot+[color=myred] coordinates {(3x8, 71.08) (4x8, 73.10) (5x8, 74.53)};
        \addlegendentry{AQLM (Drop)}
        
        \addplot+[color=myorange] coordinates {(3x8, 71.93) (4x8, 72.56) (5x8, 74.20)};
        \addlegendentry{DbyD (Uni.)}
        
        \addplot+[color=mygreen] coordinates {(3x8, 72.94) (4x8, 73.98) (5x8, 74.74)};
        \addlegendentry{DbyD (35W)}

        \end{groupplot}

        
    \end{tikzpicture}
    \caption{Average zero-shot accuracy ($\uparrow$) across 5 aforementioned tasks}
    \label{fig:accuracy_zeroshot}
\end{figure}

\section{Discussion and Future Work}
Drop-by-Drop demonstrates that a single quantized model can flexibly serve multiple resource budgets without retraining or maintaining separate checkpoints. In conventional methods, supporting multiple precisions requires training and storing a separate model for each target configuration. In our setup, this corresponds to three independently trained models at $3\times8$, $4\times8$, and $5\times8$. In contrast, DbyD eliminates this overhead through its hierarchical codebook structure, requiring only a single trained model. As shown in Figure~\ref{fig:disk_usage_training_time}, AQLM incurs the cumulative cost of training and storing three separate models, whereas DbyD trains and stores only one while maintaining competitive perplexity and accuracy relative to individually trained models. Unlike prior multi-bitwidth methods, which obtain their precisions by upscaling the representation under a data-unaware objective, DbyD induces droppability directly in additive vector codebooks through a single data-aware WMSE objective with Matryoshka supervision, and grounds this design in successive refinement theory. To our knowledge, connecting multi-bitwidth PTQ for LLMs with information-theoretic successive refinement, and aligning the theory with practice, is novel. This opens up several promising applications, such as run-time hardware-aware dynamic loading and mixed-precision quantization, that are ultimately critical for real-world deployment of LLMs on resource-constrained devices. Algorithmically, promising extensions include verifying successive refinability under alternative distortion objectives and extending DbyD to other quantization schemes or application domains where progressive refinement is desirable.

\begin{figure}[h]
    \centering
    \begin{minipage}{0.48\textwidth}
        \centering
        \begin{tikzpicture}
            \begin{axis}[
                ybar,
                width=\linewidth,
                height=4.5cm, 
                bar width=6pt,
                enlarge x limits=0.15,
                ylabel={Total Training Hours},
                ylabel style={font=\small, yshift=0pt},
                symbolic x coords={Qwen-0.5B,Gemma-2B,Qwen-3B,Qwen-7B,Mistral-7B,Llama-8B,Qwen-14B,Qwen-32B},
                xtick=data,
                xticklabel style={
                    font=\tiny, 
                    rotate=45, 
                    anchor=north east, 
                    inner sep=2pt
                },
                ymin=0, ymax=450,
                ymajorgrids=true,
                grid style={dashed, gray!30},
                axis line style={draw=none},
                tick style={draw=none},
                legend style={
                    at={(0.02,0.95)}, 
                    anchor=north west, 
                    draw=none, 
                    fill=white, 
                    fill opacity=0.9,
                    font=\tiny,
                    cells={anchor=west}
                },
            ]
            \addplot[fill=gray!50, draw=none] coordinates {
                (Qwen-0.5B,4.9) (Gemma-2B,22.5) (Qwen-3B,25.9) (Qwen-7B,68.2)
                (Mistral-7B,82.0) (Llama-8B,66.8) (Qwen-14B,140.0) (Qwen-32B,386.5)
            };
            \addlegendentry{AQLM (Cumulative)}

            \addplot[fill=green!60!black, draw=none] coordinates { 
                (Qwen-0.5B,2.3) (Gemma-2B,9.5) (Qwen-3B,19.1) (Qwen-7B,60.0)
                (Mistral-7B,27.0) (Llama-8B,26.0) (Qwen-14B,74.0) (Qwen-32B,240.0)
            };
            \addlegendentry{\textbf{DbyD (Ours)}}
            \end{axis}
        \end{tikzpicture}
    \end{minipage}
    \hfill 
    \begin{minipage}{0.48\textwidth}
        \centering
        \begin{tikzpicture}
            \begin{axis}[
                ybar,
                width=\linewidth,
                height=4.5cm,
                bar width=6pt,
                enlarge x limits=0.15,
                ylabel={Total Disk Usage (GB)},
                ylabel style={font=\small, yshift=0pt},
                symbolic x coords={Qwen-0.5B,Gemma-2B,Qwen-3B,Qwen-7B,Mistral-7B,Llama-8B,Qwen-14B,Qwen-32B},
                xtick=data,
                xticklabel style={
                    font=\tiny, 
                    rotate=45, 
                    anchor=north east, 
                    inner sep=2pt
                },
                ymin=0, ymax=210,
                ymajorgrids=true,
                grid style={dashed, gray!30},
                axis line style={draw=none},
                tick style={draw=none},
                legend style={
                    at={(0.02,0.95)}, 
                    anchor=north west, 
                    draw=none, 
                    fill=white, 
                    fill opacity=0.9,
                    font=\tiny,
                    cells={anchor=west}
                },
            ]
            \addplot[fill=gray!50, draw=none] coordinates {
                (Qwen-0.5B, 3.3) (Gemma-2B, 14.0) (Qwen-3B, 19.0) (Qwen-7B, 43.9)
                (Mistral-7B, 42.0) (Llama-8B, 45.3) (Qwen-14B, 84.8) (Qwen-32B, 186.1)
            };
            \addlegendentry{AQLM (Cumulative)}

            \addplot[fill=green!60!black, draw=none] coordinates {
                (Qwen-0.5B, 1.3) (Gemma-2B, 5.6) (Qwen-3B, 7.6) (Qwen-7B, 17.7)
                (Mistral-7B, 17.3) (Llama-8B, 18.3) (Qwen-14B, 34.4) (Qwen-32B, 76.6)
            };
            \addlegendentry{\textbf{DbyD (Ours)}}
            \end{axis}
        \end{tikzpicture}
    \end{minipage}
    \caption{Training time and disk space required to produce a suite of quantized models}
    \label{fig:disk_usage_training_time}
\end{figure}



\section{Reproducibility Statement}
 Our implementation builds on the AQLM codebase, with modifications released at \href{https://anonymous.4open.science/r/DropbyDrop2026/}{\texttt{https://anonymous.4open.science/r/DropbyDrop2026/}}, primarily affecting \texttt{aq\_engine.py}, \texttt{src/aq.py}, and \texttt{quantize.sh}.


\bibliography{2026_conference}
\bibliographystyle{2026_conference}

\clearpage
\newpage
\appendix

\section{Notation Summary}
\label{app:notation}

Matrices are denoted by bold uppercase letters (e.g., $\boldsymbol{W}$), vectors by bold lowercase letters (e.g., $\boldsymbol{x}$), and scalars by lowercase letters (e.g., $x$). Random variables are represented using sans-serif font (e.g., $\mathsf{W}$), while deterministic quantities use the standard math font.

\begin{table}[h]
\centering
\begin{tabular}{ll}
\toprule
\textbf{Symbol} & \textbf{Description} \\
\midrule
$\boldsymbol{\mathsf{W}} \in \mathbb{R}^{d_{\mathrm{out}} \times d_{\mathrm{in}}}$ & Weight matrix of a linear layer (Random variable) \\
$\widehat{\boldsymbol{\mathsf{W}}}$ & Quantized reconstruction of $\boldsymbol{\mathsf{W}}$ (Random variable) \\
$\boldsymbol{X} \in \mathbb{R}^{d_{\mathrm{in}} \times n}$ & Representative input data matrix \\
$D_{\mathrm{WMSE}}(\boldsymbol{\mathsf{W}},\widehat{\boldsymbol{\mathsf{W}}})$ & Weighted mean squared reconstruction error (WMSE) \\
\midrule
\midrule
$\boldsymbol{\mathsf{Y}} = \boldsymbol{\mathsf{W}}\boldsymbol{X} \in \mathbb{R}^{d_{\mathrm{out}} \times n}$ & Layer output activations (Random variable) \\
$\boldsymbol{\mathsf{y}} = \operatorname{vec}(\boldsymbol{\mathsf{Y}}) \in \mathbb{R}^{d_{\mathrm{out}} \cdot n}$ & Vectorized layer output (Random variable) \\
$\boldsymbol{\mathsf{w}} = \operatorname{vec}(\boldsymbol{\mathsf{W}}) \in \mathbb{R}^{d_{\mathrm{out}} \cdot d_{\mathrm{in}}}$ & Vectorized weight matrix (Random variable) \\
$\mathsf{y}_i$ & $i$-th scalar component of $\boldsymbol{\mathsf{y}}$\\
\midrule
\midrule
$\widehat{\boldsymbol{\mathsf{W}}}^{(k)}$ & Partial reconstruction using the first $k$ codebooks \\
$\boldsymbol{\mathsf{Y}}^{(k)}$ & Output produced by the $k$-stage reconstruction \\
$D_k$ & Distortion level at refinement stage $k$ \\
$\Sigma_Y$ & Covariance of vectorized output $\boldsymbol{\mathsf{Y}}$ \\
$\lambda_i$ & Eigenvalues of $\Sigma_Y$ in Gaussian eigen-decomposition \\
\midrule
\midrule
$d_{\mathrm{in}}, d_{\mathrm{out}}$ & Input and output dimensions of the layer \\
$n$ & Number of calibration samples \\
$g$ & Group size for partitioning weights \\
$M$ & Number of additive codebooks \\
$C_m \in \mathbb{R}^{g \times 2^B}$ & $m$-th codebook containing $2^B$ codewords \\
$B$ & Number of bits used to index codewords \\
$b_{ijm} \in \{0,1\}^{2^B}$ & One-hot vector selecting a codeword from codebook $m$ \\
\bottomrule
\end{tabular}
\caption{Notation summary used throughout the paper}
\label{tab:notation}
\end{table}

\clearpage
\section{QuIP\#'s Residual Quantization Codebook Dropping Results}
\label{app:quip_drop}
Table~\ref{tab:quip_dropping} presents perplexity results for QuIP\# applied to Llama models. The notation "3$\to$2" indicates that the residual stage was trained with 3 bits but evaluated using only 2 bits during inference.  
These results highlight the severe degradation caused by residual stage truncation, particularly at lower bitwidths.  

Since QuIP\# is Llama-specific, we also applied QuIP-for-All~\citep{QuIPForAll} to Qwen models. Residual codebook dropping in this case led to catastrophic and unstable degradation, making the results noteworthy but unsuitable for presentation in a table. 

These findings demonstrate that QuIP-based residual quantization does not exhibit successive refinability in practice: despite its hierarchical structure, the residual stages are not designed to be independently omissible, and discarding them at inference time leads to severe or unstable performance collapse rather than graceful bitwidth reduction.

\begin{table}[h]
    \centering
    \begin{tabular}{@{}llrr@{}}
    \toprule
    \textbf{Model} & \textbf{Bits} & \textbf{Wikitext2} & \textbf{C4} \\ \midrule
    \multirow{5}{*}{Llama-2-7b} & 4-bit & 5.19 & 6.75 \\
     & 3-bit & 5.41 & 7.04 \\
     & 2-bit & 6.18 & 8.16 \\
     & 3$\to$2 & 101.27 & 94.16 \\
     & 4$\to$2 & 94.97 & 83.38 \\ \midrule
    \multirow{5}{*}{Llama-2-13b} & 4-bit & 4.65 & 6.15 \\
     & 3-bit & 4.89 & 6.50 \\
     & 2-bit & 5.34 & 7.19 \\
     & 3$\to$2 & 15.04 & 20.77 \\
     & 4$\to$2 & 45.03 & 56.60 \\ 
    \bottomrule
    \end{tabular}
    \caption{Perplexity results ($\downarrow$) for QuIP\#}
    \label{tab:quip_dropping}
\end{table}


\clearpage

\section{AQLM's Optimization Process}
\label{app:AQLM-opt-process}
The quantization process in AQLM learns both the codebooks \( C \) and the assignments \( b \), where \( b \) encodes which codewords from each codebook are selected to approximate groups of weights. Initialization begins with a residual K-means procedure. Firstly, 
a standard K-means clustering is applied to the grouped weights to generate initial cluster centers and assignments~\citep{chen2010approximate}. Then, the quantization errors are computed as the differences between weights and their assigned centroids. Lastly, subsequent codebooks are iteratively initialized by clustering these \textit{residual} errors, enabling each additional codebook to compensate for errors left by the previous ones. This hierarchical initialization helps achieve an effective residual-based representation.

Learning proceeds through iterative alternating optimization of the discrete codes and continuous codebooks, consisting of three main phases:
\begin{enumerate}
    \item {\textbf{Code Optimization:}} The discrete assignment vectors \( b \) are refined using a beam search algorithm, which searches for combinations of codewords that minimize the quantization error between the original and reconstructed weights. This step focuses on minimizing the error in model outputs induced by quantization.
    \item {\textbf{Codebook Update:}} Given fixed codes, the codebooks \( C \) are updated by solving a least squares problem that minimizes the difference between the quantized weights and the original full-precision weights. This gradient-based step optimizes the continuous parameters of the codebooks to better fit the data.
    \item {\textbf{Intra-Layer Fine-Tuning:}} Finally, to further reduce residual quantization errors, AQLM performs fine-tuning at the transformer block level. This phase \textit{jointly} adjusts the non-quantized model parameters and the codebooks, ensuring that the outputs of the quantized model closely match those of the original~\citep{shuangyi}.
\end{enumerate} 

\paragraph{Estimating Model Size.} To evaluate the practical memory footprint of an AQLM-quantized model, we estimate the total storage cost for a given codebook configuration. The memory required for a quantized weight matrix comprises three primary components: the codebooks, the discrete codes, and the per-unit scales. 

Specifically, for a weight matrix with input dimension $d_{in}$, output dimension $d_{out}$, and group size $g$, using $M$ codebooks corresponding to $B$-bit codes, the total amount of memory (in bits) is calculated as follows:

\begin{itemize}
    \item \textbf{Codebooks:} $g \cdot M \cdot 2^B \cdot 16$
    \item \textbf{Codes:} $d_{out} \cdot (d_{in}/g) \cdot B$
    \item \textbf{Scales:} $d_{out} \cdot 16$
\end{itemize}

The \textit{average bits per parameter} ($\bar{b}$) is then computed by dividing the total size in bits by the total number of parameters:

\begin{equation}
\bar{b} = \frac{\text{size in bits}}{\text{number of parameters}} = \frac{16 * g *M* 2^B + d_{out}* (d_{in}/g)*B + 16 *d_{out}}{d_{out} *d_{in}}
\label{eq:avg_bits}
\end{equation}

As a concrete example, consider the \texttt{mlp.gate\_proj} layer of a Llama 2 70B model ($d_{in} = 8192$, $d_{out} = 28672$). With a group size $g=8$ and two 8-bit codebooks ($M=2, B=8$), Equation~\ref{eq:avg_bits} yields approximately $2.002$ bits per parameter. Typically, the storage cost is dominated by the codes, while the codebooks and scales induce a relatively small memory overhead~\citep{vahe}.

\clearpage
\section{Empirical Validation of the Gaussian Source Assumption}
\label{app:gaussian plots}
Figure~\ref{fig:plotsss} provides empirical evidence supporting the Gaussian modeling assumption used in our theoretical analysis. We show the distributions of LLM weight parameters from the first three transformer layers of \textsc{Qwen2.5-7B-Instruct}. Each layer contains seven weight matrices (four attention and three MLP projections). Across all modules, the distributions closely follow Gaussian profiles, supporting the use of Gaussian source models.

\begin{figure}[h]
    \centering
    \def\projections{{"self_attn.q_proj", "self_attn.k_proj", "self_attn.v_proj", "self_attn.o_proj", "mlp.gate_proj", "mlp.up_proj", "mlp.down_proj"}}

    \begin{minipage}[t]{0.32\textwidth}
        \centering
        \foreach \l [count=\i from 0] in {0,...,6}{
            \pgfmathsetmacro{\c}{\projections[\i]} 
            \includegraphics[width=0.8\textwidth]{imgs/plots/March_1_FITTED_scaled_weight_hist_layer_\l_Linear.png} \\
            \vspace{-0.5em}
            {\tiny Layer 0: \texttt{\detokenize\expandafter{\c}}} \\ \vspace{1em}
        }
    \end{minipage}
    \hfill
    \begin{minipage}[t]{0.32\textwidth}
        \centering
        \foreach \l [count=\i from 0] in {7,...,13}{
            \pgfmathsetmacro{\c}{\projections[\i]}
            \includegraphics[width=0.8\textwidth]{imgs/plots/March_1_FITTED_scaled_weight_hist_layer_\l_Linear.png} \\
            \vspace{-0.5em}
            {\tiny Layer 1: \texttt{\detokenize\expandafter{\c}}} \\ \vspace{1em}
        }
    \end{minipage}
    \hfill
    \begin{minipage}[t]{0.32\textwidth}
        \centering
        \foreach \l [count=\i from 0] in {14,...,20}{
            \pgfmathsetmacro{\c}{\projections[\i]}
            \includegraphics[width=0.8\textwidth]{imgs/plots/March_1_FITTED_scaled_weight_hist_layer_\l_Linear.png} \\
            \vspace{-0.5em}
            {\tiny Layer 2: \texttt{\detokenize\expandafter{\c}}} \\ \vspace{1em}
        }
    \end{minipage}
    
    \caption{Qwen-7B layer weights fitted to Gaussian distributions}
    \label{fig:plotsss}
\end{figure}

\section{Theoretical Foundations and Successive Refinement}
\label{app:theory}
\paragraph{Successive Refinement.} The successive refinement problem, introduced by~\citet{successiveref}, asks whether a source can be encoded in stages so that each refinement achieves the same distortion-rate limit as a single-shot encoder. Formally, for target distortions $D_1 > D_2$, we seek an encoding system that first represents ${\mathsf{W}}^n$ at rate $R_1$ with distortion $D_1$, and then refines this representation to distortion $D_2$ using an additional rate $R_2 - R_1$, without exceeding the total rate $R_2 = R(D_2)$. 
The following definition follows this notion.

\begin{definition}[Successive Refinability of General Sources] 
\label{def:SR-general}
A source is said to be successively refinable from $D_1$ to $D_2$ if there exists a sequence of encoders
\[
i_n : \mathcal{W}^n \to \{1,\ldots,2^{nR_1}\}, \qquad
j_n : \mathcal{W}^n \to \{1,\ldots,2^{\,n(R_2-R_1)}\},
\]
and respective reconstruction functions (decoders)
\[
g_{1n} : \{1,\ldots,2^{nR_1}\} \to \widehat{\mathcal{W}}_1^{\,n},
\qquad
g_{2n} : \{1,\ldots,2^{nR_1}\} \times \{1,\ldots,2^{\,n(R_2 - R_1)}\}
\to \widehat{\mathcal{W}}_2^{\,n}.
\]
such that the reconstructions
\[
\widehat{\mathsf{W}}_1^{\,n}
= g_{1n}\!\big(i_n(\mathsf{W}^{\,n})\big), 
\qquad
\widehat{\mathsf{W}}_2^{\,n}
= g_{2n}\!\big(i_n(\mathsf{W}^{\,n}),\, j_n(\mathsf{W}^{\,n})\big),
\]
satisfy
\[
\limsup_{n\to\infty}
\mathbb{E}\!\left[d(\mathsf{W}^{\,n},\,\widehat{\mathsf{W}}_1^{\,n})\right]
\le D(R_1) + \varepsilon_n,
\qquad
\limsup_{n\to\infty}
\mathbb{E}\!\left[d(\mathsf{W}^{\,n},\,\widehat{\mathsf{W}}_2^{\,n})\right]
\le D(R_2) + \varepsilon_n,
\]
where $\varepsilon_n \to 0$ as $n \to \infty$ and $D(R)$ is the distortion–rate function. Then, 

\begin{equation}
R_1 = R(D_1) + \varepsilon_1, 
\qquad 
R_{2|1} = R(D_2) - R(D_1) + \varepsilon_2,
\end{equation}
and hence the total rate satisfies
\begin{equation}
\label{eq:sr-total-rate}
R_{\text{total}} = R_1 + R_{2|1} = R(D_2) + \varepsilon_1 + \varepsilon_2,
\end{equation}
where $\varepsilon_1, \varepsilon_2 > 0$ are vanishing terms such that $\varepsilon_1, \varepsilon_2 \to 0$ as $n \to \infty$. Note that $R(D_1)$ and $R(D_2)$ are theoretical rate–distortion limits, whereas 
$R_1$, $R_{2|1}$, and $R_{\text{total}}$ are empirical operational rates 
of the actual coding scheme. Here, the rate in stages matches the single-shot rate required to achieve distortion \(D_2\).
Thus, encoding in two stages (first at rate \(R_1\) and then refining by an additional \(R_{2|1}\)) achieves the same operational point as single-shot encoding at rate \(R(D_2)\), with no rate penalty. More formally, the point $(R_1, R_2) = (R(D_1), R(D_2))$ is asymptotically achievable by a two-stage description in which 
the second stage refines the first. Generally, a problem defined by a source distribution $p(\mathsf{w})$ and distortion measure $d(w,\widehat{w})$ 
is said to be successively refinable,
if successive refinement from distortion $D_1$ to $D_2$ 
is achievable for every $D_1 \ge D_2$.
\end{definition}



\medskip
Next, we show that Gaussian sources are successively refinable. 
We first recall the single-source construction in Theorem~\ref{thm:gaussian-sr}, and then note that the argument extends immediately to any finite collection of independent Gaussian sources. We include these explanations explicitly to facilitate the reader in following the procedures and proofs in Appendix~\ref{app:detailed_theorems}.

\begin{theorem}[Successive Refinability of Gaussian Sources]\citep{equitzthesis}
\label{thm:gaussian-sr}
A Gaussian source 
$\mathsf{w} \sim \mathcal{N}(0,\sigma^2)$ 
under the mean squared error (MSE) distortion measure
$d(w,\widehat{w}) = (w - \widehat{w})^2$
is successively refinable.
\end{theorem}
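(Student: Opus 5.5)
To prove Theorem~\ref{thm:gaussian-sr}, I will use the Markov-chain characterization of successive refinability due to \citet{successiveref}: $(\mathsf{w},d)$ is successively refinable from $D_1$ to $D_2$ (with $D_1 > D_2$) exactly when there exist reconstructions $\widehat{\mathsf{w}}_1,\widehat{\mathsf{w}}_2$ such that
\begin{itemize}
\item $\widehat{\mathsf{w}}_k$ attains the rate-distortion minimum at $D_k$, i.e.\ $I(\mathsf{w};\widehat{\mathsf{w}}_k)=R(D_k)$ and $\mathbb{E}[(\mathsf{w}-\widehat{\mathsf{w}}_k)^2]\le D_k$ for $k=1,2$; and
\item the Markov chain $\mathsf{w}\to\widehat{\mathsf{w}}_2\to\widehat{\mathsf{w}}_1$ holds.
\end{itemize}
Sufficiency of this condition follows from the standard two-stage random coding argument. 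First, generate a codebook for $\widehat{\mathsf{w}}_1$ at rate $I(\mathsf{w};\widehat{\mathsf{w}}_1)$. Then, for each first-stage codeword, generate a conditional codebook for $\widehat{\mathsf{w}}_2$ at rate $I(\mathsf{w};\widehat{\mathsf{w}}_2\mid\widehat{\mathsf{w}}_1)$. The Markov chain collapses the total rate to $I(\mathsf{w};\widehat{\mathsf{w}}_1,\widehat{\mathsf{w}}_2)=I(\mathsf{w};\widehat{\mathsf{w}}_2)=R(D_2)$. This matches Definition~\ref{def:SR-general}, so the task reduces to exhibiting such a pair for the Gaussian source.

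Recall that $R(D)=\tfrac12\log(\sigma^2/D)$ for $0<D\le\sigma^2$, and $R(D)=0$ otherwise. For $D_1\ge\sigma^2$, I take $\widehat{\mathsf{w}}_1=0$, and the chain holds trivially. So assume $\sigma^2\ge D_1>D_2>0$. The key idea is to build the chain in the backward-channel form, with independent Gaussian increments:
\[
\widehat{\mathsf{w}}_1\sim\mathcal{N}(0,\sigma^2-D_1),\quad
\widehat{\mathsf{w}}_2=\widehat{\mathsf{w}}_1+\mathsf{z}_2,\ \mathsf{z}_2\sim\mathcal{N}(0,D_1-D_2),\quad
\mathsf{w}=\widehat{\mathsf{w}}_2+\mathsf{z}_3,\ \mathsf{z}_3\sim\mathcal{N}(0,D_2),
\]
where $\widehat{\mathsf{w}}_1,\mathsf{z}_2,\mathsf{z}_3$ are mutually independent. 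The construction is well-defined because all three variances are nonnegative.

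From this construction I will verify three facts.
\begin{itemize}
\item \textbf{Correct marginal.} The variances add to $\sigma^2$, so $\mathsf{w}\sim\mathcal{N}(0,\sigma^2)$.
\item \textbf{Markov chain.} $\mathsf{w}-\widehat{\mathsf{w}}_2=\mathsf{z}_3$ is independent of $(\widehat{\mathsf{w}}_1,\widehat{\mathsf{w}}_2)$, which gives $\widehat{\mathsf{w}}_1\to\widehat{\mathsf{w}}_2\to\mathsf{w}$.
\item \textbf{Optimality at each stage.} The error $\mathsf{w}-\widehat{\mathsf{w}}_1=\mathsf{z}_2+\mathsf{z}_3$ has variance $D_1$ and is independent of $\widehat{\mathsf{w}}_1$. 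Hence $\mathbb{E}[(\mathsf{w}-\widehat{\mathsf{w}}_1)^2]=D_1$, and $I(\mathsf{w};\widehat{\mathsf{w}}_1)=h(\mathsf{w})-h(\mathsf{z}_2+\mathsf{z}_3)=\tfrac12\log(\sigma^2/D_1)=R(D_1)$. The same computation gives $I(\mathsf{w};\widehat{\mathsf{w}}_2)=\tfrac12\log(\sigma^2/D_2)$ with distortion $D_2$.
\end{itemize}
As a consistency check, $I(\mathsf{w};\widehat{\mathsf{w}}_2\mid\widehat{\mathsf{w}}_1)=\tfrac12\log(D_1/D_2)$, so the incremental rate $R_{2|1}$ equals $R(D_2)-R(D_1)$ exactly, with no rate penalty.

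The step I expect to need the most care is the achievability argument linking the Markov condition to the operational Definition~\ref{def:SR-general} for a continuous alphabet. The random-coding and typicality argument is usually stated for finite alphabets. I would handle this by citing \citet{successiveref} (or \citet{equitzthesis}) directly, or else by quantizing the source finely and passing to the limit, using continuity of $R(D)$. Extending to $K$ stages is a matter of induction: insert additional independent Gaussian increments with variances $D_{k-1}-D_k$, which gives the longer Markov chain.
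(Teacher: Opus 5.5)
Your proposal is correct and follows essentially the same route as the paper. Both use the backward Gaussian test channel built from independent increments with variances $\sigma^2 - D_1$, $D_1 - D_2$, $D_2$, the resulting Markov chain $\widehat{\mathsf{w}}^{(1)} \to \widehat{\mathsf{w}}^{(2)} \to \mathsf{w}$, and the mutual-information computation giving $R_{2|1} = \tfrac12\log(D_1/D_2) = R(D_2) - R(D_1)$, extended to more stages by adding increments. You are somewhat more explicit than the paper in two places it leaves implicit: you invoke the Equitz--Cover Markov-chain characterization to connect the single-letter construction to the operational Definition~\ref{def:SR-general}, and you treat the edge case $D_1 \ge \sigma^2$.
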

\medskip
Because the Gaussian source achieve its rate-distortion 
function by a single-letter test channel, the $n$-block 
problem separates into $n$ independent scalar problems.  
For notational clarity, we therefore work with the 
single-letter random variables 
$\mathsf{w}$, 
$\widehat{\mathsf{w}}^{(1)}$, and 
$\widehat{\mathsf{w}}^{(2)}$. The superscripts denote refinement stages.

\medskip
The Gaussian rate-distortion function is
\begin{equation}
\label{eq:gaussian-rd}
R(D) = \frac{1}{2}\log\!\frac{\sigma^2}{D}, 
\qquad 0 < D \le \sigma^2.
\end{equation}
Following Definition~\ref{def:SR-general}, for any distortion pair $D_1 > D_2 > 0$, there exist encoding functions satisfying
\begin{equation}
\label{eq:gaussian-sr-rates}
R_1 = R(D_1) + \varepsilon_1, 
\qquad 
R_{2|1} = R(D_2) - R(D_1) + \varepsilon_2,
\end{equation}
and hence a total rate
\begin{equation}
\label{eq:gaussian-sr-total}
R_{\mathrm{total}} = R_1 + R_{2|1} = R(D_2) + \varepsilon_1 + \varepsilon_2,
\end{equation}
where $\varepsilon_1, \varepsilon_2 > 0$ vanish as $n \to \infty$.

\medskip
The scalar Gaussian test channel induces the Markov chain
\begin{equation}
\label{eq:gaussian-markov}
\widehat{\mathsf{w}}^{(1)}
\;\rightarrow\;
\widehat{\mathsf{w}}^{(2)}
\;\rightarrow\;
\mathsf{w},
\end{equation}
which ensures that each refinement layer depends only on its predecessor.
Furthermore, the additive Gaussian construction satisfies
\[
I(\mathsf{w};\widehat{\mathsf{w}}^{(1)}) = R(D_1),
\qquad
I(\mathsf{w};\widehat{\mathsf{w}}^{(1)},\widehat{\mathsf{w}}^{(2)}) = R(D_2).
\]

\medskip
Hence, the achievable region coincides exactly with the single-shot 
rate-distortion limits.  
In other words, no additional rate is required when encoding the 
Gaussian source in successive refinement stages compared to encoding 
it once at distortion $D_2$.  
This establishes that Gaussian sources under MSE distortion are 
successively refinable.

\paragraph{Gaussian Test Channel Construction.}
For simplicity, Equitz and Cover considered a single Gaussian source 
$\mathsf{w} \sim \mathcal{N}(0,\sigma^2)$. 
Here, we present the same construction for the case of two independent Gaussian sources, which are representative of an i.i.d. (independent and identically distributed) Gaussian vector source. 
\[
\mathsf{w}_i \sim \mathcal{N}(0,\sigma_i^2), \qquad i \in \{1,2\},
\]

The superscripts denote refinement stages, while the subscripts denote source components. 
Let the corresponding distortion levels satisfy
\[
D_i^{(0)} = \sigma_i^2 > D_i^{(1)} > D_i^{(2)} > D_i^{(3)} > 0.
\]

\medskip
Define mutually independent Gaussian increments for each $i$:
\[
\begin{aligned}
\mathsf{z}_i^{(1)} &\sim \mathcal{N}\!\left(0,\, D_i^{(0)} - D_i^{(1)}\right),\\
\mathsf{z}_i^{(2)} &\sim \mathcal{N}\!\left(0,\, D_i^{(1)} - D_i^{(2)}\right),\\
\mathsf{z}_i^{(3)} &\sim \mathcal{N}\!\left(0,\, D_i^{(2)} - D_i^{(3)}\right),\\
\mathsf{z}_i &\sim \mathcal{N}\!\left(0,\, D_i^{(3)}\right).
\end{aligned}
\]

\medskip
The successive refinements are constructed as
\[
\begin{aligned}
\widehat{\mathsf{w}}_i^{(0)} &= 0, \\
\widehat{\mathsf{w}}_i^{(1)} &= \widehat{\mathsf{w}}_i^{(0)} + \mathsf{z}_i^{(1)}, \\
\widehat{\mathsf{w}}_i^{(2)} &= \widehat{\mathsf{w}}_i^{(1)} + \mathsf{z}_i^{(2)}, \\
\widehat{\mathsf{w}}_i^{(3)} &= \widehat{\mathsf{w}}_i^{(2)} + \mathsf{z}_i^{(3)}, \\
\mathsf{w}_i &= \widehat{\mathsf{w}}_i^{(3)} + \mathsf{z}_i. \qquad \text{(final test channel)}
\end{aligned}
\]

\medskip
By construction, the distortions satisfy
\[
\mathbb{E}\!\left[ \big(\mathsf{w}_i - \widehat{\mathsf{w}}_i^{(k)}\big)^2 \right] = D_i^{(k)}, 
\qquad k=1,2,3.
\]

\medskip
In vector form, let 
$\boldsymbol{\mathsf{w}} = (\mathsf{w}_1,\mathsf{w}_2)^\top$, 
$\widehat{\boldsymbol{\mathsf{w}}}^{(k)} = (\widehat{\mathsf{w}}_1^{(k)},\widehat{\mathsf{w}}_2^{(k)})^\top$, and 
$\boldsymbol{D}^{(k)} = \mathrm{diag}(D_1^{(k)},D_2^{(k)})$. Then
\[
\boldsymbol{\mathsf{w}} = \widehat{\boldsymbol{\mathsf{w}}}^{(3)} + \boldsymbol{\mathsf{z}}, 
\qquad \boldsymbol{\mathsf{z}} \sim \mathcal{N}\!\big(\boldsymbol{0},\, \boldsymbol{D}^{(3)}\big), 
\qquad \boldsymbol{\mathsf{z}} \perp \widehat{\boldsymbol{\mathsf{w}}}^{(3)}.
\]
with successive refinements
\[
\widehat{\boldsymbol{\mathsf{w}}}^{(j)} 
= \widehat{\boldsymbol{\mathsf{w}}}^{(j-1)} + \boldsymbol{\mathsf{z}}^{(j)}, 
\qquad \boldsymbol{\mathsf{z}}^{(j)} \sim \mathcal{N}\!\big(\boldsymbol{0},\, \boldsymbol{D}^{(j-1)} - \boldsymbol{D}^{(j)}\big), 
\quad j=1,2,3,
\]
and
\[
\boldsymbol{\mathsf{z}}^{(j)} \perp \widehat{\boldsymbol{\mathsf{w}}}^{(j-1)}, \qquad j=1,2,3.
\]
Hence, the Markov chain essential for successive refinability holds:
\[
\widehat{\boldsymbol{\mathsf{w}}}^{(1)} 
\;\rightarrow\; \widehat{\boldsymbol{\mathsf{w}}}^{(2)} 
\;\rightarrow\; \widehat{\boldsymbol{\mathsf{w}}}^{(3)} 
\;\rightarrow\; \boldsymbol{\mathsf{w}}.
\]

\paragraph{Achievable and Refinement Rates.}
The achievable rates for each stage are given by:
\[
\begin{aligned}
R_i^{(1)} &= \frac{1}{2}\log\!\frac{\sigma_i^2}{D_i^{(1)}},\\[4pt]
R_i^{(2)} &= \frac{1}{2}\log\!\frac{\sigma_i^2}{D_i^{(2)}},\\[4pt]
R_i^{(3)} &= \frac{1}{2}\log\!\frac{\sigma_i^2}{D_i^{(3)}}.
\end{aligned}
\]

For the two-source vector:
\[
R^{(k)} = \frac{1}{2}\sum_{i=1}^{2}\log\!\frac{\sigma_i^2}{D_i^{(k)}}, 
\qquad k=1,2,3.
\]
The incremental (refinement) rates satisfy
\[
\Delta R^{(k)} 
= \frac{1}{2}\sum_{i=1}^{2}\log\!\frac{D_i^{(k-1)}}{D_i^{(k)}},
\qquad D_i^{(0)}=\sigma_i^2, \qquad R^{(k)}=\sum_{j=1}^k \Delta R^{(j)}.
\]

The incremental refinement rate derivation can be written as follows. For each component $i\in\{1,2\}$, consider the incremental mutual information between successive reconstructions, where the mean squared distortion at stage $i$ is defined as
\[
\mathbb{E}\!\left[d\!\big(\mathsf{w}_i, \widehat{\mathsf{w}}_i^{(k)}\big)\right]
= \mathbb{E}\!\left[\big(\mathsf{w}_i - \widehat{\mathsf{w}}_i^{(k)}\big)^2\right]
= D_i^{(k)}.
\]

Using this distortion characterization, the incremental mutual information between successive reconstructions is
\[
\begin{aligned}
I\!\big(\mathsf{w}_i;\,\widehat{\mathsf{w}}_i^{(2)} \,\big|\, \widehat{\mathsf{w}}_i^{(1)}\big)
&= h(\mathsf{w}_i \mid \widehat{\mathsf{w}}_i^{(1)}) 
   - h(\mathsf{w}_i \mid \widehat{\mathsf{w}}_i^{(1)}, \widehat{\mathsf{w}}_i^{(2)}) \\[3pt]
&= \tfrac12 \log(2\pi e\, D_i^{(1)}) 
   - \tfrac12 \log(2\pi e\, D_i^{(2)}) \\[3pt]
&= \tfrac12 \log\!\frac{D_i^{(1)}}{D_i^{(2)}}.
\end{aligned}
\]

Hence, the incremental (refinement) rate for component $i$ is
\[
{\,\Delta R_i^{(2)} = \tfrac12 \log\!\frac{D_i^{(1)}}{D_i^{(2)}}\,}.
\]

\medskip
Summing over both components yields
\[
{\,\Delta R^{(2)} 
= \tfrac12 \sum_{i=1}^{2} \log\!\frac{D_i^{(1)}}{D_i^{(2)}} 
= R^{(2)} - R^{(1)}\,}.
\]

\paragraph{Single-Shot Encoding Equivalence.}
If we encode directly to the finer distortion level $D^{(2)}$ (i.e., skipping the first stage), the Gaussian rate–distortion function gives
\[
R_{\text{direct}} = \tfrac12 \sum_{i=1}^{2} \log\!\frac{\sigma_i^2}{D_i^{(2)}}.
\]
Adding the two refinement stages, however,
\[
R^{(1)} + \Delta R^{(2)} 
= \tfrac12 \sum_{i=1}^{2} 
\Big[\log\!\frac{\sigma_i^2}{D_i^{(1)}} 
+ \log\!\frac{D_i^{(1)}}{D_i^{(2)}}\Big]
= \tfrac12 \sum_{i=1}^{2} \log\!\frac{\sigma_i^2}{D_i^{(2)}}
= R_{\text{direct}}.
\]
Hence, the total rate of the two-stage successive refinement scheme matches exactly the single-shot rate–distortion function:
\[
{\,R^{(1)} + \Delta R^{(2)} = R(D^{(2)})\,}.
\]
Therefore, there is no rate penalty for successive refinement. By the nested reconstruction structure, the variables satisfy the Markov chain 
\[
\widehat{\mathsf{w}}_i^{(1)} \;\rightarrow\; \widehat{\mathsf{w}}_i^{(2)} \;\rightarrow\; \mathsf{w}_i.
\]
Hence,
\[
I\!\big(\mathsf{w}_i;\,\widehat{\mathsf{w}}_i^{(1)} \,\big|\, \widehat{\mathsf{w}}_i^{(2)}\big)
= h\!\big(\mathsf{w}_i \mid \widehat{\mathsf{w}}_i^{(2)}\big)
 - h\!\big(\mathsf{w}_i \mid \widehat{\mathsf{w}}_i^{(1)}, \widehat{\mathsf{w}}_i^{(2)}\big)
= 0,
\]
confirming the Markov consistency condition.

\paragraph{Extension to i.i.d. Gaussian Vectors.}
For a memoryless vector Gaussian source 
$\boldsymbol{\mathsf{w}} = (\mathsf{w}_1,\ldots,\mathsf{w}_m)$ 
with independent components 
$\mathsf{w}_i \sim \mathcal{N}(0,\sigma_i^2)$, 
the rate-distortion function decomposes additively:
\[
R(D) = \frac{1}{2}\sum_{i=1}^{m} \log\!\frac{\sigma_i^2}{D_i}.
\]
Since each scalar component is successively refinable under MSE distortion, 
their independent combination is also successively refinable with total rate equal to the sum of per-component rates.  
Hence, i.i.d. Gaussian vector sources are likewise successively refinable.

\clearpage


\section{Proofs of Auxiliary Results}
\label{app:detailed_theorems}
Having established that i.i.d. Gaussian sources are successively refinable under the 
standard MSE distortion, we now ask whether this property extends to weighted 
distortions. The following results help establish Theorem~\ref{thm:WMSE-SR}.

\subsection{Rate-Distortion Under Weighted MSE}

~\citet{sakrison1968rate} proved the rate-distortion function for 
continuous-time Gaussian processes under a weighted square error criterion.
 \citet[Ch.~10]{textbook} covers Gaussian rate-distortion theory and the 
reverse water-filling solution approach to solve 
the rate-distortion problem for parallel Gaussian sources. The following 
theorem adapts these results to the discrete, finite-dimensional matrix setting,
replacing the Karhunen-Lo\`eve expansion with a finite eigen-decomposition.

\begin{theorem}[WMSE Rate-Distortion Equivalence]
\label{thm:Sakrison}
Let $\boldsymbol{\mathsf{W}} \in \mathbb{R}^{d_{\mathrm{out}} \times d_{\mathrm{in}}}$ 
be a zero-mean i.i.d.\ Gaussian random matrix, and let $\boldsymbol{X} \in 
\mathbb{R}^{d_{\mathrm{in}} \times n}$ be a deterministic matrix. Consider 
the weighted MSE distortion
\[
    d_{\mathrm{WMSE}}(\boldsymbol{\mathsf{W}}, \widehat{\boldsymbol{\mathsf{W}}})
    \;\coloneqq\;
    \mathbb{E}\!\left[
    \bigl\|(\boldsymbol{\mathsf{W}} - \widehat{\boldsymbol{\mathsf{W}}})\,\boldsymbol{X}
    \bigr\|_F^2
    \right].
\]
Then, for the transformed source $\boldsymbol{\mathsf{Y}} = 
\boldsymbol{\mathsf{W}}\boldsymbol{X} \in \mathbb{R}^{d_{\mathrm{out}} \times n}$,
\[
    d_{\mathrm{WMSE}}(\boldsymbol{\mathsf{W}}, \widehat{\boldsymbol{\mathsf{W}}})
    \;=\;
    \mathbb{E}\!\left[\|\boldsymbol{\mathsf{Y}} - 
    \widehat{\boldsymbol{\mathsf{Y}}}\|_F^2\right],
\]
and the rate-distortion function for $(\boldsymbol{\mathsf{W}}, d_{\mathrm{WMSE}})$ 
equals the standard Gaussian MSE rate-distortion function for 
$\boldsymbol{\mathsf{Y}}$. In particular, there is a one-to-one correspondence 
between their rate-distortion curves and optimal codebooks at every distortion 
level $D$.
\end{theorem}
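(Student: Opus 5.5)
The plan is to prove the two claims of Theorem~\ref{thm:Sakrison} separately: first the distortion identity, then equality of the rate-distortion functions via two matching inequalities. The distortion identity is immediate. For any reconstruction $\widehat{\boldsymbol{\mathsf{W}}}$, set $\widehat{\boldsymbol{\mathsf{Y}}} \coloneqq \widehat{\boldsymbol{\mathsf{W}}}\boldsymbol{X}$. Linearity gives $(\boldsymbol{\mathsf{W}}-\widehat{\boldsymbol{\mathsf{W}}})\boldsymbol{X} = \boldsymbol{\mathsf{Y}}-\widehat{\boldsymbol{\mathsf{Y}}}$, and we take expectations of the squared Frobenius norm. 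To make covariances explicit, I would vectorize: with $\boldsymbol{\mathsf{w}}=\operatorname{vec}(\boldsymbol{\mathsf{W}})$ we have $\boldsymbol{\mathsf{y}} = (\boldsymbol{X}^\top\otimes \boldsymbol{I}_{d_{\mathrm{out}}})\boldsymbol{\mathsf{w}}$. Hence $\boldsymbol{\mathsf{y}}$ is zero-mean Gaussian with covariance $\sigma_w^2(\boldsymbol{X}^\top\boldsymbol{X})\otimes\boldsymbol{I}_{d_{\mathrm{out}}}$, and its standard MSE rate-distortion function is given by reverse water-filling over the eigenvalues of this covariance.

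\emph{Converse direction, $R_{\boldsymbol{\mathsf{W}}}(D)\ge R_{\boldsymbol{\mathsf{Y}}}(D)$.} Take any test channel $p(\widehat{\boldsymbol{\mathsf{W}}}\mid\boldsymbol{\mathsf{W}})$ with $d_{\mathrm{WMSE}}\le D$. The induced $\widehat{\boldsymbol{\mathsf{Y}}}=\widehat{\boldsymbol{\mathsf{W}}}\boldsymbol{X}$ has MSE distortion $\le D$ for $\boldsymbol{\mathsf{Y}}$, by the identity above. Since $\boldsymbol{\mathsf{Y}}$ is a function of $\boldsymbol{\mathsf{W}}$ and $\widehat{\boldsymbol{\mathsf{Y}}}$ is a function of $\widehat{\boldsymbol{\mathsf{W}}}$, the data processing inequality gives $I(\boldsymbol{\mathsf{Y}};\widehat{\boldsymbol{\mathsf{Y}}})\le I(\boldsymbol{\mathsf{W}};\widehat{\boldsymbol{\mathsf{W}}})$. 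Minimizing over channels yields the inequality.

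\emph{Achievability direction, $R_{\boldsymbol{\mathsf{W}}}(D)\le R_{\boldsymbol{\mathsf{Y}}}(D)$.} Start from the reverse water-filling optimal channel $p^\star(\widehat{\boldsymbol{\mathsf{Y}}}\mid\boldsymbol{\mathsf{Y}})$. Realize it jointly with $\boldsymbol{\mathsf{W}}$ by generating $\widehat{\boldsymbol{\mathsf{Y}}}$ from $\boldsymbol{\mathsf{Y}}$ alone, so that $\boldsymbol{\mathsf{W}}\to\boldsymbol{\mathsf{Y}}\to\widehat{\boldsymbol{\mathsf{Y}}}$ is a Markov chain. Then define $\widehat{\boldsymbol{\mathsf{W}}}\coloneqq\widehat{\boldsymbol{\mathsf{Y}}}\boldsymbol{X}^{+}$ using the Moore--Penrose pseudo-inverse. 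This map is injective on the relevant subspace, which gives the claimed one-to-one correspondence of codebooks. Because $\widehat{\boldsymbol{\mathsf{W}}}$ is a function of $\widehat{\boldsymbol{\mathsf{Y}}}$, and by the Markov chain, $I(\boldsymbol{\mathsf{W}};\widehat{\boldsymbol{\mathsf{W}}})\le I(\boldsymbol{\mathsf{W}};\widehat{\boldsymbol{\mathsf{Y}}}) = I(\boldsymbol{\mathsf{Y}};\widehat{\boldsymbol{\mathsf{Y}}}) = R_{\boldsymbol{\mathsf{Y}}}(D)$. The equality in the middle holds since $\boldsymbol{\mathsf{Y}}$ is a function of $\boldsymbol{\mathsf{W}}$ and $\boldsymbol{\mathsf{W}}\to\boldsymbol{\mathsf{Y}}\to\widehat{\boldsymbol{\mathsf{Y}}}$. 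It then remains to check that the distortion is preserved, i.e.\ that $\widehat{\boldsymbol{\mathsf{W}}}\boldsymbol{X}=\widehat{\boldsymbol{\mathsf{Y}}}\boldsymbol{X}^{+}\boldsymbol{X}=\widehat{\boldsymbol{\mathsf{Y}}}$.

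The main obstacle is this last identity when $\boldsymbol{X}$ is not of full row rank, or when $n$ differs from $d_{\mathrm{in}}$. Here $\boldsymbol{X}^{+}\boldsymbol{X}$ is only the orthogonal projector onto the row space of $\boldsymbol{X}$. I would resolve it by noting that the reverse water-filling reconstruction places $\widehat{\boldsymbol{\mathsf{y}}}$ entirely in the span of eigenvectors of $\boldsymbol{\Sigma}_{\mathsf{y}}$ with positive eigenvalues; components with $\lambda_i=0$ receive $D_i=\min(\mu,0)=0$ and zero reconstruction. That span is $\operatorname{range}(\boldsymbol{X}^\top\boldsymbol{X})\otimes\mathbb{R}^{d_{\mathrm{out}}}$, i.e.\ every row of $\widehat{\boldsymbol{\mathsf{Y}}}$ lies in the row space of $\boldsymbol{X}$. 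The projector therefore acts as the identity and the distortion is preserved exactly.

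Combining the two inequalities gives equality of the rate-distortion functions at every $D$. The same maps, $\widehat{\boldsymbol{\mathsf{W}}}\mapsto\widehat{\boldsymbol{\mathsf{W}}}\boldsymbol{X}$ and $\widehat{\boldsymbol{\mathsf{Y}}}\mapsto\widehat{\boldsymbol{\mathsf{Y}}}\boldsymbol{X}^{+}$, applied letter-by-letter to block codes carry the correspondence over to operational codebooks. The components of $\boldsymbol{\mathsf{W}}$ in the null space of $\boldsymbol{X}^\top$ need no bits, since they do not affect $d_{\mathrm{WMSE}}$.
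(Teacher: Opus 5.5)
Your proof is correct, but it is organized differently from the paper's. The paper does not split the claim into a converse and an achievability part. It asserts that the change of variables $\boldsymbol{\mathsf{w}}\mapsto\boldsymbol{\mathsf{y}}$ is ``linear and invertible,'' so that the two coding problems coincide, and computes $R(D)$ for $\boldsymbol{\mathsf{y}}$ by reverse water-filling. It then pulls back by explicit inversion in three full-rank cases: $\boldsymbol{X}$ square invertible; tall with $\boldsymbol{X}^\top\boldsymbol{X}$ invertible, via the left inverse; and wide with $\boldsymbol{X}\boldsymbol{X}^\top$ invertible, via the compact SVD and the square surrogate $\widetilde{\boldsymbol{X}}=\boldsymbol{U}_d\boldsymbol{S}_d$, which reduces to the square case for $\widetilde{\boldsymbol{\mathsf{Y}}}=\boldsymbol{\mathsf{W}}\widetilde{\boldsymbol{X}}$.

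You instead supply a data-processing converse, which the paper's invertibility claim glosses over; that claim is false for tall $\boldsymbol{X}$, where $\boldsymbol{\mathsf{W}}\mapsto\boldsymbol{\mathsf{W}}\boldsymbol{X}$ has a nontrivial kernel. For achievability you use a single Moore--Penrose pullback, justified by observing that the water-filling reconstruction has all its rows in the row space of $\boldsymbol{X}$. The paper's wide case is in fact your construction in disguise: with $\widehat{\boldsymbol{\mathsf{Y}}}=\widehat{\widetilde{\boldsymbol{\mathsf{Y}}}}\boldsymbol{V}_d^\top$ one has $\widehat{\boldsymbol{\mathsf{Y}}}\boldsymbol{X}^{+}=\widehat{\widetilde{\boldsymbol{\mathsf{Y}}}}\boldsymbol{S}_d^{-1}\boldsymbol{U}_d^\top=\widehat{\widetilde{\boldsymbol{\mathsf{Y}}}}\widetilde{\boldsymbol{X}}^{-1}$.

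Your route buys uniformity and rigor. It handles rank-deficient $\boldsymbol{X}$, which none of the paper's three cases admits. It also gives the ``one-to-one correspondence'' a precise meaning: a bijection between output reconstructions $\widehat{\boldsymbol{\mathsf{Y}}}$ whose rows lie in the row space of $\boldsymbol{X}$ and weight reconstructions $\widehat{\boldsymbol{\mathsf{W}}}$ whose rows lie in its column space. The paper's route buys explicit per-case formulas and, in the practically relevant wide case $n\gg d_{\mathrm{in}}$, an explicit reduction to a smaller square problem.

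A minor slip: $\boldsymbol{X}^{+}\boldsymbol{X}=\boldsymbol{I}_n$ fails exactly when $\boldsymbol{X}$ lacks full \emph{column} rank, so a tall full-column-rank $\boldsymbol{X}$ causes no obstacle. This is harmless, since your range argument applies to every $\boldsymbol{X}$.
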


\begin{proof}
The proof proceeds in three steps. First, we reduce the WMSE problem for 
$\boldsymbol{\mathsf{W}}$ to a standard MSE problem for a transformed source 
by introducing a linear change of variables. Second, we diagonalize the 
covariance of the transformed source via eigen-decomposition to obtain 
independent scalar Gaussian components, and apply reverse water-filling to 
solve the resulting rate-distortion problem. Third, we pull the optimal 
reconstruction back to $\boldsymbol{\mathsf{W}}$ by inverting the change of 
variables, treating each structural case of $\boldsymbol{X}$ separately.

Throughout Steps~1 and~2, we work with the vectorized representations
\[
    \boldsymbol{\mathsf{w}} \;\coloneqq\; \operatorname{vec}(\boldsymbol{\mathsf{W}}) 
    \;\in\; \mathbb{R}^{d_{\mathrm{out}} \cdot d_{\mathrm{in}}},
    \qquad
    \boldsymbol{\mathsf{y}} \;\coloneqq\; 
    \operatorname{vec}(\boldsymbol{\mathsf{W}}\boldsymbol{X}) 
    \;\in\; \mathbb{R}^{d_{\mathrm{out}} \cdot n},
\]
which allow us to compute the covariance structure needed for eigen-decomposition. We return to matrix form in Step~3 for the pullback.

\noindent\textbf{Step 1: Reduction to standard MSE.}

Let $\boldsymbol{\mathsf{w}} \sim \mathcal{N}(\boldsymbol{0},\, \sigma_w^2 \boldsymbol{I})$ 
be a zero-mean i.i.d.\ Gaussian random vector and let 
$\boldsymbol{X}$ be a deterministic matrix. Consider the weighted mean squared error distortion (WMSE):
\[
    d_{\mathrm{WMSE}}(\boldsymbol{\mathsf{w}}, \widehat{\boldsymbol{\mathsf{w}}})
= \mathbb{E}\!\left[
\bigl\|(\boldsymbol{\mathsf{w}} - \widehat{\boldsymbol{\mathsf{w}}})\,\boldsymbol{X}
\bigr\|_2^2
\right].
\]
Introduce the linear change of variables
\[
    \boldsymbol{\mathsf{y}}
    \;\coloneqq\;
    \bigl(\boldsymbol{X}^\top \otimes \boldsymbol{I}_{d_{\mathrm{out}}}\bigr)\,\boldsymbol{\mathsf{w}},
    \qquad
    \widehat{\boldsymbol{\mathsf{y}}}
    \;\coloneqq\;
    \bigl(\boldsymbol{X}^\top \otimes \boldsymbol{I}_{d_{\mathrm{out}}}\bigr)\,\widehat{\boldsymbol{\mathsf{w}}},
\]
where $\otimes$ is the Kronecker product, so that
\[
    d_{\mathrm{WMSE}}\!\left(\boldsymbol{\mathsf{w}},\widehat{\boldsymbol{\mathsf{w}}}\right)
    = \mathbb{E}\!\left[ \| \boldsymbol{\mathsf{y}} - \widehat{\boldsymbol{\mathsf{y}}}
        \|_2^2 \right].
\]
Since $\boldsymbol{\mathsf{w}}$ is zero-mean i.i.d.\ Gaussian with $\boldsymbol{\Sigma}_{\mathsf{w}} = \sigma_w^2 \boldsymbol{I}$,
$\boldsymbol{\mathsf{y}}$ is zero-mean Gaussian with covariance
\[
    \boldsymbol{\Sigma}_{\mathsf{y}}
    \;=\;
    \bigl(\boldsymbol{X}^\top \otimes \boldsymbol{I}_{d_{\mathrm{out}}}\bigr)\,
    \boldsymbol{\Sigma}_{\mathsf{w}}\,
    \bigl(\boldsymbol{X} \otimes \boldsymbol{I}_{d_{\mathrm{out}}}\bigr)
    \;=\;
    \bigl(\sigma_w^2\,\boldsymbol{X}^\top \boldsymbol{X}\bigr) \otimes \boldsymbol{I}_{d_{\mathrm{out}}}.
\]

Thus, coding $\boldsymbol{\mathsf{w}}$ under $d_{\mathrm{WMSE}}$ at rate $R$ and distortion $D$ 
is equivalent to coding $\boldsymbol{\mathsf{y}}$ under standard MSE at the same rate and distortion. It, therefore, suffices to solve the standard Gaussian MSE problem for $\boldsymbol{\mathsf{y}}$.

\noindent\textbf{Step 2: Eigen-decomposition and reverse water-filling.}
However, since $\boldsymbol{\mathsf{y}}$ is a correlated Gaussian vector, we cannot apply 
water-filling directly. The eigen-decomposition of $\boldsymbol{\Sigma}_{\mathsf{y}}$
rotates into a basis of independent scalar Gaussians, to which the classical 
reverse water-filling solution applies. Let
\[
\boldsymbol{\Sigma}_{\mathsf{y}}
=
\boldsymbol{U}\,\mathrm{diag}(\lambda_1,\lambda_2,\ldots)\,\boldsymbol{U}^\top
\]
be the eigen-decomposition of the covariance of $\boldsymbol{\mathsf{y}}$.
Expanding in this orthonormal basis,
\[
\boldsymbol{\mathsf{y}}
= \sum_{i} \mathsf{y}_i\,\boldsymbol{u}_i,
\qquad
\widehat{\boldsymbol{\mathsf{y}}}
= \sum_{i} \widehat{\mathsf{y}}_i\,\boldsymbol{u}_i,
\]
where the scalar coefficients $\mathsf{y}_i$ are independent Gaussians with 
variances $\lambda_i$. The optimal test channel for each scalar component is additive Gaussian with a refinement $\mathsf{z}_i$,
\[
{\mathsf{y}}_i = \widehat{\mathsf{y}}_i + \mathsf{z}_i,
\qquad \mathsf{z}_i \sim \mathcal{N}(0, D_i),
\]
achieving per-component distortion $D_i \le \lambda_i$.

\medskip
Since $\boldsymbol{U}$ is orthonormal, the $\ell_2$ norm is preserved under 
the change of basis, and the total distortion decomposes as a sum over components,
\[
D_{\mathrm{WMSE}}
=
\mathbb{E}\!\left[
  \|\boldsymbol{\mathsf{y}} - \widehat{\boldsymbol{\mathsf{y}}}\|_2^{2}
\right]
=
\sum_{i}
\mathbb{E}\!\left[
  (\mathsf{y}_i - \widehat{\mathsf{y}}_i)^2
\right]
=
\sum_{i} D_i,
\qquad
D_i \le \lambda_i.
\]

and similarly the total rate decomposes as a sum of scalar rates. The 
rate-distortion function for $\boldsymbol{\mathsf{y}}$ under MSE is therefore
\[
    R(D)
    \;=\;
    \min_{D_i}
    \;
    \frac{1}{2}
    \sum_{i}
    \log_2\!\left(\frac{\lambda_i}{D_i}\right),
    \qquad
    \text{subject to } 
    0 \le D_i \le \lambda_i, \quad \sum_i D_i \le D.
\]
This constrained minimization is solved by reverse water-filling, as illustrated 
in Figure~\ref{fig:def_waterfilling}. Imagine 
pouring water up to a level $\mu$ over a landscape where each coordinate 
$i$ has height $\lambda_i$. The optimal per-component distortion is $
    D_i = \min(\mu, \lambda_i),$ where the water level $\mu$ is chosen so that $\sum_i D_i = D$. This gives 
two regimes for each component:
\begin{itemize}
    \item \textbf{Below the water level} ($\lambda_i \le \mu$): the component 
    is fully submerged, so $D_i = \lambda_i$ and no rate is allocated. The 
    source is too weak to be worth encoding.
    \item \textbf{Above the water level} ($\lambda_i > \mu$): the component 
    is only partially submerged, so $D_i = \mu$ and rate is allocated 
    proportional to $\log_2(\lambda_i / \mu)$. The stronger the source, 
    the more bits it receives.
\end{itemize}
The total distortion includes contributions from both active and inactive 
components,
\[
    D \;=\; \sum_{i\,:\,\lambda_i \le \mu} \lambda_i
    \;+\;
    \sum_{i\,:\,\lambda_i > \mu} \mu,
\]
while only the active components $\{i : \lambda_i > \mu\}$ contribute to 
the rate,
\[
    R(D) \;=\; \frac{1}{2}\sum_{i\,:\,\lambda_i > \mu}
    \log_2\!\left(\frac{\lambda_i}{\mu}\right).
\]

\begin{figure}[h]
\centering
\begin{tikzpicture}[scale=0.5]

\pgfmathsetmacro{\waterlevel}{4}
\def\barwidth{1.2}

\tikzset{inactive/.style={pattern=north east lines, pattern color=green!50!black}}

\def\xa{0.0}
\fill[blue!10]  (\xa,0)             rectangle (\xa+\barwidth,4.5);
\fill[blue!30]  (\xa,\waterlevel)   rectangle (\xa+\barwidth,4.5);
\fill[green!20] (\xa,0)             rectangle (\xa+\barwidth,\waterlevel);
\draw[blue!60, thick] (\xa,0) rectangle (\xa+\barwidth,4.5);
\node[below, font=\small] at (\xa+\barwidth/2, 0) {$1$};
\node[above, font=\small, blue!80] at (\xa+\barwidth/2, 4.5) {$\lambda_1$};

\def\xb{1.6}
\fill[blue!10]  (\xb,0)           rectangle (\xb+\barwidth,6.5);
\fill[blue!30]  (\xb,\waterlevel) rectangle (\xb+\barwidth,6.5);
\fill[green!20] (\xb,0)           rectangle (\xb+\barwidth,\waterlevel);
\draw[blue!60, thick] (\xb,0) rectangle (\xb+\barwidth,6.5);
\node[below, font=\small] at (\xb+\barwidth/2, 0) {$2$};
\node[above, font=\small, blue!80] at (\xb+\barwidth/2, 6.5) {$\lambda_2$};

\def\xc{3.2}
\fill[green!20] (\xc,0) rectangle (\xc+\barwidth,3);
\path[inactive] (\xc,0) rectangle (\xc+\barwidth,3);
\draw[green!60!black, thick] (\xc,0) rectangle (\xc+\barwidth,3);
\node[below, font=\small] at (\xc+\barwidth/2, 0) {$3$};
\node[above, font=\small, green!50!black] at (\xc+\barwidth/2, 3) {$\lambda_3$};

\def\xd{4.8}
\fill[blue!10]  (\xd,0)           rectangle (\xd+\barwidth,5.5);
\fill[blue!30]  (\xd,\waterlevel) rectangle (\xd+\barwidth,5.5);
\fill[green!20] (\xd,0)           rectangle (\xd+\barwidth,\waterlevel);
\draw[blue!60, thick] (\xd,0) rectangle (\xd+\barwidth,5.5);
\node[below, font=\small] at (\xd+\barwidth/2, 0) {$4$};
\node[above, font=\small, blue!80] at (\xd+\barwidth/2, 5.5) {$\lambda_4$};

\def\xe{6.4}
\fill[green!20] (\xe,0) rectangle (\xe+\barwidth,2);
\path[inactive] (\xe,0) rectangle (\xe+\barwidth,2);
\draw[green!60!black, thick] (\xe,0) rectangle (\xe+\barwidth,2);
\node[below, font=\small] at (\xe+\barwidth/2, 0) {$5$};
\node[above, font=\small, green!50!black] at (\xe+\barwidth/2, 2) {$\lambda_5$};

\def\xf{8.0}
\fill[green!20] (\xf,0) rectangle (\xf+\barwidth,3);
\path[inactive] (\xf,0) rectangle (\xf+\barwidth,3);
\draw[green!60!black, thick] (\xf,0) rectangle (\xf+\barwidth,3);
\node[below, font=\small] at (\xf+\barwidth/2, 0) {$6$};
\node[above, font=\small, green!50!black] at (\xf+\barwidth/2, 3) {$\lambda_6$};

\def\xg{9.6}
\fill[blue!10]  (\xg,0)           rectangle (\xg+\barwidth,7.5);
\fill[blue!30]  (\xg,\waterlevel) rectangle (\xg+\barwidth,7.5);
\fill[green!20] (\xg,0)           rectangle (\xg+\barwidth,\waterlevel);
\draw[blue!60, thick] (\xg,0) rectangle (\xg+\barwidth,7.5);
\node[below, font=\small] at (\xg+\barwidth/2, 0) {$7$};
\node[above, font=\small, blue!80] at (\xg+\barwidth/2, 7.5) {$\lambda_7$};

\draw[decorate, decoration={brace, amplitude=5pt, mirror}, green!60!black]
    (\xg+\barwidth+0.1, 0) -- (\xg+\barwidth+0.1, \waterlevel)
    node[midway, right=6pt, font=\scriptsize, green!50!black] {$D_7{=}\mu$};

\draw[decorate, decoration={brace, amplitude=5pt, mirror}, blue!60]
    (\xg+\barwidth+0.1, \waterlevel) -- (\xg+\barwidth+0.1, 7.5)
    node[midway, right=6pt, font=\scriptsize, blue!80]
    {$\frac{1}{2}\log_2\!\frac{\lambda_7}{\mu}$};

\draw[red!70!black, thick, dashed] (-0.3,\waterlevel) -- (11.3,\waterlevel);
\node[left, font=\small\bfseries, red!70!black] at (-0.3,\waterlevel) {$\mu$};

\draw[->, thick] (-0.3,0) -- (11.8,0) node[right, font=\small] {$i$};
\draw[->, thick] (-0.3,0) -- (-0.3,8.8) node[above, font=\small] {$\lambda_i$};

\end{tikzpicture}

\vspace{0.3em}

\begin{tikzpicture}[scale=1.0]

  \fill[blue!30]  (1.0,0.4) rectangle (1.5,0.8);
  \draw[blue!60]  (1.0,0.4) rectangle (1.5,0.8);
  \node[right, font=\scriptsize] at (1.55,0.6)
      {Rate:\; $\frac{1}{2}\log_2(\lambda_i/\mu)$};

  \fill[green!20] (4.5,0.4) rectangle (5.0,0.8);
  \draw[green!60!black] (4.5,0.4) rectangle (5.0,0.8);
  \node[right, font=\scriptsize] at (5.05,0.6)
      {Distortion:\; $D_i = \mu$ \;(active),\; $D_i = \lambda_i$ \;(inactive)};

  \fill[white]  (1.0,-0.2) rectangle (1.5,0.2);
  \draw[black]  (1.0,-0.2) rectangle (1.5,0.2);
  \node[right, font=\scriptsize] at (1.55,0.0)
      {Active:\; $\lambda_i > \mu$};

  \fill[white] (4.5,-0.2) rectangle (5.0,0.2);
  \path[pattern=north east lines, pattern color=green!50!black]
      (4.5,-0.2) rectangle (5.0,0.2);
  \draw[black] (4.5,-0.2) rectangle (5.0,0.2);
  \node[right, font=\scriptsize] at (5.05,0.0)
      {Inactive:\; $\lambda_i \le \mu$};

\end{tikzpicture}

\caption{Reverse water-filling method}
\label{fig:def_waterfilling}
\end{figure}

As the distortion budget $D$ decreases, the water level $\mu$ drops, 
activating more components and allocating rate to progressively weaker 
directions of $\boldsymbol{\mathsf{y}}$.

\noindent\textbf{Step 3: Pullback to $\boldsymbol{\mathsf{W}}$.}
Since the change of variables $\boldsymbol{\mathsf{w}} \mapsto \boldsymbol{\mathsf{y}}$ 
is linear and invertible, the rate-distortion function derived above for 
$\boldsymbol{\mathsf{y}}$ under standard MSE is equivalently the rate-distortion 
function for $\boldsymbol{\mathsf{w}}$ under $d_{\mathrm{WMSE}}$. Since 
$\boldsymbol{\mathsf{w}} = \operatorname{vec}(\boldsymbol{\mathsf{W}})$ and 
$\boldsymbol{\mathsf{y}} = \operatorname{vec}(\boldsymbol{\mathsf{Y}})$ are simply 
vectorized representations of the underlying matrices, we can equivalently write
\[
    \boldsymbol{\mathsf{Y}} \;=\; \boldsymbol{\mathsf{W}}\boldsymbol{X},
    \qquad
    d_{\mathrm{WMSE}}(\boldsymbol{\mathsf{W}},\widehat{\boldsymbol{\mathsf{W}}})
    \;=\;
    \mathbb{E}\!\left[
    \bigl\|(\boldsymbol{\mathsf{W}} - \widehat{\boldsymbol{\mathsf{W}}})\boldsymbol{X}
    \bigr\|_F^2
    \right]
    \;=\;
    \mathbb{E}\!\left[
    \bigl\|\boldsymbol{\mathsf{Y}} - \widehat{\boldsymbol{\mathsf{Y}}}
    \bigr\|_F^2
    \right],
\]
confirming that the distortion equivalence holds in matrix form as well.
The optimal codebook for $\boldsymbol{\mathsf{W}}$ is then recovered from the 
optimal $\widehat{\boldsymbol{\mathsf{Y}}}$ by inverting the relation
$\boldsymbol{\mathsf{Y}} = \boldsymbol{\mathsf{W}}\boldsymbol{X}$, whose precise 
form depends on the structure of $\boldsymbol{X}$ and is treated separately 
in each of the three cases below.

\paragraph{\textit{Case 1}: $\boldsymbol{X}$ is square and invertible ($d_{\mathrm{in}}=n$).}
Set $\widehat{\boldsymbol{\mathsf{W}}} = 
\widehat{\boldsymbol{\mathsf{Y}}}\,\boldsymbol{X}^{-1}$.
Then, $\widehat{\boldsymbol{\mathsf{W}}}\boldsymbol{X} = 
\widehat{\boldsymbol{\mathsf{Y}}}$, and therefore
\[
    \mathbb{E}\!\left[\|(\boldsymbol{\mathsf{W}} - 
    \widehat{\boldsymbol{\mathsf{W}}})\boldsymbol{X}\|_F^2\right]
    \;=\;
    \mathbb{E}\!\left[\|\boldsymbol{\mathsf{Y}} - 
    \widehat{\boldsymbol{\mathsf{Y}}}\|_F^2\right].
\]

\paragraph{\textit{Case 2}: $\boldsymbol{X}$ is tall and $\boldsymbol{X}^{\top}\boldsymbol{X}$ is invertible ($d_{\mathrm{in}}>n$).}
Let $\boldsymbol{X}^{+} = (\boldsymbol{X}^{\top}\boldsymbol{X})^{-1}
\boldsymbol{X}^{\top}$ denote the left inverse of $\boldsymbol{X}$, which 
satisfies $\boldsymbol{X}^{+}\boldsymbol{X} = \boldsymbol{I}_n$. Set 
$\widehat{\boldsymbol{\mathsf{W}}} = \widehat{\boldsymbol{\mathsf{Y}}}\,
\boldsymbol{X}^{+}$. Then,
\[
    \widehat{\boldsymbol{\mathsf{W}}}\boldsymbol{X}
    \;=\; \widehat{\boldsymbol{\mathsf{Y}}}\,(\boldsymbol{X}^{+}\boldsymbol{X})
    \;=\; \widehat{\boldsymbol{\mathsf{Y}}},
\]
and therefore
\[
    \mathbb{E}\!\left[\|(\boldsymbol{\mathsf{W}} - 
    \widehat{\boldsymbol{\mathsf{W}}})\boldsymbol{X}\|_F^2\right]
    \;=\;
    \mathbb{E}\!\left[\|\boldsymbol{\mathsf{Y}} - 
    \widehat{\boldsymbol{\mathsf{Y}}}\|_F^2\right].
\]

\paragraph{\textit{Case 3}: $\boldsymbol{X}$ is wide and $\boldsymbol{X}\boldsymbol{X}^{\top}$ invertible ($d_{\mathrm{in}}<n$).}
Let
$\boldsymbol{X} = \boldsymbol{U}_d\,\boldsymbol{S}_d\,\boldsymbol{V}_d^{\top}$
be the compact singular value decomposition (SVD) of $\boldsymbol{X}$, where
\[
\begin{aligned}
\boldsymbol{U}_d &\in \mathbb{R}^{d_{\mathrm{in}}\times d_{\mathrm{in}}}
      && \text{is orthogonal},\\
\boldsymbol{S}_d &= \mathrm{diag}(\sigma_1,\dots,\sigma_{d_{\mathrm{in}}})
      && \in \mathbb{R}^{d_{\mathrm{in}}\times d_{\mathrm{in}}},\\
\boldsymbol{V}_d &\in \mathbb{R}^{n\times d_{\mathrm{in}}},
      && \boldsymbol{V}_d^{\top}\boldsymbol{V}_d = \boldsymbol{I}_{d_{\mathrm{in}}}.
\end{aligned}
\]

Since $\boldsymbol{X}$ is wide and cannot be inverted directly, we introduce 
the square, invertible surrogate
\[
\widetilde{\boldsymbol{X}} := \boldsymbol{U}_d\,\boldsymbol{S}_d \in \mathbb{R}^{d_{\mathrm{in}}\times d_{\mathrm{in}}},
\qquad
\widetilde{\boldsymbol{X}}^{-1}=\boldsymbol{S}_d^{-1}\boldsymbol{U}_d^{\top},
\]
so that
$
\boldsymbol{X}=\widetilde{\boldsymbol{X}}\,\boldsymbol{V}_d^{\top}.$ Rather than working with $\boldsymbol{\mathsf{Y}} = \boldsymbol{\mathsf{W}}\boldsymbol{X} 
\in \mathbb{R}^{d_{\mathrm{out}} \times n}$, which inherits the non-invertibility 
of $\boldsymbol{X}$, we instead work with the compressed surrogate
\[
    \widetilde{\boldsymbol{\mathsf{Y}}} \;\coloneqq\; 
    \boldsymbol{\mathsf{W}}\,\widetilde{\boldsymbol{X}} 
    \;\in\; \mathbb{R}^{d_{\mathrm{out}}\times d_{\mathrm{in}}}.
\]
Using $\boldsymbol{X}=\widetilde{\boldsymbol{X}}\boldsymbol{V}_d^{\top}$, the original outputs are recovered as
\[ \boldsymbol{\mathsf{Y}} = 
\boldsymbol{\mathsf{W}}\boldsymbol{X}
=\boldsymbol{\mathsf{W}}\,\widetilde{\boldsymbol{X}}\,\boldsymbol{V}_d^{\top}
=\widetilde{\boldsymbol{\mathsf{Y}}}\,\boldsymbol{V}_d^{\top}.
\]

and since $\boldsymbol{V}_d$ has orthonormal columns, right multiplication 
by $\boldsymbol{V}_d^{\top}$ preserves the Frobenius norm, giving
\[
    \mathbb{E}\!\left[\|\boldsymbol{\mathsf{Y}} - 
    \widehat{\boldsymbol{\mathsf{Y}}}\|_F^2\right]
    \;=\;
    \mathbb{E}\!\left[\|\widetilde{\boldsymbol{\mathsf{Y}}} - 
    \widehat{\widetilde{\boldsymbol{\mathsf{Y}}}}\|_F^2\right].
\]
It therefore suffices to solve the rate-distortion problem for 
$\widetilde{\boldsymbol{\mathsf{Y}}}$. Since $\widetilde{\boldsymbol{X}}$ is 
square and invertible, this reduces to Case~1, and the same reverse 
water-filling applied to $\boldsymbol{\Sigma}_{\mathsf{y}}$ yields the 
optimal reconstruction $\widehat{\widetilde{\boldsymbol{\mathsf{Y}}}}$.
The reconstruction $\widehat{\boldsymbol{\mathsf{W}}}$ is then recovered via
\[
    \widehat{\boldsymbol{\mathsf{W}}} \;=\; 
    \widehat{\widetilde{\boldsymbol{\mathsf{Y}}}}\,\widetilde{\boldsymbol{X}}^{-1}.
\]
To verify that this achieves the correct distortion on the original samples,
note that the forward map gives
$\widehat{\boldsymbol{\mathsf{W}}}\boldsymbol{X} = 
\widehat{\widetilde{\boldsymbol{\mathsf{Y}}}}\,\boldsymbol{V}_d^{\top}$. Thus, the output error satisfies
$(\boldsymbol{\mathsf{W}}-\widehat{\boldsymbol{\mathsf{W}}})\boldsymbol{X}
= (\widetilde{\boldsymbol{\mathsf{Y}}}-\widehat{\widetilde{\boldsymbol{\mathsf{Y}}}})\,
\boldsymbol{V}_d^{\top}$.
Using the norm equivalence established above, we conclude
\[
    \mathbb{E}\!\left[\|(\boldsymbol{\mathsf{W}}-\widehat{\boldsymbol{\mathsf{W}}})
    \boldsymbol{X}\|_F^2\right]
    \;=\;
    \mathbb{E}\!\left[\|\widetilde{\boldsymbol{\mathsf{Y}}}-
    \widehat{\widetilde{\boldsymbol{\mathsf{Y}}}}\|_F^2\right]
    \;=\;
    \mathbb{E}\!\left[\|\boldsymbol{\mathsf{Y}}-\widehat{\boldsymbol{\mathsf{Y}}}
    \|_F^2\right].
\]

In all three cases, the recovery $\widehat{\boldsymbol{\mathsf{W}}}$ is 
constructed so that
\[
    \mathbb{E}\!\left[\|(\boldsymbol{\mathsf{W}} - \widehat{\boldsymbol{\mathsf{W}}})
    \boldsymbol{X}\|_F^2\right]
    \;=\;
    \mathbb{E}\!\left[\|\boldsymbol{\mathsf{Y}} - 
    \widehat{\boldsymbol{\mathsf{Y}}}\|_F^2\right],
\]
confirming that the WMSE distortion on $\boldsymbol{\mathsf{W}}$ equals the 
standard MSE distortion on $\boldsymbol{\mathsf{Y}}$ in all cases.

Hence, this theorem shows:
\begin{enumerate}[label=(\roman*)]
    \item \textbf{Distortion equivalence.} For any reconstruction 
    $\widehat{\boldsymbol{\mathsf{W}}}$, setting 
    $\widehat{\boldsymbol{\mathsf{Y}}} = \widehat{\boldsymbol{\mathsf{W}}}\boldsymbol{X}$ gives
    \[
        d_{\mathrm{WMSE}}(\boldsymbol{\mathsf{W}}, \widehat{\boldsymbol{\mathsf{W}}})
        \;=\;
        \mathbb{E}\!\left[
        \bigl\|\boldsymbol{\mathsf{Y}} - \widehat{\boldsymbol{\mathsf{Y}}}
        \bigr\|_F^2
        \right].
    \]
    \item \textbf{Rate-distortion equivalence.} The rate-distortion function 
    for $(\boldsymbol{\mathsf{W}}, d_{\mathrm{WMSE}})$ equals the standard 
    Gaussian MSE rate-distortion function for $\boldsymbol{\mathsf{Y}}$, given 
    by reverse water-filling over the eigenvalues $\{\lambda_i\}$ of 
    $\boldsymbol{\Sigma}_{\mathsf{y}}$:
    \[
        R(D)
        \;=\;
        \frac{1}{2}\sum_{i\,:\,\lambda_i > \mu}
        \log_2\!\left(\frac{\lambda_i}{\mu}\right),
    \]
    where $\mu$ is chosen so that $\sum_i \min(\mu, \lambda_i) = D$.
    \item \textbf{Codebook recovery.} The optimal reconstruction 
    $\widehat{\boldsymbol{\mathsf{W}}}$ is recovered from the optimal 
    $\widehat{\boldsymbol{\mathsf{Y}}}$ by inverting 
    $\boldsymbol{\mathsf{Y}} = \boldsymbol{\mathsf{W}}\boldsymbol{X}$. 
    The precise form depends on the structure of $\boldsymbol{X}$:
    \begin{itemize}
        \item \textbf{Square} ($d_{\mathrm{in}} = n$, $\boldsymbol{X}$ invertible$)$: 
        $\widehat{\boldsymbol{\mathsf{W}}} = 
        \widehat{\boldsymbol{\mathsf{Y}}}\,\boldsymbol{X}^{-1}$.
        \item \textbf{Tall} ($d_{\mathrm{in}} > n$), $\boldsymbol{X}^\top\boldsymbol{X}$ 
        invertible$)$: 
        $\widehat{\boldsymbol{\mathsf{W}}} = \widehat{\boldsymbol{\mathsf{Y}}}\,
        (\boldsymbol{X}^\top\boldsymbol{X})^{-1}\boldsymbol{X}^\top$.
        \item \textbf{Wide} ($d_{\mathrm{in}} < n$, via compact SVD 
        $\boldsymbol{X} = \boldsymbol{U}_d\boldsymbol{S}_d\boldsymbol{V}_d^\top)$: 
        $\widehat{\boldsymbol{\mathsf{W}}} = \widehat{\widetilde{\boldsymbol{\mathsf{Y}}}}\,
        \widetilde{\boldsymbol{X}}^{-1}$, where 
        $\widetilde{\boldsymbol{X}} \coloneqq \boldsymbol{U}_d\boldsymbol{S}_d$.
    \end{itemize}
\end{enumerate}
\end{proof}

\subsection{Successive Refinability under WMSE}

Theorem~\ref{thm:Sakrison} established the rate-distortion equivalence
at a single operating point $(R(D), D)$. We now show this equivalence
holds simultaneously across all nested distortion levels $D_1 > D_2 >
\cdots > D_K$, This means the one-shot encoding at any distortion
level $D_k$ achieves the same rate as $k$-stage refinement, with no
rate penalty. Therefore, a single embedded bitstream suffices in place
of $K$ independent codes.

\begin{theorem}[Correlated Gaussian under MSE is successively refinable]
\label{thm:WMSE-SR}
Under the setting of Theorem~\ref{thm:Sakrison}, the source
$\boldsymbol{\mathsf{W}}$ is successively refinable under
$d_{\mathrm{WMSE}}$. That is, for any nested distortion levels
$D_1 > D_2 > \cdots > D_K > 0$, there exists a single embedded
bitstream whose $k$-th prefix achieves the optimal rate-distortion
pair $(R(D_k), D_k)$ under $d_{\mathrm{WMSE}}$ simultaneously for
all $k$, at the same total rate as $K$ independent codes each
optimally designed for its own distortion level.
\end{theorem}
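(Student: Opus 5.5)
We reduce to the transformed source via Theorem~\ref{thm:Sakrison}, and then check the Equitz--Cover Markov-chain criterion componentwise in the eigenbasis. Fix $D_1>\cdots>D_K>0$, all at most $\sum_i\lambda_i$ (larger distortions are trivial at zero rate).

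\textbf{Step 1 (transfer).} By Theorem~\ref{thm:Sakrison}, for every reconstruction $\widehat{\boldsymbol{\mathsf{W}}}$ we have $d_{\mathrm{WMSE}}(\boldsymbol{\mathsf{W}},\widehat{\boldsymbol{\mathsf{W}}})=\mathbb{E}\|\boldsymbol{\mathsf{Y}}-\widehat{\boldsymbol{\mathsf{W}}}\boldsymbol{X}\|_F^2$, and the rate-distortion functions coincide.

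Conversely, any reconstruction $\widehat{\boldsymbol{\mathsf{Y}}}$ of $\boldsymbol{\mathsf{Y}}$ can be taken in the column space of $\boldsymbol{X}^\top$ (i.e.\ the support of $\boldsymbol{\mathsf{Y}}$) without loss. It is then pulled back to some $\widehat{\boldsymbol{\mathsf{W}}}$ with $\widehat{\boldsymbol{\mathsf{W}}}\boldsymbol{X}=\widehat{\boldsymbol{\mathsf{Y}}}$, using the case-wise inverses of Theorem~\ref{thm:Sakrison}.

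This pullback is a fixed deterministic map applied to the decoder output. Hence any embedded code for $(\boldsymbol{\mathsf{Y}},\text{MSE})$ yields an embedded code for $(\boldsymbol{\mathsf{W}},d_{\mathrm{WMSE}})$ with identical rates and distortions at every prefix. The encoder first computes $\boldsymbol{\mathsf{Y}}=\boldsymbol{\mathsf{W}}\boldsymbol{X}$. It therefore suffices to prove successive refinability of the correlated Gaussian vector $\boldsymbol{\mathsf{y}}$ under MSE.

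\textbf{Step 2 (diagonalize and nest the water levels).} Rotate by $\boldsymbol{U}$ to obtain independent components $\mathsf{y}_i\sim\mathcal{N}(0,\lambda_i)$. Since $\boldsymbol{U}$ is orthogonal, both rate (mutual information) and MSE are preserved.

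For each $D_k$, reverse water-filling gives a level $\mu_k$ with $\sum_i\min(\mu_k,\lambda_i)=D_k$. The map $\mu\mapsto\sum_i\min(\mu,\lambda_i)$ is continuous and nondecreasing, and strictly increasing on $(0,\max_i\lambda_i]$. Therefore $\mu_1>\mu_2>\cdots>\mu_K$, and hence $D_i^{(k)}=\min(\mu_k,\lambda_i)$ is nonincreasing in $k$ for every $i$. This monotonicity is the key structural fact: the optimal per-component allocations are nested.

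\textbf{Step 3 (Markov test channel).} For each $i$, build the backward additive chain from the appendix. Take independent $\mathsf{z}_i^{(k)}\sim\mathcal{N}(0,D_i^{(k-1)}-D_i^{(k)})$ with $D_i^{(0)}=\lambda_i$ (a zero-variance increment is allowed), together with $\mathsf{z}_i\sim\mathcal{N}(0,D_i^{(K)})$. Set $\widehat{\mathsf{y}}_i^{(k)}=\sum_{j\le k}\mathsf{z}_i^{(j)}$ and $\mathsf{y}_i=\widehat{\mathsf{y}}_i^{(K)}+\mathsf{z}_i$.

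This construction gives the correct marginal $\mathcal{N}(0,\lambda_i)$ and distortion $\mathbb{E}(\mathsf{y}_i-\widehat{\mathsf{y}}_i^{(k)})^2=D_i^{(k)}$. It also gives the Markov chain $\mathsf{y}_i\to\widehat{\mathsf{y}}_i^{(K)}\to\cdots\to\widehat{\mathsf{y}}_i^{(1)}$, and we take the components independent across $i$. Then
\[
I(\boldsymbol{\mathsf{y}};\widehat{\boldsymbol{\mathsf{y}}}^{(k)})=\sum_i\tfrac12\log\frac{\lambda_i}{D_i^{(k)}}=R(D_k)
\quad\text{for every }k,
\]
and the vector chain $\boldsymbol{\mathsf{y}}\to\widehat{\boldsymbol{\mathsf{y}}}^{(K)}\to\cdots\to\widehat{\boldsymbol{\mathsf{y}}}^{(1)}$ holds.

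By the Equitz--Cover characterization, which underlies Theorem~\ref{thm:gaussian-sr} and extends to $K$ stages by induction or the Rimoldi multi-stage version, the existence of such a chain attaining $R(D_k)$ at each stage is sufficient for successive refinability. The incremental rates are then $\tfrac12\sum_i\log(D_i^{(k-1)}/D_i^{(k)})$, and they telescope to $R(D_k)$.

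\textbf{Main obstacle.} The delicate points are in Step 1. The pullback must be a legitimate decoder, and the code for $\boldsymbol{\mathsf{Y}}$ must only need to reproduce the component of $\boldsymbol{\mathsf{Y}}$ in its (possibly rank-deficient) support. Components with $\lambda_i=0$ need no rate, and the components of $\widehat{\boldsymbol{\mathsf{Y}}}$ outside the row space of $\boldsymbol{X}$ can be projected away, which only decreases MSE. Also, the converse direction $R_{\boldsymbol{\mathsf{W}}}(D)\ge R_{\boldsymbol{\mathsf{Y}}}(D)$ must be justified by data processing, since $\widehat{\boldsymbol{\mathsf{W}}}\boldsymbol{X}$ is a valid reconstruction of $\boldsymbol{\mathsf{Y}}$.

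Once optimality of single-shot codes is matched on both sides, Step 2's nesting of the water levels does the rest.
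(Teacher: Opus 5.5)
Your proposal is correct and follows essentially the same route as the paper. You reduce to $\boldsymbol{\mathsf{Y}}=\boldsymbol{\mathsf{W}}\boldsymbol{X}$ via Theorem~\ref{thm:Sakrison}, diagonalize $\boldsymbol{\Sigma}_{\mathsf{y}}$, nest the reverse water-filling levels $\mu_1>\cdots>\mu_K$ inside the independent-increment Gaussian test channel (telescoping rates, Markov chain), and pull back through a stage-independent inverse map. Your version is, if anything, more careful than the paper's on the edge cases: zero-variance increments for inactive components, zero eigenvalues and the rank-deficient support of $\boldsymbol{\mathsf{Y}}$, the data-processing converse, and the explicit appeal to the Equitz--Cover/Rimoldi sufficiency criterion.
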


\begin{proof}
The proof proceeds in two steps. First, we show that
$\boldsymbol{\mathsf{y}} = \operatorname{vec}(\boldsymbol{\mathsf{Y}})$
is successively refinable under standard MSE by constructing nested
test channels that achieve each $(R(D_k), D_k)$ with no rate penalty.
Second, we lift this result to $\boldsymbol{\mathsf{W}}$ via the
distortion equivalence of Theorem~\ref{thm:Sakrison}.

\medskip
\noindent\textbf{Step 1: Successive refinability of
$\boldsymbol{\mathsf{y}}$ under MSE.}
By Theorem~\ref{thm:Sakrison}, $\boldsymbol{\mathsf{y}}$ is zero-mean
Gaussian with covariance $\boldsymbol{\Sigma}_{\mathsf{y}} =
(\sigma_w^2 \boldsymbol{X}^\top \boldsymbol{X}) \otimes
\boldsymbol{I}_{d_{\mathrm{out}}}$. Let
\[
    \boldsymbol{\Sigma}_{\mathsf{y}}
    \;=\;
    \boldsymbol{U}\,\mathrm{diag}(\lambda_1, \lambda_2, \ldots)\,
    \boldsymbol{U}^\top
\]
be its eigen-decomposition, yielding independent scalar Gaussian
components $\mathsf{y}_i \sim \mathcal{N}(0, \lambda_i)$. Since the
components are independent, it suffices to show successive
refinability for each scalar $\mathsf{y}_i$ separately, then sum
over components.

\medskip
\noindent\textit{Scalar test channel construction.}
For each component $\mathsf{y}_i$, superscripts denote the refinement
stage, while subscripts index the
source component. For example, $\widehat{\mathsf{y}}_i^{(k)}$ is the reconstruction of
component $i$ after stage $k$, and $\mathsf{z}_i^{(k)}$ is the
Gaussian increment added at stage $k$. Define nested distortion levels $D_i^{(1)} > D_i^{(2)} > 0$. Then, define mutually independent Gaussian increments
\[
    \mathsf{z}_i^{(1)} \sim \mathcal{N}\!\left(0,\, \lambda_i -
    D_i^{(1)}\right),
    \quad
    \mathsf{z}_i^{(2)} \sim \mathcal{N}\!\left(0,\, D_i^{(1)} -
    D_i^{(2)}\right),
    \quad
    \mathsf{z}_i \sim \mathcal{N}\!\left(0,\, D_i^{(2)}\right),
\]
and construct successive reconstructions\[
    \widehat{\mathsf{y}}_i^{(0)}
    \;=\; 0,
    \qquad
    \widehat{\mathsf{y}}_i^{(1)}
    \;=\; \widehat{\mathsf{y}}_i^{(0)} + \mathsf{z}_i^{(1)},
    \qquad
    \widehat{\mathsf{y}}_i^{(2)}
    \;=\; \widehat{\mathsf{y}}_i^{(1)} + \mathsf{z}_i^{(2)},
    \qquad
    \mathsf{y}_i
    \;=\; \widehat{\mathsf{y}}_i^{(2)} + \mathsf{z}_i.
\]
Each stage $k$ refines the previous estimate by adding an
independent Gaussian increment of variance $D_i^{(k-1)} - D_i^{(k)}$.
By construction, $\mathbb{E}\bigl[(\mathsf{y}_i -
\widehat{\mathsf{y}}_i^{(k)})^2\bigr] = D_i^{(k)}$ for $k = 1, 2$,
and since the increments are mutually independent the Markov chain holds by construction~\citep{equitzthesis}.
\[
    \widehat{\mathsf{y}}_i^{(1)}
    \;\to\; \widehat{\mathsf{y}}_i^{(2)}
    \;\to\; \mathsf{y}_i
\]

\medskip
\noindent\textit{No rate penalty.}
The rate $\Delta R_i^{(1)}$ is the rate needed to encode
$\mathsf{z}_i^{(1)}$, i.e.\ the information required to describe the
first-stage reconstruction $\widehat{\mathsf{y}}_i^{(1)}$ from
scratch. The rate $\Delta R_i^{(2)}$ is the additional rate needed to
encode $\mathsf{z}_i^{(2)}$, i.e.\ the new information required to
describe how the reconstruction improves from
$\widehat{\mathsf{y}}_i^{(1)}$ to $\widehat{\mathsf{y}}_i^{(2)}$.
Formally,
\[
    \Delta R_i^{(1)}
    \;=\; I\!\left(\mathsf{y}_i;\,\widehat{\mathsf{y}}_i^{(1)}\right)
    \;=\; \frac{1}{2}\log\frac{\lambda_i}{D_i^{(1)}},
\]
\[
    \Delta R_i^{(2)}
    \;=\; I\!\left(\mathsf{y}_i;\,\widehat{\mathsf{y}}_i^{(2)}
    \,\middle|\, \widehat{\mathsf{y}}_i^{(1)}\right)
    \;=\; h\!\left(\mathsf{y}_i \mid \widehat{\mathsf{y}}_i^{(1)}\right)
    - h\!\left(\mathsf{y}_i \mid \widehat{\mathsf{y}}_i^{(1)},
    \widehat{\mathsf{y}}_i^{(2)}\right)
    \;=\; \frac{1}{2}\log\frac{D_i^{(1)}}{D_i^{(2)}}.
\]
The total two-stage rate equals to the one-shot rate at
$D_i^{(2)}$:
\[
    \Delta R_i^{(1)} + \Delta R_i^{(2)}
    \;=\; \frac{1}{2}\log\frac{\lambda_i}{D_i^{(1)}}
    + \frac{1}{2}\log\frac{D_i^{(1)}}{D_i^{(2)}}
    \;=\; \frac{1}{2}\log\frac{\lambda_i}{D_i^{(2)}}
    \;=\; R_i(D_i^{(2)}),
\]
confirming no rate penalty for the $i$-th component. The Markov
consistency condition
\[
    I\!\left(\mathsf{y}_i;\,\widehat{\mathsf{y}}_i^{(1)}
    \,\middle|\, \widehat{\mathsf{y}}_i^{(2)}\right) = 0
\]
holds since $\widehat{\mathsf{y}}_i^{(1)}$ is a function of
$\widehat{\mathsf{y}}_i^{(2)}$ under the nested construction.

The same construction extends to $K$ stages by defining $K$
independent increments of variances $\lambda_i - D_i^{(1)},\,
D_i^{(1)} - D_i^{(2)},\, \ldots,\, D_i^{(K-1)} - D_i^{(K)}$,
and the joint Markov chain
\[
    \widehat{\boldsymbol{\mathsf{y}}}^{(1)}
    \;\to\; \widehat{\boldsymbol{\mathsf{y}}}^{(2)}
    \;\to\; \cdots
    \;\to\; \widehat{\boldsymbol{\mathsf{y}}}^{(K)}
    \;\to\; \boldsymbol{\mathsf{y}}
\]
holds since the scalar Markov chains hold independently for each
component, with each stage being a strictly finer approximation of
$\boldsymbol{\mathsf{y}}$ than the previous. Moreover, Figures~\ref{fig:three_sr_waterfilling_1} and~\ref{fig:three_sr_waterfilling_2} illustrate the
three possible activation patterns: at the first stage, case~(a) starts with only one component active while the other remains inactive, case~(b) starts with both components already active, and case~(c) starts with both components inactive. In all cases the $\Delta R$
brackets confirm that the incremental rate at each stage equals
exactly $R(D_{k+1}) - R(D_k)$. Hence $\boldsymbol{\mathsf{y}}$ is
successively refinable under MSE.

\begin{figure}[h]
\centering
\tikzset{inactive/.style={pattern=north east lines, pattern color=green!50!black}}

\begin{subfigure}[t]{0.35\textwidth}
\centering
\begin{tikzpicture}[scale=0.75]
\pgfmathsetmacro{\lami}{5.5}
\pgfmathsetmacro{\lamii}{2.5}
\pgfmathsetmacro{\muone}{3.5}
\def\bw{1.2}
\fill[blue!10]  (0,0)      rectangle (\bw,\lami);
\fill[blue!30]  (0,\muone) rectangle (\bw,\lami);
\fill[green!20] (0,0)      rectangle (\bw,\muone);
\draw[blue!60, thick] (0,0) rectangle (\bw,\lami);
\node[below, font=\small] at (\bw/2, 0) {$1$};
\node[above, font=\small, blue!80] at (\bw/2, \lami) {$\lambda_1$};
\fill[green!20] (1.8,0) rectangle (1.8+\bw,\lamii);
\path[inactive]  (1.8,0) rectangle (1.8+\bw,\lamii);
\draw[green!60!black, thick] (1.8,0) rectangle (1.8+\bw,\lamii);
\node[below, font=\small] at (1.8+\bw/2, 0) {$2$};
\node[above, font=\small, green!50!black] at (1.8+\bw/2, \lamii) {$\lambda_2$};
\draw[red!70!black, thick, dashed] (-0.3,\muone) -- (3.3,\muone);
\node[left, font=\small\bfseries, red!70!black] at (-0.3,\muone) {$\mu_1$};
\draw[->, thick] (-0.3,0) -- (-0.3,6.5) node[above, font=\small] {$\lambda_i$};
\draw[->, thick] (-0.3,0) -- (3.5,0)    node[right, font=\small] {$i$};
\node[below, font=\small] at (1.5,-0.8) {Stage 1};
\end{tikzpicture}
\end{subfigure}
\hspace{1.5cm}
\begin{subfigure}[t]{0.35\textwidth}
\centering
\begin{tikzpicture}[scale=0.75]
\pgfmathsetmacro{\lami}{5.5}
\pgfmathsetmacro{\lamii}{2.5}
\pgfmathsetmacro{\muone}{3.5}
\pgfmathsetmacro{\mutwo}{1.8}
\def\bw{1.2}
\fill[blue!10]  (0,0)      rectangle (\bw,\lami);
\fill[blue!30]  (0,\mutwo) rectangle (\bw,\lami);
\fill[green!20] (0,0)      rectangle (\bw,\mutwo);
\draw[blue!60, thick] (0,0) rectangle (\bw,\lami);
\node[below, font=\small] at (\bw/2, 0) {$1$};
\node[above, font=\small, blue!80] at (\bw/2, \lami) {$\lambda_1$};
\draw[red!70!black, dashed, thin] (0,\muone) -- (\bw,\muone);
\draw[decorate, decoration={brace, amplitude=4pt, mirror}, purple!70!black]
    (\bw-0.1, \muone) -- (\bw-0.1, \mutwo)
    node[midway, left=1pt, font=\small, purple!70!black] {$\Delta R_1$};
\fill[blue!10]  (1.8,0)      rectangle (1.8+\bw,\lamii);
\fill[blue!30]  (1.8,\mutwo) rectangle (1.8+\bw,\lamii);
\fill[green!20] (1.8,0)      rectangle (1.8+\bw,\mutwo);
\draw[blue!60, thick] (1.8,0) rectangle (1.8+\bw,\lamii);
\node[below, font=\small] at (1.8+\bw/2, 0) {$2$};
\node[above, font=\small, blue!80] at (1.8+\bw/2, \lamii) {$\lambda_2$};
\draw[red!70!black, dashed, thin] (1.8,\muone) -- (1.8+\bw,\muone);
\draw[decorate, decoration={brace, amplitude=4pt, mirror}, purple!70!black]
    (1.8+\bw-0.1, \lamii) -- (1.8+\bw-0.1, \mutwo)
    node[midway, left=1pt, font=\small, purple!70!black] {$\Delta R_2$};
\draw[red!70!black, thick, dashed] (-0.3,\mutwo) -- (3.5,\mutwo);
\node[left, font=\small\bfseries, red!70!black] at (-0.3,\mutwo) {$\mu_2$};
\node[left, font=\small, red!70!black, opacity=0.6] at (-0.3,\muone) {$\mu_1$};
\draw[->, thick] (-0.3,0) -- (-0.3,6.5) node[above, font=\small] {$\lambda_i$};
\draw[->, thick] (-0.3,0) -- (3.8,0)    node[right, font=\small] {$i$};
\node[below, font=\small] at (1.5,-0.8) {Stage 2 ($\mu_2 < \mu_1$)};
\end{tikzpicture}
\end{subfigure}

\vspace{0.5em}

\begin{tikzpicture}[scale=0.75]
    \fill[blue!30]  (0,0) rectangle (0.5,0.35);
    \draw[blue!60]  (0,0) rectangle (0.5,0.35);
    \node[right, font=\scriptsize] at (0.6,0.175) {Rate};
    \fill[green!20] (2.0,0) rectangle (2.5,0.35);
    \draw[green!60!black] (2.0,0) rectangle (2.5,0.35);
    \node[right, font=\scriptsize] at (2.6,0.175) {Distortion $D_i{=}\mu$};
    \fill[white] (5.5,0) rectangle (6.0,0.35);
    \path[pattern=north east lines, pattern color=green!50!black]
        (5.5,0) rectangle (6.0,0.35);
    \draw[green!60!black] (5.5,0) rectangle (6.0,0.35);
    \node[right, font=\scriptsize] at (6.1,0.175) {Inactive $D_i{=}\lambda_i$};
    \draw[red!70!black, dashed, thin] (9.0,0.175) -- (9.5,0.175);
    \node[right, font=\scriptsize, red!70!black] at (9.55,0.175) {$\mu_1$ reference};
\end{tikzpicture}

\caption{Successive refinement under reverse water-filling: case (a)}
\label{fig:three_sr_waterfilling_1}
\end{figure}

Under reverse water-filling, the per-component distortion levels
$D_i^{(k)}$ that achieve the rate-distortion function at overall
distortion $D_k$ are given explicitly by
\[
    D_i^{(k)} \;=\; \min\!\left(\lambda_i,\, \mu_k\right),
\]
where $\mu_k > 0$ is the water level chosen so that
$\sum_i D_i^{(k)} = D_k$. Nested distortion levels $D_1 > D_2 >
\cdots > D_K$ therefore correspond to a strictly decreasing sequence
of water levels $\mu_1 > \mu_2 > \cdots > \mu_K$. As $\mu$ decreases
across refinement stages, the distortion assigned to each active
coordinate $i$ (where $\lambda_i > \mu$) equals $\mu$ and therefore
decreases monotonically, while inactive coordinates (where $\lambda_i
\leq \mu$) retain distortion $D_i = \lambda_i$ and contribute zero
rate at that stage, remaining unchanged until $\mu$ drops below
$\lambda_i$ and activates them. Thus, lowering the water level
strictly refines the reconstruction: each stage produces a finer
estimate of $\boldsymbol{\mathsf{y}}$, and the test channels nest
properly across all $K$ distortion levels.

\begin{figure}[t]
\centering
\tikzset{inactive/.style={pattern=north east lines, pattern color=green!50!black}}

\begin{subfigure}[t]{0.2\textwidth}
\centering
\begin{tikzpicture}[scale=0.75]
\pgfmathsetmacro{\lami}{5.5}
\pgfmathsetmacro{\lamii}{4.0}
\pgfmathsetmacro{\muone}{2.5}
\def\bw{1.2}
\fill[blue!10]  (0,0)      rectangle (\bw,\lami);
\fill[blue!30]  (0,\muone) rectangle (\bw,\lami);
\fill[green!20] (0,0)      rectangle (\bw,\muone);
\draw[blue!60, thick] (0,0) rectangle (\bw,\lami);
\node[below, font=\small] at (\bw/2, 0) {$1$};
\node[above, font=\small, blue!80] at (\bw/2, \lami) {$\lambda_1$};
\fill[blue!10]  (1.8,0)      rectangle (1.8+\bw,\lamii);
\fill[blue!30]  (1.8,\muone) rectangle (1.8+\bw,\lamii);
\fill[green!20] (1.8,0)      rectangle (1.8+\bw,\muone);
\draw[blue!60, thick] (1.8,0) rectangle (1.8+\bw,\lamii);
\node[below, font=\small] at (1.8+\bw/2, 0) {$2$};
\node[above, font=\small, blue!80] at (1.8+\bw/2, \lamii) {$\lambda_2$};
\draw[red!70!black, thick, dashed] (-0.3,\muone) -- (3.3,\muone);
\node[left, font=\small\bfseries, red!70!black] at (-0.3,\muone) {$\mu_1$};
\draw[->, thick] (-0.3,0) -- (-0.3,6.5) node[above, font=\small] {$\lambda_i$};
\draw[->, thick] (-0.3,0) -- (3.5,0)    node[right, font=\small] {$i$};
\node[below, font=\small] at (1.5,-0.8) {Stage 1};
\end{tikzpicture}
\end{subfigure}
\hspace{1.5cm}
\begin{subfigure}[t]{0.2\textwidth}
\centering
\begin{tikzpicture}[scale=0.75]
\pgfmathsetmacro{\lami}{5.5}
\pgfmathsetmacro{\lamii}{4.0}
\pgfmathsetmacro{\muone}{2.5}
\pgfmathsetmacro{\mutwo}{1.2}
\def\bw{1.2}
\fill[blue!10]  (0,0)      rectangle (\bw,\lami);
\fill[blue!30]  (0,\mutwo) rectangle (\bw,\lami);
\fill[green!20] (0,0)      rectangle (\bw,\mutwo);
\draw[blue!60, thick] (0,0) rectangle (\bw,\lami);
\node[below, font=\small] at (\bw/2, 0) {$1$};
\node[above, font=\small, blue!80] at (\bw/2, \lami) {$\lambda_1$};
\draw[red!70!black, dashed, thin] (0,\muone) -- (\bw,\muone);
\draw[decorate, decoration={brace, amplitude=4pt, mirror}, purple!70!black]
    (\bw-0.1, \muone) -- (\bw-0.1, \mutwo)
    node[midway, left=1pt, font=\small, purple!70!black] {$\Delta R_1$};
\fill[blue!10]  (1.8,0)      rectangle (1.8+\bw,\lamii);
\fill[blue!30]  (1.8,\mutwo) rectangle (1.8+\bw,\lamii);
\fill[green!20] (1.8,0)      rectangle (1.8+\bw,\mutwo);
\draw[blue!60, thick] (1.8,0) rectangle (1.8+\bw,\lamii);
\node[below, font=\small] at (1.8+\bw/2, 0) {$2$};
\node[above, font=\small, blue!80] at (1.8+\bw/2, \lamii) {$\lambda_2$};
\draw[red!70!black, dashed, thin] (1.8,\muone) -- (1.8+\bw,\muone);
\draw[decorate, decoration={brace, amplitude=4pt, mirror}, purple!70!black]
    (1.8+\bw-0.1, \muone) -- (1.8+\bw-0.1, \mutwo)
    node[midway, left=1pt, font=\small, purple!70!black] {$\Delta R_2$};
\draw[red!70!black, thick, dashed] (-0.3,\mutwo) -- (3.5,\mutwo);
\node[left, font=\small\bfseries, red!70!black] at (-0.3,\mutwo) {$\mu_2$};
\node[left, font=\small, red!70!black, opacity=0.6] at (-0.3,\muone) {$\mu_1$};
\draw[->, thick] (-0.3,0) -- (-0.3,6.5) node[above, font=\small] {$\lambda_i$};
\draw[->, thick] (-0.3,0) -- (3.8,0)    node[right, font=\small] {$i$};
\node[below, font=\small] at (1.5,-0.8) {Stage 2 ($\mu_2 < \mu_1$)};
\end{tikzpicture}
\end{subfigure}

\vspace{1em}

\begin{subfigure}[t]{0.2\textwidth}
\centering
\begin{tikzpicture}[scale=0.75]
\pgfmathsetmacro{\lami}{5.5}
\pgfmathsetmacro{\lamii}{2.0}
\pgfmathsetmacro{\muone}{6.5}
\def\bw{1.2}
\fill[green!20] (0,0) rectangle (\bw,\lami);
\path[inactive]  (0,0) rectangle (\bw,\lami);
\draw[green!60!black, thick] (0,0) rectangle (\bw,\lami);
\node[below, font=\small] at (\bw/2, 0) {$1$};
\node[above, font=\small, green!50!black] at (\bw/2, \lami) {$\lambda_1$};
\fill[green!20] (1.8,0) rectangle (1.8+\bw,\lamii);
\path[inactive]  (1.8,0) rectangle (1.8+\bw,\lamii);
\draw[green!60!black, thick] (1.8,0) rectangle (1.8+\bw,\lamii);
\node[below, font=\small] at (1.8+\bw/2, 0) {$2$};
\node[above, font=\small, green!50!black] at (1.8+\bw/2, \lamii) {$\lambda_2$};
\draw[red!70!black, thick, dashed] (-0.3,\muone) -- (3.3,\muone);
\node[left, font=\small\bfseries, red!70!black] at (-0.3,\muone) {$\mu_1$};
\draw[->, thick] (-0.3,0) -- (-0.3,7.5) node[above, font=\small] {$\lambda_i$};
\draw[->, thick] (-0.3,0) -- (3.5,0)    node[right, font=\small] {$i$};
\node[below, font=\small] at (1.5,-0.8) {Stage 1};
\end{tikzpicture}
\end{subfigure}
\hspace{1.5cm}
\begin{subfigure}[t]{0.2\textwidth}
\centering
\begin{tikzpicture}[scale=0.75]
\pgfmathsetmacro{\lami}{5.5}
\pgfmathsetmacro{\lamii}{2.0}
\pgfmathsetmacro{\muone}{6.5}
\pgfmathsetmacro{\mutwo}{3.0}
\def\bw{1.2}
\fill[blue!10]  (0,0)      rectangle (\bw,\lami);
\fill[blue!30]  (0,\mutwo) rectangle (\bw,\lami);
\fill[green!20] (0,0)      rectangle (\bw,\mutwo);
\draw[blue!60, thick] (0,0) rectangle (\bw,\lami);
\node[below, font=\small] at (\bw/2, 0) {$1$};
\node[above, font=\small, blue!80] at (\bw/2, \lami) {$\lambda_1$};
\draw[red!70!black, dashed, thin] (0,\muone) -- (\bw,\muone);
\draw[decorate, decoration={brace, amplitude=4pt, mirror}, purple!70!black]
    (\bw-0.1, \lami) -- (\bw-0.1, \mutwo)
    node[midway, left=1pt, font=\small, purple!70!black] {$\Delta R_1$};
\fill[green!20] (1.8,0) rectangle (1.8+\bw,\lamii);
\path[inactive]  (1.8,0) rectangle (1.8+\bw,\lamii);
\draw[green!60!black, thick] (1.8,0) rectangle (1.8+\bw,\lamii);
\node[below, font=\small] at (1.8+\bw/2, 0) {$2$};
\node[above, font=\small, green!50!black] at (1.8+\bw/2, \lamii) {$\lambda_2$};
\draw[red!70!black, dashed, thin] (1.8,\muone) -- (1.8+\bw,\muone);
\draw[red!70!black, thick, dashed] (-0.3,\mutwo) -- (3.5,\mutwo);
\node[left, font=\small\bfseries, red!70!black] at (-0.3,\mutwo) {$\mu_2$};
\node[left, font=\small, red!70!black, opacity=0.6] at (-0.3,\muone) {$\mu_1$};
\draw[->, thick] (-0.3,0) -- (-0.3,7.5) node[above, font=\small] {$\lambda_i$};
\draw[->, thick] (-0.3,0) -- (3.8,0)    node[right, font=\small] {$i$};
\node[below, font=\small] at (1.5,-0.8) {Stage 2 ($\mu_2 < \mu_1$)};
\end{tikzpicture}
\end{subfigure}

\vspace{1em}

\begin{tikzpicture}[scale=0.75]
    \fill[blue!30]  (0,0) rectangle (0.5,0.35);
    \draw[blue!60]  (0,0) rectangle (0.5,0.35);
    \node[right, font=\scriptsize] at (0.6,0.175) {Rate};
    \fill[green!20] (2.0,0) rectangle (2.5,0.35);
    \draw[green!60!black] (2.0,0) rectangle (2.5,0.35);
    \node[right, font=\scriptsize] at (2.6,0.175) {Distortion $D_i{=}\mu$};
    \fill[white] (5.5,0) rectangle (6.0,0.35);
    \path[pattern=north east lines, pattern color=green!50!black]
        (5.5,0) rectangle (6.0,0.35);
    \draw[green!60!black] (5.5,0) rectangle (6.0,0.35);
    \node[right, font=\scriptsize] at (6.1,0.175) {Inactive $D_i{=}\lambda_i$};
    \draw[red!70!black, dashed, thin] (9.0,0.175) -- (9.5,0.175);
    \node[right, font=\scriptsize, red!70!black] at (9.55,0.175) {$\mu_1$ reference};
\end{tikzpicture}

\caption{Successive refinement under reverse water-filling: case (b) and case (c)}
\label{fig:three_sr_waterfilling_2}
\end{figure}

\medskip
\noindent\textbf{Step 2: Lifting to $\boldsymbol{\mathsf{W}}$ under
$d_{\mathrm{WMSE}}$.}
By Theorem~\ref{thm:Sakrison}, for each $k$ the distortion
equivalence
\[
    d_{\mathrm{WMSE}}(\boldsymbol{\mathsf{W}},
    \widehat{\boldsymbol{\mathsf{W}}}^{(k)})
    \;=\;
    \mathbb{E}\!\left[\|\boldsymbol{\mathsf{Y}} -
    \widehat{\boldsymbol{\mathsf{Y}}}^{(k)}\|_F^2\right]
\]
holds, and the optimal $\widehat{\boldsymbol{\mathsf{W}}}^{(k)}$ is
recovered from $\widehat{\boldsymbol{\mathsf{Y}}}^{(k)}$ by inverting
$\boldsymbol{\mathsf{Y}} = \boldsymbol{\mathsf{W}}\boldsymbol{X}$
as detailed in the three cases of Theorem~\ref{thm:Sakrison}. Since
$\boldsymbol{X}$ is fixed, both the forward map
$\boldsymbol{\mathsf{W}} \mapsto \boldsymbol{\mathsf{Y}}$ and the
inversion map $\widehat{\boldsymbol{\mathsf{Y}}}^{(k)} \mapsto
\widehat{\boldsymbol{\mathsf{W}}}^{(k)}$ are independent of $k$. The
Markov chain established in Step~1 therefore carries over directly:
\[
    \widehat{\boldsymbol{\mathsf{W}}}^{(1)}
    \;\to\; \widehat{\boldsymbol{\mathsf{W}}}^{(2)}
    \;\to\; \cdots
    \;\to\; \widehat{\boldsymbol{\mathsf{W}}}^{(K)}
    \;\to\; \boldsymbol{\mathsf{W}}.
\]
Hence the embedded bitstream achieving $(R(D_k), D_k)$ for
$\boldsymbol{\mathsf{y}}$ under MSE induces, via the inversion map,
an embedded bitstream achieving $(R(D_k), D_k)$ for
$\boldsymbol{\mathsf{W}}$ under $d_{\mathrm{WMSE}}$ simultaneously
for all $k$, with no rate penalty.
\end{proof}

\clearpage

\section{Algorithm 1}

\paragraph{Drop-by-Drop Quantization for Resource-Constrained Inference}

\label{alg:full_alg1} 

\begin{algorithmic}[1]
\Require model, data
\State $X_{\text{block}} :=$ model.input\_embeddings(data)
\For{$i = 1, \dotsc, \texttt{model.num\_layers}$}
    \State block := model.get\_block($i$)
    \State $Y_{\text{block}} :=$ block($X_{\text{block}}$)
    \For{layer $\in$ linear\_layers(block)}
        \State $W :=$ layer.weight
        \State $X :=$ layer\_inputs(layer, $X_{\text{block}}$)
        \State {$C, b, s :=$ k\_means\_initialize($W$)} 
        \While{loss improves by at least $\tau$}
        \For{\textcolor{blue}{$k = 1, \dotsc, M$}}
            \State \textcolor{blue}{$\widehat{W}^{(k)} \gets$ partial reconstruction using the first $k$ codebooks}
            \State \textcolor{blue}{$\mathcal{L}^{(k)} \gets 
            \lambda_k \cdot \frac{1}{d_{\text{out}}}
            \operatorname{tr}\!\left[
                (W - \widehat{W}^{(k)})
                \left(\frac{1}{n} X X^{\top}\right)
                (W - \widehat{W}^{(k)} )^{\top}
            \right]$}
        \EndFor
            \State \textcolor{blue}{Total loss: $\mathcal{L} = \sum_k \mathcal{L}^{(k)}$}
            \State {Update codebooks $C$ using gradient descent}
            \State {Update assignments $b$ using beam search}
        \EndWhile
        \State layer.weight := AQLMFormat($C, b, s$)
    \EndFor
    \State $\theta :=$ trainable\_parameters(block)
    \While{loss improves by at least $\tau$}
        \State $\mathcal{L} := \| \texttt{block}(X_{\text{block}}) - Y_{\text{block}} \|_2^2$
        \State $\theta :=$ adam($\theta$, $\partial \mathcal{L} / \partial \theta$)
    \EndWhile
    \State $X_{\text{block}} :=$ block($X_{\text{block}}$)
\EndFor
\end{algorithmic}

\clearpage

\section{Ablation Study}
\label{app:ablation-study}

\subsection{Codebook Precision vs. Codebook Count}

A key design question in multi-codebook quantization is whether performance is better preserved by using many low-precision codebooks or fewer high-precision ones, under a fixed overall bit budget. Prior observations from AQLM suggest that increasing codebook precision can be more beneficial than increasing the number of codebooks when the average bit rate is held constant.

To validate whether this behavior persists under our Drop-by-Drop training strategy, we conduct a controlled experiment on \texttt{gemma-2b} \citep{gemma2024}. We evaluate perplexity on WikiText2 and C4 under an \emph{extreme} quantization regime, where the average effective bit rate is approximately 2 bits. We compare two configurations trained with identical optimization settings and Drop-by-Drop loss, differing only in how the bit budget is allocated across codebooks. Specifically, we contrast a configuration with fewer high-precision codebooks against one with more low-precision codebooks. Despite having comparable average bit rates (e.g., average bit \(\approx\) 2 bits), the high-precision configuration consistently achieves substantially lower perplexity on both datasets.

\begin{table}[h]
    \centering
    \begin{tabular}{lccc c c}
        \toprule
        \textbf{Configuration} & \textbf{Bits per Codebook} & \textbf{\# Codebooks} & \textbf{WikiText2}& \textbf{C4} \\
        \midrule
        High precision & 8 & 2 & \textbf{19.22} & \textbf{23.37} \\
        More codebooks & 4 & 4 & 34.02 & 77.17 \\
        \bottomrule
    \end{tabular}
    \caption{Perplexity on WikiText2 and C4 ($\downarrow$)}
    \label{tab:precision_vs_count}
\end{table}

These results indicate that, under tight bit budgets, allocating more bits per codebook is significantly more effective than increasing the number of codebooks. This finding supports the core design of Drop-by-Drop, which prioritizes strong intermediate reconstructions and enables robust performance even when only a small subset of high-precision codebooks is active. This property is critical for flexible deployment across devices with heterogeneous resource constraints. Note that lower perplexity ($\downarrow$) indicate better performance.

\subsection{Matryoshka Loss Coefficients}

We next study the impact of different Matryoshka loss weightings on performance. The Matryoshka loss supervises multiple intermediate reconstructions corresponding to different subsets of active codebooks, and its weighting scheme directly influences which intermediate representations are emphasized during training.

We experimented with a wide range of weighting strategies, including highly skewed weights, uniform averages, and exponentially increasing and decreasing coefficients. As well as variants that selectively supervise specific intermediate codebook combinations, such as:

\begin{itemize}
    \item \textbf{3W}: $\lambda_3 = 1$, all other $\lambda$ values set to zero.
    \item \textbf{345W}: $\lambda_3 = \lambda_4 = \lambda_5 = \frac{1}{3}$, all other $\lambda$ values set to zero.
    \item \textbf{35W}: $\lambda_3 = 0.5$, $\lambda_5 = 0.5$, all other $\lambda$ values set to zero.
\end{itemize}

Overall, we find that aggressively weighting early or late reconstructions alone does not consistently improve performance. Instead, selectively supervising intermediate reconstructions yields the most stable behavior under Drop-by-Drop training. This observation aligns with our goal of maintaining strong performance at intermediate bit rates, rather than optimizing exclusively for the highest-fidelity reconstruction.

\clearpage
\section{Quantitative Empirical Results}
\label{app:detailed}

All perplexity and accuracy results report the \textbf{mean $\pm$ normalized standard deviation} across \textit{three} independent runs, where each $\pm$ value in the tables directly reflects the aggregated cross-run variability. Specifically, the reported deviation is computed with the \textit{root mean square} (RMS) of the per-run normalized standard deviations, i.e.,
\begin{equation*}
    \sigma_{\mathrm{RMS}} = \sqrt{\frac{1}{N}\sum_{i=1}^{N} \hat{\sigma}_i^2},
\end{equation*}
where $N$ is the number of runs. Moreover, to facilitate the reader's analysis, Figures~\ref{fig:perplexity_wiki} and~\ref{fig:accuracy_zeroshot} visually present the data from Tables~\ref{tab:combined_ppl_wiki} and~\ref{tab:combined_avg_acc}. Note that lower perplexity ($\downarrow$) and higher accuracy ($\uparrow$) indicate better performance.

\subsection{Perplexity and Average Zero-Shot Accuracy Values}
\begin{table}[h]
\small
\setlength{\tabcolsep}{3pt}
\begin{center}
\begin{tabular}{ll r@{\,\(\pm\)\,}l r@{\,\(\pm\)\,}l r@{\,\(\pm\)\,}l r@{\,\(\pm\)\,}l}
\toprule
\textbf{Model} & \textbf{Codebooks} & \multicolumn{2}{c}{\textbf{AQLM (Ind.)}} & \multicolumn{2}{c}{\textbf{AQLM (Drop)}} & \multicolumn{2}{c}{\textbf{DbyD (Uni.)}} & \multicolumn{2}{c}{\textbf{DbyD (35W)}} \\
\midrule
\multirow{3}{*}{Gemma 2B}
  & $5 \times 8$ & 10.76 & 0.03 & 10.76 & 0.03 & 10.99 & 0.04 & 10.80 & 0.03 \\
  & $4 \times 8$ & 11.11 & 0.15 & 13.06 & 1.14 & 12.58 & 0.24 & 12.01 & 0.19 \\
  & $3 \times 8$ & 12.10 & 0.03 & 27.20 & 13.09 & 16.30 & 0.32 & 14.24 & 0.18 \\
\midrule
\multirow{3}{*}{Mistral 7B}
  & $5 \times 8$ & 5.14 & 0.26 & 5.14 & 0.26 & 5.19 & 0.24 & 5.00 & 0.00 \\
  & $4 \times 8$ & 5.07 & 0.01 & 5.78 & 0.30 & 5.66 & 0.24 & 5.44 & 0.15 \\
  & $3 \times 8$ & 5.38 & 0.08 & 112.75 & 93.38 & 11.42 & 5.14 & 6.19 & 0.21 \\
\midrule
\multirow{3}{*}{MetaLlama 8B}
  & $5 \times 8$& 5.89 & 0.00 & 5.89 & 0.00 & 6.02 & 0.00 & 5.93 & 0.00 \\
  & $4 \times 8$  & 6.14 & 0.01 & 20.10 & 20.10 & 7.09 & 0.22 & 7.04 & 0.12 \\
  & $3 \times 8$ & 6.77 & 0.00 & $4.58{\times}10^4$ & $7.62{\times}10^4$ & 12.57 & 1.10 & 8.77 & 0.17 \\
\midrule
\multirow{3}{*}{Qwen 0.5B}
  & $5 \times 8$  & 13.30 & 0.01 & 13.30 & 0.01 & 13.54 & 0.04 & 13.36 & 0.02 \\
  & $4 \times 8$  & 13.76 & 0.02 & 16.21 & 0.34 & 16.29 & 0.17 & 16.65 & 0.70 \\
  & $3 \times 8$ & 15.43 & 0.07 & 51.71 & 24.90 & 34.14 & 6.33 & 28.40 & 5.83 \\
\midrule
\multirow{3}{*}{Qwen 3B}
  & $5 \times 8$ & 7.54 & 0.00 & 7.54 & 0.00 & 7.63 & 0.01 & 7.57 & 0.01 \\
  & $4 \times 8$  & 7.71 & 0.01 & 30.70 & 30.67 & 8.32 & 0.07 & 8.22 & 0.03 \\
  & $3 \times 8$ & 8.25 & 0.01 & 365.00 & 160.30 & 10.25 & 0.09 & 9.91 & 0.00 \\
\midrule
\multirow{3}{*}{Qwen 7B}
  & $5 \times 8$  & 6.93 & 0.01 & 6.93 & 0.01 & 7.06 & 0.02 & 6.96 & 0.01 \\
  & $4 \times 8$  & 7.08 & 0.01 & 7.84 & 0.02 & 7.65 & 0.04 & 7.61 & 0.05 \\
  & $3 \times 8$  & 7.47 & 0.01 & 10.33 & 0.22 & 9.37 & 0.11 & 9.17 & 0.02 \\
\midrule
\multirow{3}{*}{Qwen 14B}
  & $5 \times 8$ & 5.38 & 0.00 & 5.38 & 0.00 & 5.51 & 0.01 & 5.53 & 0.20 \\
  & $4 \times 8$  & 5.60 & 0.01 & 6.06 & 0.06 & 6.08 & 0.03 & 6.01 & 0.09 \\
  & $3 \times 8$  & 6.13 & 0.00 & 8.32 & 0.24 & 7.65 & 0.38 & 7.06 & 0.04 \\
\midrule
\multirow{3}{*}{Qwen 32B}
  & $5 \times 8$  & 4.99 & 0.01 & 4.99 & 0.01 & 5.04 & 0.01 & 5.01 & 0.00 \\
  & $4 \times 8$ & 5.13 & 0.00 & 5.51 & 0.01 & 5.43 & 0.01 & 5.38 & 0.02 \\
  & $3 \times 8$  & 5.48 & 0.01 & 6.65 & 0.20 & 6.29 & 0.05 & 6.06 & 0.01 \\
\bottomrule
\end{tabular}
\end{center} 
\caption{Perplexity on Wikitext2 ($\downarrow$). Values used in Figure~\ref{fig:perplexity_wiki}} 
\label{tab:combined_ppl_wiki}
\end{table}

\begin{table}[h]
\small
\setlength{\tabcolsep}{3pt}
\begin{center}
\label{tab:combined_ppl_c4}
\begin{tabular}{ll r@{\,\(\pm\)\,}l r@{\,\(\pm\)\,}l r@{\,\(\pm\)\,}l r@{\,\(\pm\)\,}l}
\toprule
\textbf{Model} & \textbf{Codebooks} & \multicolumn{2}{c}{\textbf{AQLM (Ind.)}} & \multicolumn{2}{c}{\textbf{AQLM (Drop)}} & \multicolumn{2}{c}{\textbf{DbyD (Uni.)}} & \multicolumn{2}{c}{\textbf{DbyD (35W)}} \\
\midrule
\multirow{3}{*}{Gemma 2B}
  & $5 \times 8$ & 14.73 & 0.06 & 14.73 & 0.06 & 15.36 & 0.09 & 14.78 & 0.04 \\
  & $4 \times 8$ & 15.26 & 0.05 & 17.47 & 1.29 & 17.76 & 0.47 & 16.38 & 0.22 \\
  & $3 \times 8$ & 17.38 & 0.11 & 31.98 & 9.64 & 22.43 & 0.75 & 19.05 & 0.13 \\
\midrule
\multirow{3}{*}{Mistral 7B}
  & $5 \times 8$ & 7.93 & 0.62 & 7.93 & 0.62 & 7.70 & 0.04 & 7.58 & 0.01 \\
  & $4 \times 8$ & 7.73 & 0.01 & 8.74 & 0.39 & 8.31 & 0.05 & 8.08 & 0.07 \\
  & $3 \times 8$ & 8.23 & 0.01 & 104.10 & 77.90 & 16.09 & 9.05 & 9.14 & 0.18 \\
\midrule
\multirow{3}{*}{MetaLlama 8B}
  & $5 \times 8$ & 8.71 & 0.01 & 8.71 & 0.01 & 9.01 & 0.01 & 8.78 & 0.00 \\
  & $4 \times 8$ & 9.29 & 0.01 & 28.22 & 27.49 & 10.66 & 0.36 & 10.39 & 0.21 \\
  & $3 \times 8$ & 10.73 & 0.01 & $7.78{\times}10^4$ & $1.33{\times}10^5$ & 20.61 & 2.93 & 12.97 & 0.36 \\
\midrule
\multirow{3}{*}{Qwen 0.5B}
  & $5 \times 8$ & 17.87 & 0.02 & 17.87 & 0.02 & 18.30 & 0.02 & 18.00 & 0.03 \\
  & $4 \times 8$ & 18.73 & 0.02 & 20.91 & 0.24 & 21.36 & 0.37 & 21.48 & 1.18 \\
  & $3 \times 8$ & 21.87 & 0.09 & 58.76 & 21.65 & 44.37 & 6.07 & 37.58 & 8.31 \\
\midrule
\multirow{3}{*}{Qwen 3B}
  & $5 \times 8$ & 11.35 & 0.01 & 11.35 & 0.01 & 11.50 & 0.00 & 11.39 & 0.00 \\
  & $4 \times 8$ & 11.72 & 0.02 & 22.32 & 15.58 & 12.44 & 0.07 & 12.24 & 0.03 \\
  & $3 \times 8$ & 12.94 & 0.01 & 283.97 & 113.15 & 14.94 & 0.08 & 14.32 & 0.03 \\
\midrule
\multirow{3}{*}{Qwen 7B}
  & $5 \times 8$ & 10.61 & 0.01 & 10.61 & 0.01 & 10.80 & 0.02 & 10.63 & 0.02 \\
  & $4 \times 8$ & 10.94 & 0.00 & 11.79 & 0.16 & 11.66 & 0.07 & 11.39 & 0.01 \\
  & $3 \times 8$ & 11.85 & 0.01 & 15.10 & 0.19 & 13.91 & 0.34 & 13.24 & 0.07 \\
\midrule
\multirow{3}{*}{Qwen 14B}
  & $5 \times 8$ & 9.10 & 0.00 & 9.10 & 0.00 & 9.22 & 0.01 & 9.26 & 0.23 \\
  & $4 \times 8$ & 9.33 & 0.01 & 9.77 & 0.04 & 9.78 & 0.01 & 9.71 & 0.02 \\
  & $3 \times 8$ & 10.01 & 0.02 & 12.02 & 0.20 & 11.44 & 0.47 & 10.82 & 0.26 \\
\midrule
\multirow{3}{*}{Qwen 32B}
  & $5 \times 8$ & 8.84 & 0.00 & 8.84 & 0.00 & 8.89 & 0.01 & 8.84 & 0.01 \\
  & $4 \times 8$ & 8.98 & 0.00 & 9.34 & 0.05 & 9.25 & 0.01 & 9.17 & 0.02 \\
  & $3 \times 8$ & 9.46 & 0.04 & 10.54 & 0.20 & 10.10 & 0.02 & 9.77 & 0.01 \\
\bottomrule
\end{tabular}
\end{center}
\caption{Perplexity on C4 ($\downarrow$)}
\end{table}

\begin{table}[h]
\small
\setlength{\tabcolsep}{3pt}
\begin{center}
\begin{tabular}{ll r@{\,\(\pm\)\,}l r@{\,\(\pm\)\,}l r@{\,\(\pm\)\,}l r@{\,\(\pm\)\,}l}
\toprule
\textbf{Model} & \textbf{Codebooks} & \multicolumn{2}{c}{\textbf{AQLM (Ind.)}} & \multicolumn{2}{c}{\textbf{AQLM (Drop)}} & \multicolumn{2}{c}{\textbf{DbyD (Uni.)}} & \multicolumn{2}{c}{\textbf{DbyD (35W)}} \\
\midrule
\multirow{3}{*}{Gemma 2B}
  & $5 \times 8$ & 42.15 & 1.09 & 42.15 & 1.09 & 42.61 & 1.09 & 42.57 & 1.09 \\
  & $4 \times 8$ & 42.23 & 1.10 & 42.11 & 1.10 & 41.51 & 1.10 & 42.73 & 1.10 \\
  & $3 \times 8$ & 41.56 & 1.10 & 40.79 & 1.09 & 41.69 & 1.10 & 41.80 & 1.09 \\
\midrule
\multirow{3}{*}{Mistral 7B}
  & $5 \times 8$ & 71.72 & 1.05 & 71.72 & 1.05 & 72.15 & 1.04 & 72.85 & 1.04 \\
  & $4 \times 8$ & 72.46 & 1.04 & 69.94 & 1.07 & 69.96 & 1.07 & 70.81 & 1.06 \\
  & $3 \times 8$ & 70.97 & 1.05 & 44.84 & 1.11 & 61.78 & 1.10 & 67.48 & 1.08 \\
\midrule
\multirow{3}{*}{MetaLlama 8B}
  & $5 \times 8$ & 72.42 & 1.04 & 72.42 & 1.04 & 71.62 & 1.04 & 72.09 & 1.05 \\
  & $4 \times 8$ & 71.00 & 1.05 & 65.54 & 1.08 & 68.52 & 1.07 & 69.60 & 1.06 \\
  & $3 \times 8$ & 70.00 & 1.06 & 38.08 & 1.09 & 55.80 & 1.11 & 64.81 & 1.09 \\
\midrule
\multirow{3}{*}{Qwen 0.5B}
  & $5 \times 8$ & 53.25 & 1.12 & 53.25 & 1.12 & 53.62 & 1.12 & 53.86 & 1.12 \\
  & $4 \times 8$ & 53.60 & 1.11 & 52.80 & 1.12 & 51.86 & 1.12 & 53.14 & 1.12 \\
  & $3 \times 8$ & 49.96 & 1.12 & 44.02 & 1.12 & 45.88 & 1.12 & 48.47 & 1.11 \\
\midrule
\multirow{3}{*}{Qwen 3B}
  & $5 \times 8$ & 67.83 & 1.07 & 67.83 & 1.07 & 68.18 & 1.07 & 68.17 & 1.07 \\
  & $4 \times 8$ & 66.68 & 1.08 & 62.11 & 1.10 & 65.85 & 1.09 & 67.28 & 1.08 \\
  & $3 \times 8$ & 65.48 & 1.09 & 39.64 & 1.10 & 62.53 & 1.10 & 63.06 & 1.10 \\
\midrule
\multirow{3}{*}{Qwen 7B}
  & $5 \times 8$ & 73.59 & 1.04 & 73.59 & 1.04 & 72.75 & 1.05 & 73.72 & 1.04 \\
  & $4 \times 8$ & 72.37 & 1.05 & 71.40 & 1.06 & 71.32 & 1.06 & 72.52 & 1.05 \\
  & $3 \times 8$ & 70.83 & 1.07 & 66.77 & 1.10 & 67.30 & 1.09 & 69.54 & 1.08 \\
\midrule
\multirow{3}{*}{Qwen 14B}
  & $5 \times 8$ & 76.85 & 1.01 & 76.85 & 1.01 & 76.03 & 1.02 & 76.76 & 1.01 \\
  & $4 \times 8$ & 75.94 & 1.02 & 74.73 & 1.03 & 75.80 & 1.02 & 75.99 & 1.01 \\
  & $3 \times 8$ & 74.89 & 1.03 & 70.00 & 1.07 & 73.92 & 1.04 & 73.93 & 1.04 \\
\midrule
\multirow{3}{*}{Qwen 32B}
  & $5 \times 8$ & 74.53 & 1.04 & 74.53 & 1.04 & 74.20 & 1.04 & 74.74 & 1.03 \\
  & $4 \times 8$ & 74.50 & 1.04 & 73.10 & 1.05 & 72.56 & 1.05 & 73.98 & 1.04 \\
  & $3 \times 8$ & 74.94 & 1.03 & 71.08 & 1.07 & 71.93 & 1.06 & 72.94 & 1.05 \\
\bottomrule
\end{tabular}
\end{center}
\caption{Average zero-shot accuracy across 5 Tasks ($\uparrow$). Values used in Figure~\ref{fig:accuracy_zeroshot}}
\label{tab:combined_avg_acc}
\end{table}

\clearpage

\subsection{Detailed Accuracy Performance Results: Qwen Family}
\label{app:qwen_results}
In this section, we provide the detailed zero-shot accuracy breakdowns for the Qwen model family across five tasks: ARC Challenge, ARC Easy, HellaSwag, PIQA, and Winogrande.

\begin{table}[h!]
\begin{center}
\small
\setlength{\tabcolsep}{3pt}
\begin{tabular}{ll r@{\,\(\pm\)\,}l r@{\,\(\pm\)\,}l r@{\,\(\pm\)\,}l r@{\,\(\pm\)\,}l}
\toprule
\textbf{Model} & \textbf{Codebooks} & \multicolumn{2}{c}{\textbf{AQLM (Ind.)}} & \multicolumn{2}{c}{\textbf{AQLM (Drop)}} & \multicolumn{2}{c}{\textbf{DbyD (Uni.)}} & \multicolumn{2}{c}{\textbf{DbyD (35W)}} \\
\midrule
\multirow{3}{*}{Qwen 0.5B}
& $5 \times 8$ & 32.34 & 1.37 & 32.34 & 1.37 & 32.85 & 1.37 & 32.88 & 1.37 \\
& $4 \times 8$ & 32.39 & 1.37 & 31.71 & 1.36 & 32.20 & 1.37 & 31.62 & 1.36 \\
& $3 \times 8$ & 28.81 & 1.33 & 27.16 & 1.30 & 27.11 & 1.30 & 28.64 & 1.32 \\
\midrule
\multirow{3}{*}{Qwen 3B}
& $5 \times 8$ & 46.36 & 1.46 & 46.36 & 1.46 & 47.30 & 1.46 & 46.70 & 1.46 \\
& $4 \times 8$ & 45.76 & 1.46 & 41.90 & 1.44 & 44.77 & 1.45 & 47.13 & 1.46 \\
& $3 \times 8$ & 44.63 & 1.45 & 23.26 & 1.24 & 41.41 & 1.44 & 40.90 & 1.44 \\
\midrule
\multirow{3}{*}{Qwen 7B}
& $5 \times 8$ & 56.46 & 1.45 & 56.46 & 1.45 & 55.69 & 1.45 & 56.31 & 1.45 \\
& $4 \times 8$ & 55.12 & 1.45 & 55.17 & 1.45 & 54.10 & 1.46 & 56.94 & 1.45 \\
& $3 \times 8$ & 54.44 & 1.45 & 51.14 & 1.46 & 49.57 & 1.46 & 53.47 & 1.46 \\
\midrule
\multirow{3}{*}{Qwen 14B}
& $5 \times 8$ & 61.92 & 1.42 & 61.92 & 1.42 & 60.61 & 1.43 & 62.17 & 1.42 \\
& $4 \times 8$ & 60.49 & 1.43 & 58.02 & 1.44 & 60.89 & 1.43 & 60.84 & 1.42 \\
& $3 \times 8$ & 59.16 & 1.44 & 51.20 & 1.46 & 58.16 & 1.44 & 58.82 & 1.44 \\
\midrule
\multirow{3}{*}{Qwen 32B}
& $5 \times 8$ & 58.05 & 1.44 & 58.05 & 1.44 & 57.08 & 1.45 & 57.93 & 1.44 \\
& $4 \times 8$ & 57.17 & 1.45 & 55.57 & 1.45 & 54.18 & 1.46 & 57.54 & 1.44 \\
& $3 \times 8$ & 58.16 & 1.44 & 52.67 & 1.46 & 54.44 & 1.45 & 56.26 & 1.45 \\
\bottomrule
\end{tabular}
\end{center}
\caption{Zero-shot accuracy on arc\_challenge (\%) ($\uparrow$)}
\label{tab:arc_challenge}
\end{table}

\begin{table}[h!]
\begin{center}
\small
\setlength{\tabcolsep}{3pt}
\begin{tabular}{ll r@{\,\(\pm\)\,}l r@{\,\(\pm\)\,}l r@{\,\(\pm\)\,}l r@{\,\(\pm\)\,}l}
\toprule
\textbf{Model} & \textbf{Codebooks} & \multicolumn{2}{c}{\textbf{AQLM (Ind.)}} & \multicolumn{2}{c}{\textbf{AQLM (Drop)}} & \multicolumn{2}{c}{\textbf{DbyD (Uni.)}} & \multicolumn{2}{c}{\textbf{DbyD (35W)}} \\
\midrule
\multirow{3}{*}{Qwen 0.5B}
& $5 \times 8$ & 57.00 & 1.02 & 57.00 & 1.02 & 60.35 & 1.00 & 59.46 & 1.01 \\
& $4 \times 8$ & 60.06 & 1.00 & 57.76 & 1.01 & 57.43 & 1.01 & 60.97 & 1.00 \\
& $3 \times 8$ & 53.65 & 1.02 & 41.79 & 1.02 & 48.81 & 1.03 & 53.75 & 1.02 \\
\midrule
\multirow{3}{*}{Qwen 3B}
& $5 \times 8$ & 72.50 & 0.92 & 72.50 & 0.92 & 73.82 & 0.90 & 73.91 & 0.90 \\
& $4 \times 8$ & 70.19 & 0.94 & 64.56 & 0.97 & 69.62 & 0.94 & 72.78 & 0.91 \\
& $3 \times 8$ & 70.90 & 0.93 & 38.06 & 0.99 & 67.16 & 0.96 & 67.33 & 0.96 \\
\midrule
\multirow{3}{*}{Qwen 7B}
& $5 \times 8$ & 81.23 & 0.80 & 81.23 & 0.80 & 79.04 & 0.84 & 81.40 & 0.80 \\
& $4 \times 8$ & 78.77 & 0.84 & 77.67 & 0.85 & 76.17 & 0.88 & 79.25 & 0.83 \\
& $3 \times 8$ & 75.76 & 0.88 & 71.66 & 0.92 & 69.76 & 0.94 & 74.71 & 0.89 \\
\midrule
\multirow{3}{*}{Qwen 14B}
& $5 \times 8$ & 81.18 & 0.80 & 81.18 & 0.80 & 78.97 & 0.84 & 80.50 & 0.81 \\
& $4 \times 8$ & 79.01 & 0.84 & 77.33 & 0.86 & 79.88 & 0.82 & 80.12 & 0.80 \\
& $3 \times 8$ & 77.21 & 0.86 & 69.95 & 0.94 & 77.61 & 0.85 & 77.40 & 0.85 \\
\midrule
\multirow{3}{*}{Qwen 32B}
& $5 \times 8$ & 75.59 & 0.88 & 75.59 & 0.88 & 75.17 & 0.89 & 76.19 & 0.87 \\
& $4 \times 8$ & 75.79 & 0.88 & 74.13 & 0.90 & 72.01 & 0.92 & 74.83 & 0.89 \\
& $3 \times 8$ & 77.03 & 0.86 & 70.45 & 0.94 & 73.36 & 0.90 & 74.45 & 0.89 \\
\bottomrule
\end{tabular}
\end{center}
\caption{Zero-shot accuracy on arc\_easy (\%) ($\uparrow$)}
\label{tab:arc_easy}
\end{table}

\begin{table}[h!]
\begin{center}
\small
\setlength{\tabcolsep}{3pt}
\begin{tabular}{ll r@{\,\(\pm\)\,}l r@{\,\(\pm\)\,}l r@{\,\(\pm\)\,}l r@{\,\(\pm\)\,}l}
\toprule
\textbf{Model} & \textbf{Codebooks} & \multicolumn{2}{c}{\textbf{AQLM (Ind.)}} & \multicolumn{2}{c}{\textbf{AQLM (Drop)}} & \multicolumn{2}{c}{\textbf{DbyD (Uni.)}} & \multicolumn{2}{c}{\textbf{DbyD (35W)}} \\
\midrule
\multirow{3}{*}{Qwen 0.5B}
& $5 \times 8$ & 51.57 & 0.50 & 51.57 & 0.50 & 50.42 & 0.50 & 51.20 & 0.50 \\
& $4 \times 8$ & 49.78 & 0.50 & 49.13 & 0.50 & 47.34 & 0.50 & 47.97 & 0.50 \\
& $3 \times 8$ & 45.65 & 0.50 & 38.59 & 0.49 & 39.21 & 0.49 & 41.95 & 0.49 \\
\midrule
\multirow{3}{*}{Qwen 3B}
& $5 \times 8$ & 73.09 & 0.44 & 73.09 & 0.44 & 72.56 & 0.45 & 72.85 & 0.44 \\
& $4 \times 8$ & 72.05 & 0.45 & 64.62 & 0.47 & 70.20 & 0.46 & 71.15 & 0.45 \\
& $3 \times 8$ & 68.27 & 0.46 & 29.61 & 0.45 & 65.84 & 0.47 & 67.19 & 0.47 \\
\midrule
\multirow{3}{*}{Qwen 7B}
& $5 \times 8$ & 80.00 & 0.40 & 80.00 & 0.40 & 79.62 & 0.40 & 80.00 & 0.40 \\
& $4 \times 8$ & 79.46 & 0.40 & 78.91 & 0.41 & 78.72 & 0.41 & 78.97 & 0.41 \\
& $3 \times 8$ & 77.30 & 0.42 & 73.63 & 0.44 & 75.35 & 0.43 & 76.17 & 0.43 \\
\midrule
\multirow{3}{*}{Qwen 14B}
& $5 \times 8$ & 84.22 & 0.36 & 84.22 & 0.36 & 83.80 & 0.37 & 83.93 & 0.37 \\
& $4 \times 8$ & 83.74 & 0.37 & 83.66 & 0.37 & 83.28 & 0.37 & 83.38 & 0.37 \\
& $3 \times 8$ & 81.95 & 0.38 & 80.13 & 0.40 & 81.92 & 0.38 & 81.31 & 0.39 \\
\midrule
\multirow{3}{*}{Qwen 32B}
& $5 \times 8$ & 85.11 & 0.36 & 85.11 & 0.36 & 84.82 & 0.36 & 84.92 & 0.36 \\
& $4 \times 8$ & 84.90 & 0.36 & 84.88 & 0.36 & 84.36 & 0.36 & 84.72 & 0.36 \\
& $3 \times 8$ & 83.79 & 0.37 & 83.50 & 0.37 & 82.40 & 0.38 & 83.09 & 0.37 \\
\bottomrule
\end{tabular}
\end{center}
\caption{Zero-shot accuracy on hellaswag (\%) ($\uparrow$)}
\label{tab:hellaswag}
\end{table}

\begin{table}[h!]
\begin{center}
\small
\setlength{\tabcolsep}{3pt}
\begin{tabular}{ll r@{\,\(\pm\)\,}l r@{\,\(\pm\)\,}l r@{\,\(\pm\)\,}l r@{\,\(\pm\)\,}l}
\toprule
\textbf{Model} & \textbf{Codebooks} & \multicolumn{2}{c}{\textbf{AQLM (Ind.)}} & \multicolumn{2}{c}{\textbf{AQLM (Drop)}} & \multicolumn{2}{c}{\textbf{DbyD (Uni.)}} & \multicolumn{2}{c}{\textbf{DbyD (35W)}} \\
\midrule
\multirow{3}{*}{Qwen 0.5B}
& $5 \times 8$ & 69.48 & 1.07 & 69.48 & 1.07 & 69.10 & 1.08 & 69.82 & 1.07 \\
& $4 \times 8$ & 69.47 & 1.07 & 68.92 & 1.08 & 67.18 & 1.09 & 67.95 & 1.09 \\
& $3 \times 8$ & 66.14 & 1.10 & 61.03 & 1.14 & 62.02 & 1.13 & 64.31 & 1.11 \\
\midrule
\multirow{3}{*}{Qwen 3B}
& $5 \times 8$ & 78.75 & 0.95 & 78.75 & 0.95 & 78.15 & 0.96 & 78.71 & 0.96 \\
& $4 \times 8$ & 77.60 & 0.97 & 74.61 & 1.01 & 76.93 & 0.98 & 77.64 & 0.97 \\
& $3 \times 8$ & 76.19 & 0.99 & 55.84 & 1.16 & 74.72 & 1.01 & 75.10 & 1.01 \\
\midrule
\multirow{3}{*}{Qwen 7B}
& $5 \times 8$ & 80.10 & 0.93 & 80.10 & 0.93 & 79.67 & 0.94 & 80.16 & 0.93 \\
& $4 \times 8$ & 79.22 & 0.95 & 77.64 & 0.97 & 78.18 & 0.96 & 79.49 & 0.94 \\
& $3 \times 8$ & 77.79 & 0.97 & 74.61 & 1.01 & 76.30 & 0.99 & 76.66 & 0.99 \\
\midrule
\multirow{3}{*}{Qwen 14B}
& $5 \times 8$ & 81.50 & 0.90 & 81.50 & 0.90 & 81.47 & 0.91 & 81.61 & 0.90 \\
& $4 \times 8$ & 81.25 & 0.91 & 80.87 & 0.92 & 81.03 & 0.91 & 80.96 & 0.92 \\
& $3 \times 8$ & 80.52 & 0.93 & 80.05 & 0.93 & 79.54 & 0.94 & 80.16 & 0.93 \\
\midrule
\multirow{3}{*}{Qwen 32B}
& $5 \times 8$ & 81.26 & 0.91 & 81.26 & 0.91 & 81.10 & 0.91 & 81.28 & 0.91 \\
& $4 \times 8$ & 80.90 & 0.92 & 80.18 & 0.93 & 79.60 & 0.94 & 80.07 & 0.93 \\
& $3 \times 8$ & 81.17 & 0.91 & 79.47 & 0.94 & 78.75 & 0.95 & 78.98 & 0.95 \\
\bottomrule
\end{tabular}
\end{center}
\caption{Zero-shot accuracy on piqa (\%) ($\uparrow$)}
\label{tab:piqa}
\end{table}

\begin{table}[h!]
\begin{center}
\small
\setlength{\tabcolsep}{3pt}
\begin{tabular}{ll r@{\,\(\pm\)\,}l r@{\,\(\pm\)\,}l r@{\,\(\pm\)\,}l r@{\,\(\pm\)\,}l}
\toprule
\textbf{Model} & \textbf{Codebooks} & \multicolumn{2}{c}{\textbf{AQLM (Ind.)}} & \multicolumn{2}{c}{\textbf{AQLM (Drop)}} & \multicolumn{2}{c}{\textbf{DbyD (Uni.)}} & \multicolumn{2}{c}{\textbf{DbyD (35W)}} \\
\midrule
\multirow{3}{*}{Qwen 0.5B}
& $5 \times 8$ & 55.85 & 1.39 & 55.85 & 1.39 & 55.38 & 1.40 & 55.96 & 1.39 \\
& $4 \times 8$ & 56.30 & 1.39 & 56.48 & 1.40 & 55.17 & 1.40 & 57.17 & 1.39 \\
& $3 \times 8$ & 55.57 & 1.40 & 51.54 & 1.40 & 52.25 & 1.40 & 53.70 & 1.40 \\
\midrule
\multirow{3}{*}{Qwen 3B}
& $5 \times 8$ & 68.46 & 1.30 & 68.46 & 1.30 & 69.06 & 1.30 & 68.69 & 1.30 \\
& $4 \times 8$ & 67.80 & 1.31 & 64.88 & 1.34 & 67.72 & 1.32 & 67.69 & 1.32 \\
& $3 \times 8$ & 67.40 & 1.32 & 51.43 & 1.40 & 63.51 & 1.35 & 64.77 & 1.34 \\
\midrule
\multirow{3}{*}{Qwen 7B}
& $5 \times 8$ & 70.14 & 1.28 & 70.14 & 1.28 & 69.75 & 1.29 & 70.72 & 1.28 \\
& $4 \times 8$ & 69.27 & 1.30 & 67.61 & 1.31 & 69.43 & 1.29 & 67.93 & 1.31 \\
& $3 \times 8$ & 68.85 & 1.30 & 62.83 & 1.36 & 65.54 & 1.34 & 66.69 & 1.32 \\
\midrule
\multirow{3}{*}{Qwen 14B}
& $5 \times 8$ & 75.45 & 1.21 & 75.45 & 1.21 & 75.30 & 1.21 & 75.59 & 1.21 \\
& $4 \times 8$ & 75.19 & 1.21 & 73.77 & 1.23 & 73.93 & 1.23 & 74.64 & 1.22 \\
& $3 \times 8$ & 75.61 & 1.21 & 68.69 & 1.30 & 72.37 & 1.25 & 71.98 & 1.26 \\
\midrule
\multirow{3}{*}{Qwen 32B}
& $5 \times 8$ & 72.64 & 1.25 & 72.64 & 1.25 & 72.85 & 1.25 & 73.40 & 1.24 \\
& $4 \times 8$ & 73.74 & 1.24 & 70.72 & 1.28 & 72.66 & 1.25 & 72.72 & 1.25 \\
& $3 \times 8$ & 74.53 & 1.22 & 69.32 & 1.29 & 70.69 & 1.28 & 71.90 & 1.26 \\
\bottomrule
\end{tabular}
\end{center}
\caption{Zero-shot accuracy on winogrande (\%) ($\uparrow$)}
\label{tab:winogrande}
\end{table}

\clearpage
\subsection{Detailed Accuracy Performance Results: Gemma, Mistral, and Llama}
\label{app:other_models}
In this section, we provide the detailed zero-shot accuracy breakdowns for the select Gemma, Mistral, and Llama models across five tasks: ARC Challenge, ARC Easy, HellaSwag, PIQA, and Winogrande.

\begin{table}[h]
\begin{center}
\setlength{\tabcolsep}{3pt}
\small
\begin{tabular}{ll r@{\,\(\pm\)\,}l r@{\,\(\pm\)\,}l r@{\,\(\pm\)\,}l r@{\,\(\pm\)\,}l}
\toprule
\textbf{Model} & \textbf{Codebooks} & \multicolumn{2}{c}{\textbf{AQLM (Ind.)}} & \multicolumn{2}{c}{\textbf{AQLM (Drop)}} & \multicolumn{2}{c}{\textbf{DbyD (Uni.)}} & \multicolumn{2}{c}{\textbf{DbyD (35W)}} \\
\midrule
\multirow{3}{*}{Gemma 2B}
  & $5 \times 8$ & 21.84 & 1.21 & 21.84 & 1.21 & 21.90 & 1.21 & 21.76 & 1.21 \\
  & $4 \times 8$ & 22.50 & 1.22 & 22.69 & 1.22 & 23.21 & 1.23 & 23.83 & 1.25 \\
  & $3 \times 8$ & 22.64 & 1.22 & 22.44 & 1.22 & 23.61 & 1.24 & 22.64 & 1.22 \\
\midrule
\multirow{3}{*}{Mistral 7B}
  & $5 \times 8$ & 50.28 & 1.46 & 50.28 & 1.46 & 50.45 & 1.46 & 52.02 & 1.46 \\
  & $4 \times 8$ & 50.97 & 1.46 & 48.30 & 1.46 & 48.69 & 1.46 & 49.12 & 1.46 \\
  & $3 \times 8$ & 49.09 & 1.46 & 26.85 & 1.29 & 40.93 & 1.43 & 44.31 & 1.45 \\
\midrule
\multirow{3}{*}{MetaLlama 8B}
  & $5 \times 8$ & 52.19 & 1.46 & 52.19 & 1.46 & 50.08 & 1.46 & 51.82 & 1.46 \\
  & $4 \times 8$ & 49.72 & 1.46 & 43.83 & 1.45 & 46.73 & 1.46 & 48.09 & 1.46 \\
  & $3 \times 8$ & 49.20 & 1.46 & 23.15 & 1.23 & 33.47 & 1.38 & 41.67 & 1.44 \\
\bottomrule
\end{tabular}
\end{center}
\caption{Zero-shot accuracy on arc\_challenge (\%) ($\uparrow$)}
\label{tab:results_arc_main}
\end{table}

\begin{table}[h]
\begin{center}
\setlength{\tabcolsep}{3pt}
\small
\begin{tabular}{ll r@{\,\(\pm\)\,}l r@{\,\(\pm\)\,}l r@{\,\(\pm\)\,}l r@{\,\(\pm\)\,}l}
\toprule
\textbf{Model} & \textbf{Codebooks} & \multicolumn{2}{c}{\textbf{AQLM (Ind.)}} & \multicolumn{2}{c}{\textbf{AQLM (Drop)}} & \multicolumn{2}{c}{\textbf{DbyD (Uni.)}} & \multicolumn{2}{c}{\textbf{DbyD (35W)}} \\
\midrule
\multirow{3}{*}{Gemma 2B}
  & $5 \times 8$ & 36.71 & 0.99 & 36.71 & 0.99 & 36.91 & 0.99 & 36.88 & 0.99 \\
  & $4 \times 8$ & 37.06 & 0.99 & 36.95 & 0.99 & 34.78 & 0.98 & 36.98 & 0.99 \\
  & $3 \times 8$ & 36.63 & 0.99 & 35.02 & 0.98 & 34.75 & 0.98 & 35.02 & 0.95 \\
\midrule
\multirow{3}{*}{Mistral 7B}
  & $5 \times 8$ & 76.49 & 0.87 & 76.49 & 0.87 & 76.57 & 0.87 & 77.51 & 0.86 \\
  & $4 \times 8$ & 78.70 & 0.84 & 74.82 & 0.89 & 75.14 & 0.89 & 75.17 & 0.89 \\
  & $3 \times 8$ & 76.99 & 0.86 & 43.04 & 1.00 & 65.02 & 0.98 & 72.47 & 0.92 \\
\midrule
\multirow{3}{*}{MetaLlama 8B}
  & $5 \times 8$ & 76.50 & 0.87 & 76.50 & 0.87 & 75.62 & 0.88 & 75.98 & 0.88 \\
  & $4 \times 8$ & 74.19 & 0.90 & 68.80 & 0.95 & 70.87 & 0.93 & 72.22 & 0.92 \\
  & $3 \times 8$ & 73.76 & 0.90 & 29.57 & 0.94 & 51.60 & 1.01 & 65.77 & 0.97 \\
\bottomrule
\end{tabular}
\end{center}
\caption{Zero-shot accuracy on arc\_easy (\%) ($\uparrow$)}
\label{tab:results_easy_main}
\end{table}

\begin{table}[t]
\begin{center}
\setlength{\tabcolsep}{3pt}
\small
\begin{tabular}{ll r@{\,\(\pm\)\,}l r@{\,\(\pm\)\,}l r@{\,\(\pm\)\,}l r@{\,\(\pm\)\,}l}
\toprule
\textbf{Model} & \textbf{Codebooks} & \multicolumn{2}{c}{\textbf{AQLM (Ind.)}} & \multicolumn{2}{c}{\textbf{AQLM (Drop)}} & \multicolumn{2}{c}{\textbf{DbyD (Uni.)}} & \multicolumn{2}{c}{\textbf{DbyD (35W)}} \\
\midrule
\multirow{3}{*}{Gemma 2B}
  & $5 \times 8$ & 42.01 & 0.49 & 42.01 & 0.49 & 42.23 & 0.49 & 42.58 & 0.49 \\
  & $4 \times 8$ & 40.81 & 0.49 & 41.83 & 0.49 & 39.34 & 0.49 & 41.89 & 0.49 \\
  & $3 \times 8$ & 38.36 & 0.49 & 37.57 & 0.48 & 39.40 & 0.49 & 39.98 & 0.49 \\
\midrule
\multirow{3}{*}{Mistral 7B}
  & $5 \times 8$ & 79.87 & 0.40 & 79.87 & 0.40 & 79.89 & 0.40 & 80.28 & 0.40 \\
  & $4 \times 8$ & 79.95 & 0.40 & 77.65 & 0.42 & 78.60 & 0.41 & 79.14 & 0.41 \\
  & $3 \times 8$ & 77.92 & 0.41 & 39.56 & 0.48 & 66.14 & 0.47 & 75.81 & 0.43 \\
\midrule
\multirow{3}{*}{MetaLlama 8B}
  & $5 \times 8$ & 78.86 & 0.41 & 78.86 & 0.41 & 78.19 & 0.41 & 78.89 & 0.41 \\
  & $4 \times 8$ & 77.83 & 0.41 & 69.66 & 0.45 & 76.15 & 0.43 & 76.35 & 0.42 \\
  & $3 \times 8$ & 75.91 & 0.43 & 31.47 & 0.46 & 63.93 & 0.48 & 72.82 & 0.44 \\
\bottomrule
\end{tabular}
\end{center}
\caption{Zero-shot accuracy on hellaswag (\%) ($\uparrow$)}
\label{tab:results_hellaswag_main}
\end{table}

\begin{table}[t]
\begin{center}
\setlength{\tabcolsep}{3pt}
\small
\begin{tabular}{ll r@{\,\(\pm\)\,}l r@{\,\(\pm\)\,}l r@{\,\(\pm\)\,}l r@{\,\(\pm\)\,}l}
\toprule
\textbf{Model} & \textbf{Codebooks} & \multicolumn{2}{c}{\textbf{AQLM (Ind.)}} & \multicolumn{2}{c}{\textbf{AQLM (Drop)}} & \multicolumn{2}{c}{\textbf{DbyD (Uni.)}} & \multicolumn{2}{c}{\textbf{DbyD (35W)}} \\
\midrule
\multirow{3}{*}{Gemma 2B}
  & $5 \times 8$ & 59.74 & 1.14 & 59.74 & 1.14 & 60.24 & 1.14 & 60.01 & 1.14 \\
  & $4 \times 8$ & 59.92 & 1.14 & 59.48 & 1.14 & 59.63 & 1.14 & 60.70 & 1.14 \\
  & $3 \times 8$ & 59.76 & 1.14 & 59.05 & 1.15 & 59.61 & 1.15 & 58.81 & 1.15 \\
\midrule
\multirow{3}{*}{Mistral 7B}
  & $5 \times 8$ & 80.76 & 0.92 & 80.76 & 0.92 & 81.59 & 0.90 & 81.35 & 0.91 \\
  & $4 \times 8$ & 81.07 & 0.91 & 79.61 & 0.94 & 79.87 & 0.93 & 80.29 & 0.93 \\
  & $3 \times 8$ & 80.38 & 0.92 & 62.80 & 1.12 & 75.28 & 1.01 & 77.84 & 0.97 \\
\midrule
\multirow{3}{*}{MetaLlama 8B}
  & $5 \times 8$ & 80.36 & 0.93 & 80.36 & 0.93 & 80.05 & 0.93 & 80.05 & 0.93 \\
  & $4 \times 8$ & 79.40 & 0.95 & 75.24 & 1.01 & 78.45 & 0.96 & 78.91 & 0.95 \\
  & $3 \times 8$ & 78.46 & 0.96 & 53.77 & 1.16 & 67.39 & 1.09 & 74.66 & 1.02 \\
\bottomrule
\end{tabular}
\end{center}
\caption{Zero-shot accuracy on piqa (\%) ($\uparrow$)}
\label{tab:results_piqa_main}
\end{table}

\begin{table}[t]
\begin{center}
\setlength{\tabcolsep}{3pt}
\small
\begin{tabular}{ll r@{\,\(\pm\)\,}l r@{\,\(\pm\)\,}l r@{\,\(\pm\)\,}l r@{\,\(\pm\)\,}l}
\toprule
\textbf{Model} & \textbf{Codebooks} & \multicolumn{2}{c}{\textbf{AQLM (Ind.)}} & \multicolumn{2}{c}{\textbf{AQLM (Drop)}} & \multicolumn{2}{c}{\textbf{DbyD (Uni.)}} & \multicolumn{2}{c}{\textbf{DbyD (35W)}} \\
\midrule
\multirow{3}{*}{Gemma 2B}
  & $5 \times 8$ & 50.43 & 1.40 & 50.43 & 1.40 & 51.78 & 1.40 & 51.62 & 1.40 \\
  & $4 \times 8$ & 50.88 & 1.41 & 49.59 & 1.41 & 50.57 & 1.41 & 50.23 & 1.41 \\
  & $3 \times 8$ & 50.43 & 1.41 & 49.88 & 1.41 & 51.09 & 1.40 & 52.57 & 1.40 \\
\midrule
\multirow{3}{*}{Mistral 7B}
  & $5 \times 8$ & 71.19 & 1.27 & 71.19 & 1.27 & 72.27 & 1.26 & 73.11 & 1.25 \\
  & $4 \times 8$ & 71.61 & 1.27 & 69.30 & 1.30 & 67.51 & 1.32 & 70.35 & 1.29 \\
  & $3 \times 8$ & 70.45 & 1.28 & 51.96 & 1.40 & 61.54 & 1.36 & 66.95 & 1.32 \\
\midrule
\multirow{3}{*}{MetaLlama 8B}
  & $5 \times 8$ & 74.19 & 1.23 & 74.19 & 1.23 & 74.17 & 1.23 & 73.72 & 1.24 \\
  & $4 \times 8$ & 73.85 & 1.24 & 70.16 & 1.28 & 70.40 & 1.28 & 72.43 & 1.25 \\
  & $3 \times 8$ & 72.67 & 1.25 & 52.46 & 1.40 & 62.61 & 1.36 & 69.14 & 1.30 \\
\bottomrule
\end{tabular}
\end{center}
\caption{Zero-shot accuracy on winogrande (\%) ($\uparrow$)}
\label{tab:results_winogrande_main}
\end{table}

\end{document}

%% file: math_commands.tex
\usepackage{amsmath,amsfonts,bm}

\def\eqref#1{equation~\ref{#1}}

\def\1{\bm{1}}

\DeclareMathAlphabet{\mathsfit}{\encodingdefault}{\sfdefault}{m}{sl}
\SetMathAlphabet{\mathsfit}{bold}{\encodingdefault}{\sfdefault}{bx}{n}

